\documentclass{article}

\usepackage{microtype}
\usepackage{graphicx}
\usepackage{subcaption}
\usepackage{booktabs} 
\usepackage{enumitem}
\usepackage{amsfonts}
\usepackage{mathrsfs}

\usepackage{hyperref}

\usepackage[accepted]{icml2026}
\usepackage{tcolorbox}

\usepackage{amsmath}
\usepackage{amssymb}
\usepackage{mathtools}
\usepackage{amsthm}

\usepackage[capitalize,noabbrev]{cleveref}

\theoremstyle{plain}
\newtheorem{theorem}{Theorem}[section]

\newtheorem{lemma}[theorem]{Lemma}

\theoremstyle{definition}
\newtheorem{definition}[theorem]{Definition}

\newtheorem{example}[theorem]{Example}
\theoremstyle{remark}

\usepackage[textsize=tiny]{todonotes}

\usepackage{tikz}
\usetikzlibrary{trees, arrows.meta, positioning, shapes.geometric}
\usepackage{tikz}
\usetikzlibrary{positioning, arrows.meta, shapes.geometric, automata, matrix}
\usetikzlibrary{calc, intersections, positioning, arrows.meta, backgrounds, decorations.pathreplacing}
\usepackage{tikz}
\usepackage{pgfplots}

\usepackage{bm}
\usepackage{cancel}

\newcommand{\C}{\mathcal{C}}

\newcommand{\W}{\mathcal{W}}
\newcommand{\R}{\mathcal{R}}

\newcommand{\ppf}{\mathrm{PPF}}
\newcommand{\V}{{\mathcal V}}

\newcommand{\T}{\mathcal{T}}

\newcommand{\hA}{\hat{\mathcal{A}}}

\DeclareMathOperator*{\E}{\mathbb{E}}

\icmltitlerunning{Deterministic Pareto-Optimal Policy Synthesis for MORL}

\begin{document}

\twocolumn[
  \icmltitle{Deterministic Pareto-Optimal Policy Synthesis for\\ Multi-Objective Reinforcement Learning}



  \icmlsetsymbol{equal}{*}

  \begin{icmlauthorlist}
    \icmlauthor{Aniruddha Joshi}{berk}
    \icmlauthor{Niklas Lauffer}{berk}
    \icmlauthor{Sanjit Seshia}{berk}
  \end{icmlauthorlist}

  \icmlaffiliation{berk}{University of California Berkeley, USA}

  \icmlcorrespondingauthor{Aniruddha Joshi}{aniruddhajoshi@eecs.berkeley.edu}

  \icmlkeywords{Machine Learning, Reinforcement Learning, Multi-objective Reinforcement Learning, Optimization, Artificial Intelligence}

  \vskip 0.3in
]



\printAffiliationsAndNotice{}  


\begin{abstract}

Real-world decision-making often requires balancing multiple conflicting objectives, a challenge that standard Reinforcement Learning (RL) frequently addresses by aggregating rewards into a single scalar signal. 
While effective for simple tasks, this approach often fails to capture the full spectrum of optimal trade-offs, known as the Pareto frontier. 
In this paper, we introduce a novel preference-conditioned Bellman operator, motivated from the Chebyshev scalarization, designed to compute deterministic Pareto-optimal policies for Multi-Objective Markov Decision Processes (MOMDPs).
We prove that this operator satisfies an enveloping property, where the estimated value functions upper-bound the true Pareto frontier, and demonstrate that it monotonically converges to a coverage set of this frontier.
Furthermore, we also show how to extract deterministic policies from these converged Q-estimates. 
This ensures the agent can recover a policy for any given preference, capturing the entire Pareto-optimal frontier while guaranteeing each synthesized policy remains approximately Pareto-optimal.
Experimental results validate that our algorithm successfully recovers complex trade-offs, providing a solution for deterministic Pareto-optimal policy synthesis.


\end{abstract}

\section{Introduction}

Many real-world decision-making problems are inherently multi-objective: practitioners must balance competing criteria rather than optimize a single scalar notion of performance, and the relevant trade-offs vary across users and operating regimes.

\begin{figure}
\centering
    \scalebox{0.89}{



\begin{tikzpicture}[
    scale=2, 
    >=Stealth, 
    axis/.style={->, thick, line cap=rect},
    dashed_line/.style={dashed, thin, gray},
    highlight/.style={fill=red!5, draw=red, thick, densely dotted}
]

    \coordinate (Origin) at (0,0);
    \coordinate (PointZ) at (2.5, 1.5); 
    \coordinate (PrefDir) at (1, 0.4);   
    
    \begin{scope}[on background layer]
        \draw[highlight] (Origin) rectangle (PointZ);
    \end{scope}

    \draw[axis] (-0.2,0) -- (3.5,0) node[pos=0.65, below left] {Objective 1};
    \draw[axis] (0,-0.2) -- (0,2.5) node[pos=0.9, above, rotate=90] {Objective 2};

    \filldraw[red] (PointZ) circle (1.5pt) node[anchor=south west, font=\bfseries] {$z=(z_1,z_2)$};
    \draw[dashed_line] (PointZ) -- (PointZ |- Origin) node[below, text=black] {$z_1$};
    \draw[dashed_line] (PointZ) -- (Origin |- PointZ) node[left, text=black] {$z_2$};


    \path[name path=ray] (Origin) -- ($(Origin)!3!(PrefDir)$) node[pos=0.98, below, sloped, text=blue] {Ray};;
    
    \path[name path=box_right] (PointZ |- Origin) -- (PointZ);
    \path[name path=box_top]   (Origin |- PointZ) -- (PointZ);
    
    \path[name path=unit_circle] (1,0) arc (0:90:1);

    \path [name intersections={of=ray and box_right, by=intRight}];
    \path [name intersections={of=ray and box_top, by=intTop}];
    \coordinate (ChebyshevLimit) at (intRight); 

    \path [name intersections={of=ray and unit_circle, by=UnitVectorTip}];


    \draw[densely dotted, thick, black!60] (1,0) arc (0:90:1) 
        node[pos=0.8, above right, font=\scriptsize, align=center] {Unit\\Quadrant};

    \draw[dashed, blue!40, thin] (ChebyshevLimit) -- ($(Origin)!3.2!(PrefDir)$);

    \draw[->, blue, line width=1.5pt] (Origin) -- (ChebyshevLimit);
    \fill[blue] (ChebyshevLimit) circle (1pt);

    \draw[draw=none] (Origin) -- (ChebyshevLimit) node[pos=0.7, sloped, color=blue, below, font=\small] {$O_w(z){w}$};;

    \draw[->, black, line width=1.2pt] (Origin) -- (UnitVectorTip) 
        node[pos = 0.6, sloped, below, yshift=0.1em, font=\small] {$w$};

    \draw[decorate, decoration={brace, amplitude=5pt, raise=3pt}, thick, blue] 
        (Origin) -- (ChebyshevLimit) 
        node[midway, above=10pt, sloped, align=center, font=\small, fill=white, inner sep=1pt] 
        {Chebyshev Length: $O_w(z)$};

\end{tikzpicture}

    }
    \caption{Geometric interpretation of the weighted Chebyshev operator. The preference vector $w$ (black) defines a direction in the objective space. The operator calculates the maximum scaling of $w$ that is weakly dominated by the value vector $z=(z_1,z_2)$.
    Geometric interpretation of the weighted Chebyshev operator. The unit preference vector $w$ (black) indicates the direction. The operator determines the maximum factor by which this vector can be scaled until it reaches the boundary of the region dominated by the reference point $z$. The magnitude of this scaled vector represents the Chebyshev value.}
    \label{fig:left_plot}
\end{figure}

This is evident across a wide range of domains, including circuit design (power, performance, area, reliability) \cite{5396122, VISAN2022109987,10.1145/3649329.3657318, 11092100}; distributed computing (energy, latency, throughput) \cite{da2019multi,qin2020energy}; drug and molecule design (efficacy, toxicity, synthesizability) \cite{zhou2019optimization}; recommender systems (accuracy, diversity, fairness) \cite{lacerda2017multi}; robotics (speed, safety, energy) \cite{huang2019learning}; satellite communications (throughput, delay, power) \cite{hu2020dynamic}; control (tracking, robustness, effort) \cite{10.1007/978-3-031-83191-1_17}; and mechanical design (weight, strength, manufacturability) \cite{10.1115/1.4069046}. 
Reinforcement learning (RL) is increasingly used as a general optimization, control, and exploration mechanism in such settings, but standard RL typically assumes a single reward signal \cite{sutton2018reinforcement}. When multiple objectives are present, practitioners commonly apply scalarisation techniques (e.g., hand-tuned weighted sums) to collapse them into a single objective; while convenient, this can eliminate important trade-offs and may yield policies that perform well on average but fail to satisfy critical preferences in specific scenarios. These limitations motivate a truly multi-objective approach: multi-objective reinforcement learning (MORL), which seeks to learn the full set of Pareto-optimal solutions.

In this work, we introduce a novel Bellman operator parameterized by preference weights. This formulation allows us to obtain a deterministic Pareto-optimal policy corresponding to any desired trade-off. 
We provide a proof of the operator's convergence, while guaranteeing that the extracted policies approximately cover the entire Pareto frontier and are individually approximately Pareto-optimal.
Further we empirically demonstrate that, our procedure can retrieve the complete Pareto frontier.


The \textbf{main contributions} of this work are as follows:
\begin{enumerate}[topsep=2pt,itemsep=2pt,parsep=0pt]
    \item We introduce a model-based Bellman operator parameterized by preferences, leveraging Chebyshev scalarization. We derive its error bounds and prove that it converges asymptotically to the Pareto-optimal values for deterministic, non-stationary policies in MOMDPs.
    \item From these converged Q-estimates, we demonstrate how to extract deterministic policies requiring only a single-step transition memory. We prove that this approach yields a policy set that covers the Pareto frontier while guaranteeing that each recovered policy is approximately non-dominated within the space of all non-stationary deterministic policies.
    \item We provide empirical evidence demonstrating that our algorithm successfully converges to the Pareto frontier, capturing all trade-offs and recovers a set of deterministic Pareto-optimal policies for all of them.
\end{enumerate}

The remainder of this paper is organized as follows. Section~\ref{rel_work} reviews related literature. 
Section~\ref{preliminaries} introduces the Multi-Objective Markov Decision Process framework and defines the problem of synthesizing a Pareto coverage set. 
Section~\ref{sec:chebyshev_scalarization} introduces Chebyshev scalarization as our core optimization mechanism, provides geometric intuition for this operator, and proves that a norm-maximization tie-breaking rule yields a necessary and sufficient condition for exactly characterizing the Pareto frontier. 
Section~\ref{sec:bellman_update} details our novel, preference-aware Bellman update, while Section~\ref{sec:policy} outlines the resulting deterministic policy synthesis algorithm. Section~\ref{sec:experiments} validates the convergence and coverage capabilities of our approach through empirical experiments.
Finally, Section~\ref{sec:conclusion} concludes the paper and outlines directions for future work.

\section{Related Work}\label{rel_work}

We contextualize our contributions within three primary paradigms of Multi-Objective Reinforcement Learning: \textit{linear scalarization}, \textit{preference-based learning}, and \textit{explicit set maintenance}. While comprehensive surveys provide a broader view~\cite{10.5555/2591248.2591251, survey1, ZHANG2023526}, we focus on these specific categories to highlight a fundamental challenge: the difficulty of efficiently obtaining deterministic policies that cover the full Pareto frontier while maintaining rigorous convergence guarantees.

\textbf{Linear Scalarization.} A common approach to MORL involves reducing the vector-valued reward to a scalar via a weighted sum. This technique has been applied across diverse domains, including manufacturing~\cite{inproceedings_aiss}, hydroelectric control~\cite{Shabani_2009}, energy management~\cite{10.4018/jats.2009040104}, and grid computing~\cite{10.1145/1555301.1555311}, alongside the development of specialized algorithms for this setting~\cite{10.1145/1390156.1390162, 10.5555/3454287.3455598, inproceedings_Castelletti}. However, linear scalarization is fundamentally restricted to recovering solutions on the convex coverage set of the Pareto frontier, failing to identify optimal trade-offs in non-convex regions common in complex real-world tasks~\cite{10.5555/2591248.2591251, 6889637, 10.1007/978-3-030-22649-7_25}.

\textbf{Preference-Based Approaches.} These methods generalize linear scalarization by parameterizing the value function with a vector of weights. The weighted Chebyshev scalarization, originally characterized by Bowman~\cite{10.1007/978-3-642-87563-2_5}, has demonstrated empirical success in multi-objective evolutionary algorithms~\cite{4631208}. In the MORL context, \cite{inproceedings_van} utilize this metric to learn multiple policies, performing weight sweeps (fixing specific weights) to approximate the Pareto curve. Similarly, Reymond et al.~\cite{Reymond2019ParetoDQNAT} extend these concepts using neural network approximations of Pareto-optimal Q-values.

Our work builds upon this foundation by defining a preference space and estimating Q-values parameterized by these preferences. 
However, unlike methods that fix scalarization weights to learn separate policies, we integrate the preference vector directly into the learning process. 
This Bellman operator allows the policy to dynamically optimize against varying preferences recursively, resulting in a single succinct representation parameterized by the weights.
Furthermore, we provide a proof of asymptotic convergence to deterministic Pareto-optimal policies.

Modern approaches~\cite{Yamamoto2019HypervolumeBasedMR,van2014multi,ruiz2017temporal,mandow2018pruning} focus on maintaining explicit sets of Pareto-optimal estimates. However, extracting a policy to achieve a specific Pareto-optimal value from these sets presents a fundamental challenge: \textit{alignment consistency}. In these settings, the value of a state is represented by a set of vectors, each corresponding to a distinct optimal trade-off. To execute a policy, an agent must effectively select a target trade-off vector from the current state's set. Crucially, this choice imposes a strict constraint on the subsequent step: to realize the chosen trade-off, the agent must identify and select the specific corresponding vector from the set of estimates at the next state that contributed to the current value. In contrast to scalar RL, where a greedy maximization over a single value suffices, the MORL agent must maintain this specific alignment across the trajectory. If the agent fails to align the next-state selection with the previous state's trade-off target, the global policy becomes suboptimal. As highlighted in \cite{van2014multi} and \cite{article_Ruiz}, arbitrarily selecting any trade-off without ensuring this alignment breaks the optimality chain. Consequently, these methods rely on heuristic action selection mechanisms that lack formal guarantees, and thus the executed policy is not guaranteed to realize the target Pareto-optimal value.

Our approach addresses these limitations simultaneously. First, by parameterizing the value function with preferences, we eliminate the need for an explicit set representation, making our method amenable to standard function approximation. Second, we establish asymptotic convergence guarantees for the estimates similar to those of White~\cite{white1982multi}; however, we obtain policies without the computationally intractable requirement of storing action histories. 
Instead, we resolve the alignment consistency problem efficiently: our procedure dynamically determines the preference for the subsequent state based on the selected action and current preference.
This mechanism allows us to recover non-stationary, deterministic Pareto-optimal policies by simply changing the local preference parameter over the rollout, and we further prove convergence of our policy to the frontier.

\textbf{Explicit Set-Based Approaches.} Works by White~\cite{white1982multi} established that generating the complete Pareto frontier using deterministic policies requires non-stationary behavior, as stationary policies may be dominated in multi-objective settings. White proposed a Bellman update operating on sets of estimates and established value convergence (a proof later revised by Mifrani~\cite{DBLP:journals/anor/Mifrani25}). However, retrieving the corresponding policy requires storing the entire history of actions across all updates ($n$ steps) and may scale exponentially. As noted by \cite{4220828}, this memory requirement is computationally infeasible. To circumvent this, algorithms like CON-MODP~\cite{4220828} restrict the search space to stationary policies, thereby sacrificing the ability to recover the Pareto frontier.

\section{Preliminaries and Problem Formulation}\label{preliminaries}

In this section, we formalize the problem of synthesizing a complete and parsimonious set of Pareto-optimal policies. We first introduce the necessary preliminaries and definitions.

\begin{definition}[Multi-Objective Markov Decision Process (MOMDP)]\label{momdp}
    A Multi-Objective Markov Decision Process (MOMDP) is a tuple $\mathcal{M} := (S, A, T, \gamma, s_0, R)$, where $S$ is a set of states, $A$ is a set of actions, $T: S \times A \times S \rightarrow [0,1]$ is a probabilistic transition function, $\gamma \in [0,1)$ is a discount factor, $s_0\in S$ is the initial state, and $R: S \times A \rightarrow \mathbb{R}^d_{\geq 0}$ is a $\bm d$-\textbf{dimensional vector-valued reward}.
\end{definition}

\begin{example} 
    \begin{figure}
        \centering
        \scalebox{0.55}{
        \begin{tikzpicture}[
    node distance=2.5cm and 3cm,
    state/.style={circle, draw, minimum size=1.2cm, thick, fill=blue!5},
    initial text={start},
    every initial by arrow/.style={thick, -{Stealth[scale=1.2]}},
    action/.style={circle, fill=black, inner sep=1.5pt},
    reward/.style={font=\small, color=blue!70!black},
    prob/.style={font=\footnotesize, pos=0.7},
    edge/.style={thick, -{Stealth[scale=1.2]}},
    stoch_edge/.style={thick, dashed, -{Stealth[scale=1.2]}}
]

    \node[state, initial] (S0) {$s_0$};
    
    \node[state, above right=1.5cm and 3.5cm of S0] (S1) {$s_1$};
    \node[state, below right=0.5cm and 3.5cm of S0] (S2) {$s_2$};
    
    \node[state, right=3.5cm of S1] (S3) {$s_3$};
    \node[state, right=3.5cm of S2] (S4) {$s_4$};

    \node[action, right=1.2cm of S0, yshift=0.8cm] (actA) {};
    \draw[edge] (S0) -- node[above left] {$a_1$} (actA);
    
    \draw[stoch_edge] (actA) -- node[prob, left, pos=0.5, yshift=0.2cm] {1.0} (S1);
        
    \node[coordinate, label={[text=blue, xshift=0.2cm, yshift=-0.3cm]below:{\small $[1.0, 0.5]^T$}}] (actA) at ([shift={(1.2cm,0.8cm)}]S0) {};

    \node[action, right=1.2cm of S0, yshift=-0.8cm] (actB) {};

    \draw[edge] (S0) -- node[above ] {$a_2$} (actB);
    
    \node[coordinate, label={[text=blue, xshift=-0.4cm, yshift=0.2cm]below:{\small $[2.0, 0.25]^T$}}] (rewB) at ([shift={(1.2cm,-0.8cm)}]S0) {};

    \draw[stoch_edge] (actB) -- node[prob, above, pos=0.5, yshift=-0.0cm] {0.5} (S2);

    \draw[stoch_edge] (actB) -- node[prob, below right, pos=0.5, yshift=0.0cm, xshift=-0.2cm] {0.5} (S1);

    \node[action, right=1.5cm of S2] (actStoch) {};
    \draw[edge] (S2) -- node[above] {$a_1$, $a_2$} (actStoch);

    \node[coordinate, label={[text=blue, xshift=0.2cm, yshift=0.8cm]below:{\small $[0.9, 0.9]^T/\gamma$}}] (rewS2) at ([shift={(1.2cm,-0.8cm)}]S2) {};

    \draw[stoch_edge] (actStoch) -- node[prob, below right, pos=0.5, yshift=0.0cm, xshift=-0.2cm] {1.0} (S4);
    
    

    \node[action, right=1.5cm of S1, yshift=0.6cm] (actC) {};
    \draw[edge] (S1) -- node[above] {$a_1$} (actC);
    
    \node[coordinate, label={[text=blue, xshift=0.35cm, yshift=0.4cm]below:{\small $[0.9, 2.4]^T/\gamma $}}] (rewS1a) at ([shift={(1.2cm,0.0cm)}]S1) {};

    \draw[stoch_edge] (actC) -- node[prob, above] {0.5} (S3);

    \draw[stoch_edge] (actC) -- node[prob, right] {0.5} (S4);
    
    \node[action, right=1.5cm of S1, yshift=-0.6cm] (actD) {};
    \draw[edge] (S1) -- node[below] {$a_2$} (actD);

    \node[coordinate, label={[text=blue, xshift=0.1cm, yshift=-0.4cm]below:{\small $[0.9, 0.4]^T/\gamma$}}] (rewS1a2) at ([shift={(1.2cm,-0.3cm)}]S1) {};

    \draw[stoch_edge] (actD) --  node[prob, below] {0.5} (S3);

    \draw[stoch_edge] (actD) -- node[prob, left] {0.5} (S4);


    \node[action, right=1cm of S3] (actL3) {};

    \node[coordinate, label={[text=blue, xshift=0.5cm, yshift=0.4cm]above:{\small $[0.1, 0.1]^T(1-\gamma)/\gamma^2$}}] (rewS13a) at ([shift={(1.2cm,0.0cm)}]S3) {};

    \node[coordinate, label={[xshift=-0.1cm, yshift=0.5cm]below:{$a_1$,}}] (rewS13a) at ([shift={(1.2cm,0.0cm)}]S3) {};

    \node[coordinate, label={[xshift=-0.15cm, yshift=0.2cm]below:{$a_2$}}] (rewS134a) at ([shift={(1.2cm,0.0cm)}]S3) {};

    \draw[edge] (S3) to[bend left=45] (actL3);
    
    \draw[stoch_edge] (actL3) to[bend left=45] 
    node[prob, below, pos=0.5] {1.0} (S3);


        \node[action, right=1cm of S4] (actL34) {};

    \node[coordinate, label={[text=blue, xshift=0.6cm, yshift=0.4cm]above:{\small $[0.1, 0.1]^T(1-\gamma)/\gamma^2$}}] (rewS134a) at ([shift={(1.2cm,0.0cm)}]S4) {};

    \node[coordinate, label={[xshift=-0.1cm, yshift=0.5cm]below:{$a_1$,}}] (rewS134a) at ([shift={(1.2cm,0.0cm)}]S4) {};

    \node[coordinate, label={[xshift=-0.15cm, yshift=0.2cm]below:{$a_2$}}] (rewS1345a) at ([shift={(1.2cm,0.0cm)}]S4) {};

    \draw[edge] (S4) to[bend left=45] (actL34);
    
    \draw[stoch_edge] (actL34) to[bend left=45] 
    node[prob, below, pos=0.5] {1.0} (S4);

    

\end{tikzpicture}}
        \caption{Solid arrows represent actions and are annotated with action labels and rewards, while dashed arrows represent stochastic transitions annotated with their probabilities.}
        \label{fig:MOMDP}
    \end{figure}
    Consider the MOMDP with 2-dimensional rewards illustrated in Figure~\ref{fig:MOMDP}. As an example, executing action $a_1$ from the initial state $s_0$ produces the reward $\bigl(\begin{smallmatrix} 1.0 \\ 0.5 \end{smallmatrix}\bigr)\in \mathbb{R}^2_{\ge 0}$ and transitions to state $s_1$.
\end{example}

We focus on \textbf{deterministic policies} $\pi$ that map a state-action history $h_t$ and the current state $s \in S$ to an action $a \in A$. We denote the action selected at state $s$ given history $h_t$ by $\pi(s \mid h_t)$, and the induced future policy by $\pi(\cdot \mid h_t)$. 

Let $\Delta(S)$ denote the set of probability distributions over $S$. For an initial state distribution $\mathcal{I} \in \Delta(S)$, the \textbf{Value Function} (or \emph{value vector}) $\mathbf{V}^\pi(\mathcal{I}) \in \mathbb{R}^d_{\geq 0}$ is the expected discounted return obtained by executing policy $\pi$:
\[
\mathbf{V}^\pi(\mathcal{I}) = \mathbb{E}_{s_0 \sim \mathcal{I}, \pi} \left[ \sum_{t=0}^{\infty} \gamma^t R(s_t, a_t) \right]
\]

    \begin{figure}
        \scalebox{0.9}{\begin{tikzpicture}[scale=1.5]
    \draw[very thin, gray!20] (0,0) grid (4,4);
    

    \draw[thick, ->] (-0.2,0) -- (4.2,0)
    node[midway, below, sloped, inner sep=0.5cm] {Objective 1};
    \draw[thick, ->] (0,-0.2) -- (0,4.2)
    node[midway, above, sloped, inner sep=0.5cm] {Objective 2};
    \foreach \x in {1,2,3,4} \draw (\x,1pt) -- (\x,-1pt) node[below, font=\small] {\x};
    \foreach \y in {1,2,3,4} \draw (1pt,\y) -- (-1pt,\y) node[left, font=\small] {\y};

    
\matrix [
    draw,
    fill=white,
    matrix of nodes,
    anchor=north east,
    xshift=0.8cm,       
    yshift=-0.1cm,
    row sep=-1pt,        
    column sep=2pt,      
    inner sep=3pt,       
    nodes={anchor=center, inner sep=2pt, font=\scriptsize}, 
    scale=0.8,
    transform shape
] at (current bounding box.north east) {
    \node[circle, fill=red!60, minimum size=5pt] {};  & \node[anchor=west] {PO value}; \\
    \node[regular polygon, regular polygon sides=3, fill=red!60, minimum size=6pt, inner sep=0pt] {}; & \node[anchor=west] {Weak PO value}; \\
    \node[circle, fill=blue!60, minimum size=5pt] {}; & \node[anchor=west] {Dominated value}; \\
    \draw[black, dashed, thick] (-0.2,0) -- (0.26,0); & \node[anchor=west] {Pareto performance}; \\
    \draw[draw=none] ;& \node[anchor=west] { frontier (PPF)};\\
};

    \filldraw[blue!60] (2,1) circle (2pt);
        
    \node[regular polygon, regular polygon sides=3, fill=red!60, inner sep=1pt, minimum size=10pt] at (3,1) {};

    \filldraw[red!60] (3,2) circle (2pt);
    \filldraw[red!60] (2,3) circle (2pt);

    \draw[draw=none] (0.7, 0.6) node[] {\scriptsize $(1,0.5)$};

    \draw[draw=none] (2.1,0.1) node[] {\scriptsize $(2.0,0.25)$};

    \draw[draw=none] (3.45,0.75) node[] {\scriptsize $(3.0,0.75)$};

    \draw[draw=none] (3.45,1.25) node[] {\scriptsize $(3.0,1.25)$};

    \draw[draw=none] (3.45,2.75) node[] {\scriptsize $(3.0,2.75)$};

    \draw[blue, thick, -Stealth] (0,0) -- (1,0.5) node[pos=0.5, above, sloped, black, inner sep=0.5pt] {$a_1$};;
    \draw[blue, thick, -Stealth] (1,0.5) -- (2,3) node[pos=0.5, above, sloped, black, inner sep=0.5pt] {$a_1$};;

   \draw[blue, thick, -Stealth] (0,0) -- (1,0.5);
   \draw[blue, thick, -Stealth] (1,0.5) -- (2,1) node[pos=0.5, above, sloped, black, inner sep=0.5pt] {$a_2$};;

    \draw[blue, thick, -Stealth] (0,0) -- (2.0,0.25) node[pos=0.5, above, sloped, black, inner sep=0.5pt] {$a_2$};;
    
    \draw[blue, dashed, -Stealth] (2.0,0.25) -- (3.0,2.75) node[midway, above, sloped, black, inner sep=0.5pt] {$a_1$};
    

    \draw[blue, dashed, -Stealth] (2.0,0.25) -- (3.0,1.25) node[pos=0.55, above, sloped, black, inner sep=0.5pt] {};

    \draw[blue, thick, -Stealth] (2.0,0.25) -- (3.0,2.0);

    \draw[blue, dashed, -Stealth] (2.0,0.25) -- (3.0,0.75) node[midway, below, sloped, black, inner sep=0.5pt] {$a_2$};
    \draw[blue, dashed, -Stealth] (2.0,0.25) -- (3.0,1.25) node[pos=0.7, above, sloped, black, inner sep=0.5pt] {$a_1$, $a_2$};
    \draw[blue, thick, -Stealth] (2.0,0.25) -- (3.0,1);

    \filldraw[blue] (3.05,2) circle (0pt) node[right, blue] {$\pi_3 (3,2)$};
    \filldraw[blue] (2,3) circle (0pt) node[above right, blue] {$\pi_1 (2,3)$};
    \filldraw[blue] (1.9,1) circle (0pt) node[above, blue] {$\pi_2 (2,1)$};
    \filldraw[blue] (3.05,1) circle (0pt) node[ right, blue] {$\pi_4 (3,1)$};

    \draw[dashed, thick, black!60] (0,3) -- (2,3) -- (2,2) -- (3,2) -- (3,0);


\end{tikzpicture}}
        \caption{Multi-objective value vectors. Arrows show rewards accrued from deterministic (solid) and stochastic (dashed, $p=0.5$) transitions from $s_0$, $s_1$ and $s_2$. Pareto-optimal and weakly Pareto-optimal vectors are marked in red (circles and triangle, respectively), while dominated values are in blue. The dashed black line indicates the Pareto performance frontier.}
        \label{fig:four_value_vectors}
    \end{figure}

\begin{example}


In the MOMDP (Figure~\ref{fig:MOMDP}), action choices affect transitions and rewards only at states $s_0$ and $s_1$; at states $s_2$, $s_3$, and $s_4$, the outcomes are independent of the action chosen. This results in four distinct deterministic policies, corresponding to the action pairs $\pi_1=(a_1, a_1)$, $\pi_2=(a_1, a_2)$, $\pi_3=(a_2, a_1)$, and $\pi_4=(a_2, a_2)$, where the first and second components denote the actions taken at states $s_0$ and $s_1$ regardless of the history. These policies yield the value vectors $(2,3)$, $(2,1)$, $(3,2)$, and $(3,1)$, respectively (see Figure~\ref{fig:four_value_vectors}).




\end{example}

\textbf{Notation.} Let $\Pi$ denote the set of deterministic policies. For vectors $u, v \in \mathbb{R}^d_{\geq 0}$, $u$ \emph{weakly dominates} $v$ ($v \preceq u$) if $u_i \geq v_i$ for all $i$, and \emph{strictly dominates} ($v \prec u$) if $v \preceq u$ and $u \neq v$. We use the shorthand $V^\pi(s)$ to denote the value of policy $\pi$ starting from state $s$. Let $\overline{\mathbf{1}}$ denote a $d$-dimensional vector of all ones. Assuming a maximum component-wise reward $R_{\max}\in \mathbb{R}_{\geq 0}$, all value vectors are bounded by $\mathcal{R} := \frac{R_{\max}}{1-\gamma}\overline{\mathbf{1}}$.

We note that two vectors $u, v \in \mathbb{R}^d_{\geq 0}$ are \emph{incomparable} if neither $u \preceq v$ nor $v \preceq u$. This occurs if and only if there exist indices $i$ and $j$ such that $u_i < v_i$ and $u_j > v_j$.

A policy $\pi \in \Pi$ is \textbf{Pareto-optimal (PO)} if it is not strictly dominated by any other policy; formally, there exists no $\pi' \in \Pi$ such that $V^\pi(s_0) \prec V^{\pi'}(s_0)$.\footnote{While dominance is defined here with respect to the initial state $s_0$, the concept naturally extends to any initial state distribution $\mathcal{I} \in \Delta(S)$ by substituting $V^\pi(s_0)$ with ${V}^\pi(\mathcal{I})$.} The set of such policies, denoted $\Pi^{PO}$, represents the trade-offs where no objective can be improved without sacrificing another.




\begin{example}
Figure~\ref{fig:four_value_vectors} illustrates the value vectors for the four policies: $\pi_1$ and $\pi_3$ are Pareto-optimal, whereas $\pi_2$ is dominated by $\pi_1$ ($V^{\pi_2}(s_0) \preceq V^{\pi_1}(s_0)$) and $\pi_4$ is dominated by $\pi_3$ ($V^{\pi_4}(s_0) \preceq V^{\pi_3}(s_0)$). Furthermore, policies $\pi_1$ and $\pi_3$ are incomparable, as neither of their value vectors dominates the other.
\end{example}

Since multiple policies may yield identical value vectors, $\Pi^{PO}$ can contain redundant policies. We therefore seek a \textbf{Pareto Coverage Set} $\mathcal{C}_{PO} \subseteq \Pi^{PO}$ that serves as a representative subset capable of characterizing the entire frontier of optimal values without the redundancy of the full policy space; formally, $\forall \pi \in \Pi^{PO}, \exists \pi' \in \mathcal{C}_{PO}$ such that $V^{\pi'}(s_0) = V^{\pi}(s_0)$.

Identifying such a set $\mathcal{C}_{PO}$ is non-trivial and motivates our problem definition to synthesize a parsimonious policy set that satisfies coverage:

\begin{tcolorbox}[
    colback=white,      
    colframe=black,     
    colbacktitle=white, 
    coltitle=black,     
    title=Problem Statement\\
    Complete And Parsimonious Synthesis (CAPS),
    fonttitle=\bfseries, 
    center title,       
    titlerule=0pt       
]
Given a Multi-Objective Markov Decision Process $M=(S,A,T,\gamma,s_0,R)$,  synthesize a Pareto coverage set $\mathcal{C}_{PO} \subseteq \Pi^{PO}$ that satisfies the following conditions:

\begin{center} 
 \begin{itemize}
     \item \textbf{Coverage}: Every Pareto-optimal policy is represented by a policy in the set $\C_{PO}$: 

     \hspace{2em}
     $\forall \pi \in \Pi^{PO}, \exists \pi' \in \mathcal{C}_{PO} \text{ such that } $
     
     \hspace{4em} $V^{\pi}(s_0) = V^{\pi'}(s_0) $
     \item \textbf{Parsimony}: Every policy in the coverage set is itself Pareto-optimal: 
     
    \hspace{2em}
    $\forall \pi' \in \C_{PO}, \exists \pi \in \Pi^{PO} \text{ such that } $
     
    \hspace{4em} $V^{\pi'}(s_0) = V^{\pi}(s_0) $
     
 \end{itemize}
\end{center}
\end{tcolorbox}

Since value functions are defined as the limits of infinite sums, evaluating them exactly may not be practically possible without truncating after a finite horizon. 
To account for the inherent error of this truncation, we relax the exact equality constraints and define the $\epsilon$-CAPS problem as follows:

\begin{tcolorbox}[
    colback=white,      
    colframe=black,     
    colbacktitle=white, 
    coltitle=black,     
    title=Problem Statement\\
    Approximate Complete And Parsimonious Synthesis ($\epsilon$-CAPS),
    fonttitle=\bfseries, 
    center title,       
    titlerule=0pt       
]
Given a Multi-Objective Markov Decision Process $M=(S,A,T,\gamma,s_0,R)$ and an $\epsilon \in \mathbb{R}_{> 0}$, synthesize an approximate Pareto coverage set $\mathcal{C}^{\epsilon}_{PO}$ that satisfies the following conditions:

\begin{center} 
 \begin{itemize}
     \item \textbf{Approximate Coverage}: Every Pareto-optimal policy is approximately dominated by a policy in the approximate coverage set $\C^\epsilon_{PO}$: 
     
     \hspace{2em}
     $\forall \pi \in \Pi^{PO}, \exists \pi' \in \mathcal{C}^\epsilon_{PO} \text{ such that } $
     
     \hspace{4em} $V^{\pi}(s_0) \preceq V^{\pi'}(s_0)  + \epsilon \overline{\mathbf{1}} $
     \item \textbf{Approximate Parsimony}: Every policy in the approximate coverage set is itself approximately Pareto-optimal: 
     
    \hspace{2em}
    $\forall \pi' \in \C^\epsilon_{PO}, \exists \pi \in \Pi^{PO} \text{ such that } $
     
    \hspace{4em} $V^{\pi'}(s_0) \preceq V^{\pi}(s_0) + \epsilon \overline{\mathbf{1}} $
     
 \end{itemize}
\end{center}
\end{tcolorbox}

Given a small margin $\epsilon > 0$, the approximate coverage property ensures that the value of every Pareto-optimal policy is weakly dominated by the value of a policy in $\mathcal{C}_{PO}^\epsilon$, up to an additive factor of $\epsilon$ across all components. Similarly, the approximate parsimony property guarantees that every policy in $\mathcal{C}_{PO}^\epsilon$ is weakly dominated by an actual Pareto-optimal policy, up to the same $\epsilon$ margin.

To solve the $\epsilon$-CAPS problem, we next introduce an additional parameter called a \emph{preference}. Optimizing for a specific preference yields an approximately Pareto-optimal policy; consequently, systematically varying this preference allows us to construct the full approximate Pareto coverage set $\mathcal{C}_{PO}^\epsilon$. 

Our approach proceeds in two main steps. First, assuming a known probabilistic model of the system, we introduce a Bellman update procedure and prove that the resulting Q-value estimates converge arbitrarily close to the true Pareto-optimal values. Second, we demonstrate how to extract a history-independent policy $\pi$ from these Q-estimates. This independence relies on a core property we establish regarding \emph{greedy Pareto-optimal policies}: there exist deterministic, Pareto-optimal policies where, upon taking action $a$ in state $s$ and transitioning to $s'$, the subsequent behavior perfectly matches another deterministic Pareto-optimal policy originating from $s'$. Consequently, instead of storing the full execution history $h_t$, we can greedily select actions by dynamically updating the preference parameter based solely on the most recent state transition.


\section{Preference-Based Chebyshev Scalarization}\label{sec:chebyshev_scalarization}


A common approach to solving the Pareto-optimal policy synthesis problem in MOMDPs is via scalarization. 
In this framework, a  non-negative, unit-normalized preference vector encodes the relative importance of the objectives, and a scalarization function maps vector-valued returns to scalar values. For each preference vector, this yields a distinct single-objective optimization problem, whose solution corresponds to a policy tailored to that preference. Solving these problems over a collection of preference vectors produces a set of policies that characterize the Pareto frontier. Our approach builds on this idea; we next formally define the preference space and the Chebyshev scalarization function.



\begin{definition}[Preference Space]
We define the preference space $\mathcal{W}$ as the set of all non-negative, unit-normalized weight vectors:
\[
\mathcal{W} := \Big\{ w \in \mathbb{R}^d_{\geq 0} \;\Big|\; \|w\|_2 = 1,\; w_i \ge 0,\; \forall i = 1,\dots,d \Big\}.
\]
We refer to any vector $w \in \mathcal{W}$ as a \textit{preference}. Conceptually, each component $w_i$ acts as a weight assigned to the $i$-th objective, representing its relative importance.
\end{definition}

\begin{definition}[Chebyshev Scalarization]\label{def:cheb_scalarization}
Let $\mathcal{W}$ be the preference space. The \textit{Weighted Chebyshev scalarization} is a function $O: \mathcal{W} \times \mathbb{R}^d_{\geq 0} \rightarrow \mathbb{R}_{\geq 0}$ that maps a preference vector $w\in \W$ and a value vector $V\in  \mathbb{R}^d_{\geq 0}$ to a non-negative scalar score in $\mathbb{R}_{\geq 0}$. It is defined as:
$$    O(w, V) := \min_{i : w_i > 0} \left( \frac{V_i}{w_i} \right)
$$
\end{definition}

For a fixed preference vector \(w \in \mathcal W\), we write $O_w(V) := O(w, V)$
to denote the Chebyshev scalarization induced by the preference \(w\).

\textbf{Geometric Intuition.} Geometrically, the weighted Chebyshev operator $O(w,V)$ identifies the largest scale factor $\alpha \geq 0$ such that the vector $\alpha w$ remains weakly dominated by $V$. 
It measures how far a ray originating from the origin in the direction of $w$ can extend before it is constrained by the components of $V$ (illustrated in Figure~\ref{fig:left_plot}). This relationship is formalized by the optimization: $$ \max \alpha \quad \text{ subject to } \quad \alpha w \preceq V $$
The constraint $\alpha w \preceq V$ requires that $\alpha w_i \leq V_i$ for all $i$. Consequently, the optimization simplifies to $\alpha \leq \min\limits_{i : w_i > 0} \left( \frac{V_i}{w_i} \right)$. Thus, the resulting scalar value is determined strictly by the $i$-th \emph{bottleneck component} that produces this minimum ratio. This component represents the first coordinate of $V$ that restricts the scaling along the ray.

We note that for a fixed preference $w$, multiple distinct value vectors (including those that are strictly dominated) may yield the exact same scalar score. Because the operator is sensitive only to the limiting bottleneck component, increasing any non-bottleneck objective $V_j$ does not change the resulting value. This phenomenon, where an improvement in one objective is not reflected in the scalarization, is illustrated in Example~\ref{ex:scalar_blindness} below.

\begin{example} \label{ex:scalar_blindness} Consider four policies $\pi_1,\ldots\pi_4$ with value vectors $V^1=(2,3)$, $V^2=(2,1)$, $V^3=(3,2)$, and $V^4=(3,1)$, as shown in Figure~\ref{fig:four_value_vectors}. 
Suppose we have a preference vector $w=\frac{(3,0.5)}{\|(3,0.5)\|_2}$. 
For both $V^3$ and $V^4$, the first objective is the bottleneck since: $\frac{V^3_1}{w_1}=\frac{V^4_1}{w_1}=\|(3,0.5)\|_2$. While $V^3$ strictly dominates $V^2$ (since $2>1$ in the second objective), both yield the same scalarized value: 
$O_w(V^3)=O_w(V^4)=\|(3,0.5)\|_2$. 
Consequently, an optimizer using this preference $w$ would be indifferent between the two. However, for a different preference $w'=\frac{(2,1)}{\|(2,1)\|}$, the bottleneck shifts, and only $V^3$ would remain as the maximum.

\end{example}



While Example~\ref{ex:scalar_blindness} highlights the operator's indifference between specific value vectors, it also suggests that for any preference direction $w$, there exists a  limit to how far the ray $\alpha w$ can extend within the space of achievable outcomes. 
We formalize this boundary in Definition~\ref{def:ppf}.



\begin{definition}[Pareto Performance Frontier (PPF)]\label{def:ppf}
    Let $\mathcal{V} \subset \mathbb{R}_{\geq 0}^d$ be the set of all achievable value vectors, such that $\mathcal{V} \neq \{\bm 0\}$.
    Let $\mathcal{V}^{PO} \subseteq \mathcal{V}$ denote the subset of Pareto-optimal points in $\mathcal{V}$.
    The \textbf{Pareto Performance Frontier} $\ppf \subset \mathbb{R}_{\geq 0}^d$ is defined as the set of value vectors characterizing the upper boundary of the achievable region. A point $v$ belongs to the $\ppf$ if it is weakly dominated by a Pareto-optimal point, yet cannot be strictly improved in all objectives simultaneously by any achievable vector. Formally:
    \begin{align*}
        \ppf = \Big\{ & v \in \mathbb{R}_{\geq 0}^d \;\Big|\; \exists v^* \in \mathcal{V}^{PO} \text{ s.t. } v \preceq v^* \\
        & \text{and } \nexists v'\in \mathcal{V} \text{ s.t. } \forall i=1,\ldots, d, \; v_i < v'_i \Big\}
    \end{align*}
\end{definition}

The frontier separates the achievable performance region from the unachievable region. The achievable region is defined as the set of points weakly dominated by the Pareto set; conversely, the unachievable region comprises values that are strictly unattainable by any policy.


Figure~\ref{fig:ppf} visualizes the $\ppf$ as a dashed boundary separating the achievable region (weakly dominated by $\mathcal{V}^{PO}$) from the strictly unachievable region. By definition, no point on the $\ppf$ allows simultaneous improvement in all objectives.

\begin{example}
     Figure~\ref{fig:ppf} illustrates $\ppf$ with a dashed boundary. This frontier demarcates the achievable region, where all feasible points lie on or below the boundary, from the unachievable region located above it. It is easy to see that no point on the $\ppf$ can be improved in all objectives simultaneously.
     \begin{figure}
         \hspace{-1em}
         \scalebox{0.95}{
         \begin{tikzpicture}[scale=1.5]

    \def\ppfpath{ (0,3) -- (2,3) -- (2,2.2) -- (2.2,2.2) -- (2.2, 2) -- (3,2) -- (3,1.5) -- (3.3,1.5) -- (3.3,0.5) -- (3.8,0.5) -- (3.8,0)}

    \fill[blue!10] (0,0) -- (0,3) -- (2,3) -- (2,2.2) -- (2.2,2.2) -- (2.2,2) -- (3,2) -- (3,1.5) -- (3.3,1.5) -- (3.3,0.5) -- (3.8,0.5) -- (3.8,0) -- cycle;
    \node[blue!60, font=\small] at (1.2, 1.5) {Achievable Region};

    \begin{scope}
        \clip (0,0) rectangle (4,4);
        \fill[red!10, even odd rule] (0,0) rectangle (4,4)
            (0,0) -- (0,3) -- (2,3) -- (2,2.2) -- (2.2,2.2) -- (2.2, 2) -- (3,2) -- (3,1.5) -- (3.3,1.5) -- (3.3,0.5) -- (3.8,0.5) -- (3.8,0) -- cycle;
    \end{scope}
    \node[red!60, font=\small] at (2, 3.5) {Unachievable Region};

    \node[black!60, font=\small] at (1, 3.2) {\scriptsize Pareto Performance Frontier};

    \draw[very thin, gray!30] (0,0) grid (4,4);

    \draw[thick, ->] (-0.2,0) -- (4.2,0)
    node[midway, below, sloped, inner sep=0.5cm] {Objective 1};
    \draw[thick, ->] (0,-0.2) -- (0,4.2)
    node[midway, above, sloped, inner sep=0.5cm] {Objective 2};

    \foreach \x in {1,2,3,4} \draw (\x,1pt) -- (\x,-1pt) node[below, font=\footnotesize] {\x};
    \foreach \y in {1,2,3,4} \draw (1pt,\y) -- (-1pt,\y) node[left, font=\footnotesize] {\y};

    \draw[black, ultra thick, dashed] (0,3) -- (2,3) -- (2,2.2) -- (2.2,2.2) -- (2.2,2) -- (3,2) -- (3,1.5) -- (3.3,1.5) -- (3.3,0.5) -- (3.8,0.5) -- (3.8,0);

    \filldraw[black!60] (2,1) circle (2pt);
    \node[below, blue!80, font=\scriptsize] at (2,1) {$(2,1)$};

    \filldraw[black!60] (1.3,1.3) circle (2pt);
    \node[below, blue!80, font=\scriptsize] at (1.3,1.3) {$(1.3,1.3)$};



    \filldraw[black!60] (2.2,2) circle (2pt);
    \node[below, red!80, font=\scriptsize] at (2.2,2) {$(2.2,2)$};

    \filldraw[black!60] (2.5,1.5) circle (2pt);
    \node[below, blue!80, font=\scriptsize] at (2.5,1.5) {$(2.5,1.5)$};

    \filldraw[black!60] (1.5,2.5) circle (2pt);
    \node[below, blue!80, font=\scriptsize] at (1.5,2.5) {$(1.5,2.5)$};

    \filldraw[black!60] (0.5,2.7) circle (2pt);
    \node[below, blue!80, font=\scriptsize] at (0.5,2.7) {$(0.5,2.7)$};

    \filldraw[black!60] (0.9,2.1) circle (2pt);
    \node[below, blue!80, font=\scriptsize] at (0.9,2.1) {$(0.9,2.1)$};

    \filldraw[black!60] (0.5,0.8) circle (2pt);
    \node[below, blue!80, font=\scriptsize] at (0.5,0.8) {$(0.5,0.8)$};

    \filldraw[black!60] (1.5,0.5) circle (2pt);
    \node[below, blue!80, font=\scriptsize] at (1.5,0.5) {$(1.5,0.5)$};

    \filldraw[black!60] (2.5,0.2) circle (2pt);
    \node[above, blue!80, font=\scriptsize] at (2.5,0.2) {$(2.5,0.2)$};

    \filldraw[black!60] (2.6,0.8) circle (2pt);
    \node[below, blue!80, font=\scriptsize] at (2.6,0.8) {$(2.6,0.8)$};

    \filldraw[black!60] (2.2,2.2) circle (2pt);
    \node[above right, red!80, font=\scriptsize, inner sep=1pt] at (2.2,2.2) {$(2.2,2.2)$};

    \filldraw[black!60] (3.3,1) circle (2pt);

    \node[left, red!80, font=\scriptsize, xshift=2pt] at (3.2,1) {$(3.3,1)$};

    \filldraw[black!60] (3.3,1.5) circle (2pt);
    \node[above, red!80, font=\scriptsize, xshift=2pt] at (3.6,1.5)  {$(3.3,1.5) $};

    \filldraw[black!60] (3.8,0.5) circle (2pt);
    \node[above, red!80, font=\scriptsize, xshift=2pt] at (3.6,0.5)  {$(3.8,0.5) $};
    
    \filldraw[black!60] (2,3) circle (2pt);
    \node[right, red!80, font=\scriptsize] at (2,3) {$(2,3)$};

    \filldraw[black!60] (1.2,3) circle (2pt);
    \node[below, red!80, font=\scriptsize] at (1.2,3) {$(1.2,3)$};

    \filldraw[black!60] (3,2) circle (2pt);
    \node[right, red!80, font=\scriptsize, xshift=2pt] at (3,2) {$(3,2)$};

    \matrix [
        draw,
        fill=white,
        matrix of nodes,
        anchor=north east,
        xshift=0.9cm,
        yshift=-0.1cm,
        row sep=1pt,
        column sep=3pt,
        inner sep=3pt,
        nodes={anchor=center, inner sep=2pt, font=\scriptsize, align=left},
        scale=0.9,
        transform shape
    ] at (current bounding box.north east) {
        \node[circle, fill=black!60, minimum size=5pt] {}; & \node[anchor=west] {Achievable value}; \\
        \draw[black, dashed, ultra thick] (-0.3,0) -- (0.3,0); & \node[anchor=west] {Pareto-Performance\\Frontier (PPF)}; \\
        \node[fill=blue!10, draw=none, minimum width=0.6cm, minimum height=0.3cm] {}; & \node[anchor=west] {Achievable Region}; \\
        \node[fill=red!10, draw=none, minimum width=0.6cm, minimum height=0.3cm] {}; & \node[anchor=west] {Unachievable}; \\
         & \node[anchor=west] {Region}; \\
    };

\end{tikzpicture}
         }
         \caption{Visualization of the Pareto performance Frontier (PPF). The distinct points represent achievable values. The dashed boundary line demarcates the PPF, separating the achievable region (blue) from the unachievable region (red).}
         \label{fig:ppf}
     \end{figure}
\end{example}

Next, we provide an example illustrating a fundamental limitation of linear scalarization: its inability to capture Pareto-optimal points located in non-convex regions of the frontier.

\begin{example}
    Consider the Pareto-optimal points $A = [2, 3]^\top$, $B = [2.2, 2.2]^\top$, and $C = [3, 2]^\top$, as shown in Figure~\ref{fig:ppf}. We assert that for any preference vector $w \in \mathcal{W}$, point $B$ can never be the maximizer of linear scalarization. Specifically, either $w^\top B \leq w^\top A$ or $w^\top B \leq w^\top C$. 
    
    We prove this by contradiction. Suppose there exists a preference $w = [w_1, w_2]^\top \in \mathcal{W}$ (where $w_1, w_2 \geq 0$) such that $w^\top B > w^\top A$ and $w^\top B > w^\top C$. Expanding and simplifying these inequalities yields:
    \begin{align*}
        2.2(w_1 + w_2) > 2w_1 + 3w_2 \quad &\implies \quad w_1 > 4w_2 \\
        2.2(w_1 + w_2) > 3w_1 + 2w_2 \quad &\implies \quad w_2 > 4w_1
    \end{align*}
    Combining these constraints results in $w_1 > 4w_2 > 16w_1$. Since $w_1$ and $w_2$ are non-negative, $w_1 > 16w_1$ is a clear contradiction. Consequently, varying the preferences and maximizing a linear scalarization will fail to capture the entire Pareto frontier, particularly missing point $B$.
\end{example}





While the $\text{PPF}$ defines performance limits, it lacks a mechanism for selecting specific optimal trade-offs. To address this, we use \textbf{Chebyshev scalarization}, which maps preferences to frontier points. Unlike linear weighted sums, which fail on non-convex boundaries, Chebyshev scalarization ensures that every Pareto-optimal point can be recovered. We formalize this in Theorem~\ref{thm:cheb_suff_nece} next:

\begin{theorem}[Sufficiency and Necessity of the weighted Chebyshev Scalarization]\label{thm:cheb_suff_nece}
Let $\V \subset \mathbb{R}_{\geq 0}^d$ 
be a set of achievable value vectors such that $\V\subseteq [0,{R_{\max}}]^d$, and let the $\ppf$ denote the set of Pareto performance frontier for vectors in $\V$. For a given preference $w\in \W$, let $O_w(v)$ denote the weighted Chebyshev scalarization of $v$. The following properties hold:
\begin{enumerate}[label=\thetheorem.\arabic*]
    \item \label{thm:cheb_nece} \textbf{Sufficiency:} For any preference $w$, the maximizer of the Chebyshev scalarization is on the Pareto performance frontier. Formally, if $V^* \in \arg\max\limits_{v \in \mathcal{V}} O_w(v)$, then $V^* \in \ppf$.
    \item \label{thm:cheb_suff}  \textbf{Necessity:}  For any Pareto-optimal vector $V^* \in \mathcal{V}$, there exists a preference $w \in \mathcal{W}$ such that $V^*$ is the unique maximizer of the scalarization. Formally if $V' \in \arg\max\limits_{V \in \mathcal{V}} O_w( V)$ then $V'=V^*$.
\end{enumerate}
\end{theorem}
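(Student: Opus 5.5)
The plan is to prove the two parts separately, relying on two elementary facts about $O_w$. First, \emph{monotonicity}: if $v\preceq v'$ then $O_w(v)\le O_w(v')$. Second, \emph{strict monotonicity under uniform improvement}: if $v_i<v'_i$ for all $i$, then $O_w(v)<O_w(v')$. The second holds because the minimum in Definition~\ref{def:cheb_scalarization} runs over a finite, nonempty index set $\{i: w_i>0\}$ (nonempty since $\|w\|_2=1$), and every ratio $v_i/w_i$ in it strictly increases.

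\textbf{Sufficiency.} Let $V^*\in\arg\max_{v\in\V}O_w(v)$. Membership in $\ppf$ (Definition~\ref{def:ppf}) has two conditions.
\begin{itemize}
\item \emph{No achievable $v'$ strictly improves every coordinate of $V^*$.} If such a $v'$ existed, strict monotonicity would give $O_w(v')>O_w(V^*)$, contradicting maximality.
\item \emph{$V^*$ is weakly dominated by some $v^\star\in\V^{PO}$.} I would choose $v^\star$ to maximize the coordinate sum $\sum_i v_i$ over the set $\{v\in\V : V^*\preceq v\}$. This set is nonempty, since it contains $V^*$. Any strict dominator of $v^\star$ would also dominate $V^*$ and would have a strictly larger coordinate sum. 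Hence $v^\star$ is Pareto-optimal.
\end{itemize}
The obstacle is that the maximizer of the coordinate sum must exist. Boundedness of $\V\subseteq[0,R_{\max}]^d$ alone does not guarantee this. I would add the (implicit) assumption that $\V$ is closed, hence compact; for instance, it is finite when $\V$ comes from finitely many truncated policies. Then the set $\{v\in\V : V^*\preceq v\}$ is compact and the linear function $\sum_i v_i$ attains its maximum on it. I would state this assumption explicitly in the proof.

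\textbf{Necessity.} Let $V^*\in\V$ be Pareto-optimal. First, $V^*\neq\bm 0$: otherwise every $v\in\V$ would weakly dominate $V^*$, so Pareto-optimality would force $\V=\{\bm 0\}$, which is excluded. I would then take the preference pointing straight at $V^*$, namely $w:=V^*/\|V^*\|_2\in\W$.

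For every $i$ with $w_i>0$ we have $V^*_i/w_i=\|V^*\|_2$, so $O_w(V^*)=\|V^*\|_2$. Now suppose $V'\in\V$ satisfies $O_w(V')\ge\|V^*\|_2$. Then $V'_i\ge\|V^*\|_2\,w_i=V^*_i$ for every $i$ with $w_i>0$. For indices with $w_i=0$ we have $V^*_i=0\le V'_i$, using nonnegativity of $\V$. Hence $V^*\preceq V'$, and Pareto-optimality of $V^*$ forces $V'=V^*$.

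This argument shows two things at once. It shows that $V^*$ attains the maximum of $O_w$ over $\V$: any $V'$ with $O_w(V')>O_w(V^*)$ would in particular satisfy $O_w(V')\ge\|V^*\|_2$, so $V'=V^*$, which is impossible. It also shows that $V^*$ is the unique maximizer.

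The delicate point here is the coordinates where $V^*_i=0$. These coordinates are ignored by $O_w$, and one might fear that they break uniqueness. They do not, because nonnegativity makes $V'_i\ge 0=V^*_i$ automatic there. So the main technical care in the whole proof lies in the existence of a dominating Pareto-optimal point in the sufficiency part, not in necessity.
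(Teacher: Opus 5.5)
Your proposal is correct, and its core matches the paper: sufficiency via strict monotonicity of $O_w$ under an all-coordinate improvement, and necessity via the aligned preference $w=V^*/\|V^*\|_2$, where you have inlined the content of Lemma~\ref{lem:aligned_dominance}.

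The real difference is the second clause of $\ppf$ membership. The paper's sufficiency proof reads ``$v^*\notin\ppf$'' as ``some (Pareto-optimal) $v'\in\mathcal{V}$ strictly improves every coordinate of $v^*$.'' It therefore refutes only one disjunct of the negation. It never shows that the maximizer is weakly dominated by a point of $\mathcal{V}^{PO}$. Your coordinate-sum maximization over $\{v\in\mathcal{V}: V^*\preceq v\}$ fills exactly this gap.

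Your closedness assumption is genuinely needed, not merely convenient. Take $\mathcal{V}=\{(1+t,1): t\in[0,1)\}\subseteq[0,2]^2$, which satisfies the stated hypotheses. Here $\mathcal{V}^{PO}=\emptyset$, hence $\ppf=\emptyset$, yet $O_w\equiv\sqrt{2}$ on $\mathcal{V}$ for $w=(1,1)/\sqrt{2}$, so every point is a maximizer. The paper does assume compactness in Lemma~\ref{lem:aligned_dominance} and in the appendix version of the coverage theorem. Also, for finite $S$ and $A$ the set of values of deterministic policies is compact, so the assumption costs nothing in the intended setting.

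Your necessity argument is also more careful than the paper's. You check $V^*\neq\bm 0$, treat coordinates with $V^*_i=0$ explicitly, and verify that $V^*$ actually attains the maximum, none of which requires compactness.
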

\begin{proof}[Proof Sketch]
The intuition for both properties relies on the geometric constraints of the Chebyshev scalarization:

\begin{itemize}
    \item \textbf{Sufficiency (Proof by Contradiction):} If a maximizer of the Chebyshev objective were not on the Pareto performance frontier ($\ppf$), there would exist another achievable vector in $\mathcal{V}$ that is strictly greater along all dimensions. Because the Chebyshev objective scales strictly with the minimum weighted component, this dominating vector would yield a strictly larger objective value, contradicting the premise that the original vector was a maximizer.
    \item \textbf{Necessity:} For any given Pareto-optimal vector $V^*$, we can perfectly align the preference vector $w \in \mathcal{W}$ by setting it proportional to $V^*$, i.e., $w:=\frac{V^*}{\|V^*\|_2}$. Therefore, any vector $V'\in \arg\max\limits_{V\in \mathcal{V}}O_w(V)$ maximizing the objective along this preference must weakly dominate $V^*$, since 
    $\min\limits_{w_i>0}\left\{\frac{V'_i}{w_i}\right\} = \min\limits_{V^*_i>0}\left\{\frac{V'_i \|V^*\|_2}{V^*_i}\right\} \geq O_w(V^*)=\|V^*\|_2$, yielding $V'_i\geq V^*_i$ for all $i$. Since $V^*$ is Pareto-optimal by definition, no achievable vector can strictly dominate it, forcing the unique maximizer to be exactly $V^*$.
\end{itemize}
A detailed proof is provided in Appendix~\ref{app:thm:cheb_suff_nece}.
\end{proof}



\noindent Theorem~\ref{thm:cheb_suff_nece} establishes the connection between Chebyshev scalarization and Pareto-optimality. As a consequence, the following Theorem
\ref{item:coverage} demonstrates that the set of scalarization maximizers covers all Pareto-optimal points while remaining entirely confined to the Pareto performance frontier.

However, to isolate the Pareto-optimal subset, we must filter out points that lie on the $\ppf$ but are not Pareto-optimal.
To achieve this, we introduce a two-stage optimization procedure in Theorem~\ref{item:exactness}. 
This procedure refines the selection logic: while the first stage uses Chebyshev scalarization to identify candidate solutions, the second stage imposes a maximum $\ell_2$-norm tie-breaking rule.
By selecting the vector with the largest $\ell_2$-norm, we effectively push the solution toward the most distal point on the $\ppf$, filtering out suboptimal frontier points.
This ensures the resulting points are strictly Pareto-optimal, thus exactly capturing the most efficient trade-offs.



\begin{theorem}[Coverage and Exact Characterization of Pareto Optimality]
\label{thm:pareto_coverage_characterization}
Let $\mathcal{V} \subset \mathbb{R}_{\geq 0}^d$ be a set of achievable value vectors, with $\mathcal{V}^{PO}$ denoting the Pareto-optimal set and $\ppf$ the Pareto performance frontier.

The relationship between Chebyshev scalarization and the Pareto set is characterized by the following two properties:

\begin{enumerate}[label=\thetheorem.\arabic*] 
    \item \textbf{Scalarization-Induced Coverage:} \label{item:coverage}
    Let $\mathcal{S}$ be the set constructed by selecting, for every preference $w \in \mathcal{W}$, an arbitrary vector that maximizes the scalarization $O_w$. Formally:
    \[
    \mathcal{S} \;:=\; \bigcup_{w \in \mathcal{W}} \big\{ v_w \big\}, \quad \text{where } v_w \in \arg\max_{v \in \mathcal{V}} O_w(v).
    \]
    Then $\mathcal{S}$ contains all Pareto-optimal points and consists solely of points on the Pareto performance frontier:
    \[ \mathcal{V}^{PO} \subseteq \mathcal{S} \subseteq \ppf. \]

    \item \textbf{Exact Characterization via Norm-Regularization:} \label{item:exactness}
    Let $\mathcal{M}$ be the set constructed by selecting, for every preference $w \in \mathcal{W}$, a vector $v^*_w$ obtained via a two-stage optimization process: first maximizing the Chebyshev scalarization $O_w$, and then maximizing the $\ell_2$-norm among the resulting candidates for tie-breaking. Formally:
    \[
    \mathcal{M} \;:=\; \bigcup_{w \in \mathcal{W}} \big\{ v^*_w \big\},
    \]
    where $v^*_w$ is an arbitrary solution to the nested optimization:
    \[
    v^*_w \in \arg\max_{v} \left\{ \|v\|_2 \;\Big|\; v \in \arg\max_{u \in \mathcal{V}} O_w(u) \right\}.
    \]
    Then $\mathcal{M}$ coincides with the Pareto-optimal set:
    \[ \mathcal{M} = \mathcal{V}^{PO}. \]
\end{enumerate}
\end{theorem}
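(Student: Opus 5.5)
The proof reduces both parts of Theorem~\ref{thm:pareto_coverage_characterization} to Theorem~\ref{thm:cheb_suff_nece}, adding one short argument for the norm tie-break. Throughout, I assume (as Theorem~\ref{thm:cheb_suff_nece} implicitly does) that $\V$ is bounded and that the maxima in question are attained, e.g.\ because $\V$ is compact or finite. Otherwise $v_w$ and $v^*_w$ would not be well defined.

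\textbf{Coverage.} For $\mathcal{S}\subseteq\ppf$, each $v_w$ is by construction a maximizer of $O_w$ over $\V$, so Theorem~\ref{thm:cheb_nece} places it in $\ppf$ directly. For $\V^{PO}\subseteq\mathcal{S}$, fix $V^*\in\V^{PO}$. Theorem~\ref{thm:cheb_suff} supplies a preference $w$, namely $w=V^*/\|V^*\|_2$ when $V^*\neq\bm 0$, for which $V^*$ is the unique maximizer of $O_w$. Hence the arbitrary choice $v_w$ is forced to equal $V^*$, and $V^*\in\mathcal{S}$.

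The degenerate case $V^*=\bm 0$ must be handled separately. It is Pareto-optimal only if $\V=\{\bm 0\}$, which Definition~\ref{def:ppf} excludes. So I would remark that $V^*\neq\bm 0$ may be assumed.

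\textbf{Exact characterization, $\V^{PO}\subseteq\mathcal{M}$.} Take the same $w$ as above. The first-stage argmax is the singleton $\{V^*\}$, so the $\ell_2$ tie-break is trivial and $v^*_w=V^*$.

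\textbf{Exact characterization, $\mathcal{M}\subseteq\V^{PO}$.} This is the main step. Suppose $v^*_w$ is not Pareto-optimal, so some $u\in\V$ satisfies $v^*_w\prec u$. I would then carry out three checks.
\begin{enumerate}
\item Because $v^*_w\preceq u$ and $O_w$ is monotone under weak dominance (each ratio $u_i/w_i\ge v_i/w_i$), we get $O_w(u)\ge O_w(v^*_w)$. Since $v^*_w$ already attains the maximum, $u$ is also a first-stage maximizer.
\item All entries are non-negative, $u\succeq v^*_w$ componentwise, and $u\neq v^*_w$, so some coordinate satisfies $u_j>(v^*_w)_j\ge 0$. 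Therefore $\|u\|_2^2>\|v^*_w\|_2^2$.
\item This contradicts the choice of $v^*_w$ as the norm maximizer among the first-stage candidates.
\end{enumerate}
Hence every element of $\mathcal{M}$ is Pareto-optimal.

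I expect no serious obstacle. The delicate points are the following.
\begin{itemize}
\item Existence of the nested maximizers requires the first-stage argmax set to be closed and bounded. This follows from continuity of $O_w$ and of the norm when $\V$ is compact.
\item Monotonicity in the first check relies only on coordinates with $w_i>0$, so zero weights cause no issue.
\end{itemize}
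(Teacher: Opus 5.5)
Your proof is correct and follows the paper's argument step for step: Sufficiency and Necessity give coverage, the preference aligned with $V^*$ gives a singleton first-stage argmax for $\mathcal{V}^{PO}\subseteq\mathcal{M}$, and a strictly dominating vector winning the $\ell_2$ tie-break gives the contradiction for $\mathcal{M}\subseteq\mathcal{V}^{PO}$. Your version is marginally tighter in three respects: you only need some dominating $u\in\mathcal{V}$, whereas the paper asserts a Pareto-optimal dominator exists, which quietly needs compactness; you state the compactness assumption explicitly; and you dispose of the degenerate case $V^*=\bm 0$.
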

\begin{proof}[Proof Sketch]

\begin{enumerate}[label=\arabic*., leftmargin=*]
    \item \textbf{Scalarization-Induced Coverage:} This property is a direct consequence of the Sufficiency and Necessity of the Chebyshev scalarization established in Theorem~\ref{thm:cheb_suff_nece}. Sufficiency guarantees that any maximizer chosen for $\mathcal{S}$ naturally falls on the Pareto performance frontier ($\mathcal{S}\subseteq \ppf)$. Necessity ensures that every true Pareto-optimal point is the \textit{unique} maximizer for its perfectly aligned preference vector, guaranteeing its inclusion in the coverage set ($\mathcal{V}^{PO}\subseteq \mathcal{S}$).
    
    \item \textbf{Exact Characterization:} To prove $\mathcal{M} = \mathcal{V}^{PO}$, we must show that the $\ell_2$-norm tie-breaker correctly filters out dominated points on the performance frontier. If the two-stage process selected a point that is not Pareto-optimal, a dominating Pareto-optimal point would exist. This dominating point would achieve at least the same scalarization score (passing the first stage) but possess a strictly larger $\ell_2$-norm, thereby winning the tie-breaker and contradicting the selection of the inferior point. Conversely, Pareto-optimal points are unique maximizers under their aligned preferences, meaning they trivially pass the tie-breaking phase.
\end{enumerate}
See Appendix~\ref{app:thm:pareto_coverage_characterization} for a detailed proof.
\end{proof}



\section{Bellman Update and Convergence}\label{sec:bellman_update}

\noindent \textbf{Main Idea:} Theorem~\ref{item:exactness} allows us to characterize the set of Pareto-optimal values via preferences and the two-stage optimization problem involving Chebyshev scalarization in the first stage and norm-maximization in the second stage.
By restricting the domain to value vectors \textit{achievable} by policies, we  characterize the set of Pareto-optimal $Q$-values obtained by starting from a state $s$, taking an action $a$, and subsequently behaving Pareto-optimally with respect to a preference $w$.
We formalize this concept in Definition~\ref{def:optimal_pref_act_val_fun}.

To formulate the preference-conditioned Bellman operator, we leverage the coverage guarantee from Theorem~
\ref{item:coverage}.
Crucially, the Bellman update utilizes only the maximization procedure involving the Chebyshev scalarization, since that is sufficient to ensure coverage.
Building on this formulation, we prove that the operator satisfies the \textbf{enveloping property}: the estimated value vectors upper-bound the Pareto-optimal values across the preference space.
We demonstrate that this property is \textbf{invariant} under the Bellman update: if the initialization envelopes the frontier, subsequent updates maintain this upper bound while monotonically approaching the Pareto performance frontier from above.
This monotonic convergence guarantees that the final value estimates cover all Pareto-optimal values.

Finally, to isolate the Pareto-optimal subset, we apply the two-stage optimization procedure to the converged $Q$-value vectors.
To recover the corresponding Pareto-optimal policies, we rely on Theorem~\ref{thm:rec_decomp}, which establishes that Pareto-optimal values are recursive.
This recursivity allows us to recover the optimal policies by acting greedily with respect to the converged $Q$-values.

Now we define the optimal preference-action-value function as follows.

\begin{definition}[Optimal Preference Action-Value Function]
\label{def:optimal_pref_act_val_fun}
For a preference vector $w \in \mathcal{W}$, the \emph{Optimal Preference Action-Value Function} $\mathbf{Q}^* : \mathcal{S} \times \mathcal{A} \times \mathcal{W} \rightarrow \mathbb{R}^d$
maps a state $s \in \mathcal{S}$, an action $a \in \mathcal{A}$, and a preference vector $w$ to the expected vector-valued return obtained by taking action $a$ in state $s$ and subsequently following an optimal policy. It is defined as
\begin{equation*}
\mathbf{Q}^*(s,a,w)
:= \mathbf{R}(s,a) + \gamma \, \mathbf{V}^*_w\big(p(\cdot \mid s,a)\big),
\end{equation*}
where the value function $\mathbf{V}^*_w$ is determined via a \textbf{two-stage optimization procedure}:
\begin{enumerate}[leftmargin=*, nosep]
    \item \textbf{Chebyshev optimization:} Identify the  policies that maximize the weighted Chebyshev scalarization $O_w(\cdot)$.
    \item \textbf{Tie-breaking:} Among the resulting optimal value vectors, select one with maximum $\ell_2$-norm.
\end{enumerate}
Formally, this is expressed as
\begin{equation*}
\begin{split}
    \mathbf{V}^*_w(p(\cdot \mid s,a)) \in \arg\max_{V} \Big\{ \|\mathbf{V}\|_2 \;\big|\; \mathbf{V} = \mathbf{V}^\pi(p(\cdot \mid s,a)), \\
    \pi \in \arg\max_{\pi' \in \Pi} O_w(\mathbf{V}^{\pi'}(p(\cdot \mid s,a))) \Big\}.
\end{split}
\end{equation*}
\end{definition}

As proved in Theorem
~\ref{item:exactness}, this two-stage optimization allows us to characterize the set of Pareto-optimal points via preferences.
The following theorem establishes that the preference-conditioned optimal action-value function $\mathbf{Q}^*$ admits a recursive decomposition.


\begin{theorem}[Recursive Decomposition of $Q^*$]\label{thm:rec_decomp}
Let $\mathbf{Q}^*: S \times A \times \mathcal{W} \to \mathbb{R}^d_{\geq 0}$ be the optimal preference action-value function. For any state $s \in S$, action $a \in A$, and preference vector $w \in \mathcal{W}$, $\mathbf{Q}^*$ satisfies the recursive relationship:

\scalebox{0.95}{$\mathbf{Q}^*(s, a, w) = R(s, a) + \gamma \E\limits_{s' \sim p(\cdot \mid s, a)} \left[ \mathbf{Q}^*(s', \hat{A}(s'), \hat{\mathcal{W}}(s')) \right]$}

where the functions $\hat{A}: S \to A$ and $\hat{\mathcal{W}}: S \to \mathcal{W}$ map the next-states to actions and preferences, respectively.
\end{theorem}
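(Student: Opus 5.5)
We prove the statement by exhibiting the maps $\hat{A}$ and $\hat{\mathcal{W}}$ explicitly from an optimal policy attaining $\mathbf{Q}^*(s,a,w)$. Fix $s,a,w$. By Definition~\ref{def:optimal_pref_act_val_fun}, there is a deterministic (history-dependent) policy $\pi^*$ with $\mathbf{V}^{\pi^*}(p(\cdot\mid s,a)) = \mathbf{V}^*_w(p(\cdot\mid s,a))$. Since $\pi^*$ is selected by the two-stage procedure, Theorem~\ref{item:exactness} (applied to the achievable set $\mathcal{V}$ of value vectors from the distribution $p(\cdot\mid s,a)$) shows that this vector is Pareto-optimal among values achievable from $p(\cdot\mid s,a)$. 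Unrolling one step gives
\[
\mathbf{V}^{\pi^*}(p(\cdot\mid s,a)) = \sum_{s'} p(s'\mid s,a)\big(R(s',a_{s'}) + \gamma \mathbf{V}^{\pi^*(\cdot\mid s',a_{s'})}(p(\cdot\mid s',a_{s'}))\big),
\]
where $a_{s'} := \pi^*(s')$ and the continuation policy is conditioned on the one-step history. We set $\hat{A}(s') := a_{s'}$. Define $U_{s'} := R(s',a_{s'}) + \gamma \mathbf{V}^{\pi^*(\cdot\mid s',a_{s'})}(p(\cdot\mid s',a_{s'}))$, which is the value from $s'$ of a deterministic policy.

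The key step, and the main obstacle, is to show that each $U_{s'}$ with $p(s'\mid s,a)>0$ is Pareto-optimal among values achievable from $s'$. We argue by contradiction. Suppose some $s'$ with positive probability admits a deterministic policy $\sigma$ with $\mathbf{V}^{\sigma}(s') \succ U_{s'}$. Build a new history-dependent deterministic policy that follows $\pi^*$ from every next state except $s'$, where it switches to $\sigma$. This policy is legal because policies may depend on history, and the first transition is part of the history. Its value from $p(\cdot\mid s,a)$ is $\mathbf{V}^{\pi^*} + p(s'\mid s,a)(\mathbf{V}^{\sigma}(s') - U_{s'})$, which strictly dominates $\mathbf{V}^{\pi^*}(p(\cdot\mid s,a))$. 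That contradicts its Pareto-optimality. The points needing care are that the splice preserves determinism and that finite $S$ (or countable $S$ with positive probability) makes the linear decomposition valid.

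Now set $\hat{\mathcal{W}}(s') := U_{s'}/\|U_{s'}\|_2$, with an arbitrary $w$ if $U_{s'}=\mathbf{0}$. By Theorem~\ref{thm:cheb_suff_nece}, Necessity, $U_{s'}$ is the unique maximizer of $O_{\hat{\mathcal{W}}(s')}$ over the values achievable from $s'$. The same holds for the quantity defining $\mathbf{Q}^*(s',\hat{A}(s'),\cdot)$, since that quantity ranges over the continuations after fixing action $\hat{A}(s')$.

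This last point needs one more argument. Define $\mathbf{Q}^*(s',a',w')$ by maximizing over continuations after the fixed first action $a'$. We must check that the unique maximizer over all policies from $s'$, which begins with $\hat{A}(s')$, is also selected by the two-stage procedure over continuations from $p(\cdot\mid s',\hat{A}(s'))$. Write $c := \mathbf{V}^{\pi^*(\cdot\mid s',a_{s'})}(p(\cdot\mid s',a_{s'}))$, so $U_{s'} = R(s',a_{s'}) + \gamma c$. If some continuation $c'$ satisfied $O_{w'}(R+\gamma c') \ge O_{w'}(R+\gamma c)$ with $c'\neq c$, then $R+\gamma c'$ would be another maximizer over all policies, contradicting uniqueness. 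Hence the continuation restricted to first action $a_{s'}$ is the unique maximizer. Here we must check that Definition~\ref{def:optimal_pref_act_val_fun} is read with $O_w$ applied to the full $Q$-vector $R+\gamma V$; if it is instead applied to the continuation $V$ alone, we pick $\hat{\mathcal{W}}(s')$ proportional to $c$ and apply the same Pareto-optimality argument to $c$.

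With this in place, $\mathbf{Q}^*(s',\hat{A}(s'),\hat{\mathcal{W}}(s')) = U_{s'}$, and substituting into the unrolled expression gives exactly the claimed recursion.
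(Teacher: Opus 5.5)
Your argument is correct and is essentially the paper's proof. You use a stitching contradiction, against the Pareto-optimality that Theorem~\ref{item:exactness} gives the two-stage maximizer, to show the tail value at each reachable $s'$ is Pareto-optimal, and then Necessity with the preference aligned to that tail yields $\mathbf{Q}^*(s',\hat{A}(s'),\hat{\mathcal{W}}(s'))=U_{s'}$. Because Definition~\ref{def:optimal_pref_act_val_fun} applies $O_w$ to the continuation $\mathbf{V}^{\pi}(p(\cdot\mid s',a'))$ alone, your fallback branch (taking $\hat{\mathcal{W}}(s')$ proportional to $c$ rather than to $U_{s'}$) is the one that actually applies, and it is exactly the paper's choice of ``the preference aligned with the continuation value.''
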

\begin{proof}[Proof Sketch] Let $\pi$ be a policy achieving $\mathbf{Q}^*(s,a,w)$. The tail policy induced by $\pi$ at every next state $s'$ must be Pareto-optimal. If it were not, there would exist a strictly dominating policy $\pi'$ at $s'$. We could then construct a stitched policy $\pi''$ that follows $\pi'$ at $s'$ and $\pi$ otherwise. This policy $\pi''$ would strictly dominate $\pi$ at $(s,a)$ (due to additive nature of expectation), which contradicts the Pareto-optimality of $\pi$ achieving $Q^*(s,a,w)$. Thus, the tail value of $\pi$ at $s'$ must itself be the Pareto-optimal.
Theorem~\ref{thm:pareto_coverage_characterization} guarantees the existence of a preference $w'$ recovering this value. 
By setting $\hat{A}(s') = \pi(s')$ and $\hat{\mathcal{W}}(s') = w'$, we recover the recursive form in terms of $Q^*(s',\hat{A}(s'),\hat{\mathcal{W}}(s'))$. (See Appendix~\ref{app:thm:rec_decomp} for details.)
\end{proof}

This recursive decomposition demonstrates that the Pareto-optimal action-value function $Q^*$ is greedy with respect to the current preference $w$. 
Specifically, after taking an action $a$ from the current state $s$, the agent can optimally navigate the environment by greedily selecting the subsequent action $\hA(s')$ and preference $\W(s')$ at each successive state $s'$. This ensures that the expected vector-valued rewards are maximized along the original preference $w$ thus capturing the Pareto-optimal trade-offs.

Leveraging the recursive structure established in Theorem \ref{thm:rec_decomp}, we introduce the Preference Bellman Operator $\T$ (Definition \ref{def:pref_bell_operator}) to formalize the iterative update process required to recover the Pareto-optimal values. The relationship between the iterative application of this operator and the Pareto performance frontier is established in Theorem \ref{thm:bellman_env_conv}. Specifically, we prove that, when initialized with an upper-bounding $Q$-function, the sequence of functions generated by the repeated application of $\T$ monotonically converges to a coverage set of Pareto-optimal values on the $\ppf$.



 

\begin{definition}[Preference Bellman Optimality Operator]\label{def:pref_bell_operator}
Let $\mathcal{Q}$ be the space of bounded vector-valued functions mapping $\mathcal{S} \times \mathcal{A} \times \mathcal{W} \to \mathbb{R}^d_{\geq 0}$. We define the \textit{Preference Bellman Optimality Operator} $\mathcal{T}: \mathcal{Q} \to \mathcal{Q}$ as follows. 

For any $Q \in \mathcal{Q}$, state $s \in \mathcal{S}$, action $a \in \mathcal{A}$, and preference vector $w \in \mathcal{W}$:

\scalebox{0.9}{
$(\mathcal{T} Q)(s, a, w) := R(s, a) + \gamma \E_{s'\sim p(\cdot\mid s,a)}\Big[ Q\big(s', \hat{a}(s'), \hat{w}(s')\big) \Big],$
}
where the next-state action $\hat{a}(s')$ and preference $\hat{w}(s')$ are chosen greedily to maximize the scalarized value at the next state:
\[
\scalebox{0.95}{
$\displaystyle
(\hat{a}, \hat{w}) \in \operatorname*{arg\,max}_{\substack{\phi_A:S\rightarrow A \\ \phi_\W : S\rightarrow \W }} O_w\Big( \E_{s'\sim p(\cdot\mid s,a)} \big[Q(s', \phi_A(s'), \phi_\W(s'))\big]\Big)
$}
\]
\end{definition}
\noindent The Preference Bellman Operator $\mathcal{T}$ extends the standard notion of optimality to the multi-objective setting by treating the preference vector as a dynamic component of the decision process. Rather than simply evaluating an action's immediate reward, the operator performs a \textit{preference-conditioned backup}.

When evaluating the state-action pair $(s,a)$ under a current preference $w$, the operator looks ahead to the next state $s'$ and identifies not only the optimal next action $\hat{a}$ but also the optimal future preference $\hat{w}$. This $\hat{w}$ acts as a steering mechanism: it selects the specific point on the next state's frontier that contributes most effectively to maximizing the scalarized return for the \textit{current} preference $w$.

Next, we prove the convergence of the Bellman operator in Theorem~\ref{thm:bellman_env_conv}.

\begin{theorem}\label{thm:bellman_env_conv}
    Let us define $Q(s,a,w):= \R$, then for every $n\in \mathbb{N}$, and each state $s$, action $a$, the following three conditions hold:
    \begin{enumerate}[label=\thetheorem.\arabic*]
        \item \label{prop:opt_coverage} \textbf{Upper-bound}:  Every Pareto-optimal value is upper-bounded by an estimate. Specifically, for every preference $w$ there exists a preference $w'$ such that: $$Q^*(s,a,w)\preceq (\T^nQ)(s,a,w')$$
        \item \label{prop:pareto_envelope} \textbf{Pareto Envelope}: The estimates lie on or above the Pareto performance frontier. Formally, for all preferences $w$ the estimate is never dominated in all components by a Pareto-optimal value, i.e., \begin{gather*}
            \forall w, \nexists w' \text{ such that } \\
            (\boldsymbol{\mathcal{T}}^nQ)(s,a,w)_i < Q^*(s,a,w')_i, \quad \forall i=1,\dots,d
        \end{gather*}
        \item \label{prop:asymp_real} \textbf{Asymptotic Convergence}: Every estimate converges to the value of a realizable policy. Formally, for all preferences $w$ there is a policy $\pi \in \Pi$ such that the gap between the estimate and the policy's value vanishes: $$\bm{0}\preceq (\T^nQ)(s,a,w) - V^\pi(s) \preceq \gamma^n \R$$
    \end{enumerate}
\end{theorem}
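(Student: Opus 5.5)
We will prove all three properties simultaneously by induction on $n$. The case $n=0$ is immediate from the constant initialization $Q\equiv\R$. Since every achievable value is bounded by $\R$, we have $Q^*(s,a,w)\preceq \R$, which gives the upper bound (taking $w'=w$) and also the envelope property. For asymptotic convergence we may take any policy $\pi$, since $\bm 0\preceq \R-V^\pi(s)\preceq \R$ (values are non-negative because rewards are). Throughout, it is convenient to track an explicit policy object: we interpret $(\T^nQ)(s,a,w)$ as the value of the finite-horizon, history-dependent deterministic policy $\pi_n^{s,a,w}$ obtained by unrolling the greedy selections $(\hat a,\hat w)$ for $n$ steps, plus a terminal vector $\gamma^n\R$ after that horizon.

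\textbf{Step 1 (upper bound).} Assume the statement holds at level $n$. Fix $w$ and use Theorem~\ref{thm:rec_decomp} to write $Q^*(s,a,w)=R(s,a)+\gamma\E_{s'}[Q^*(s',\hA(s'),\hat\W(s'))]$. By the induction hypothesis, for each $s'$ there is a $w''(s')$ with $Q^*(s',\hA(s'),\hat\W(s'))\preceq(\T^nQ)(s',\hA(s'),w''(s'))$. Taking expectations gives a candidate pair of maps $(\phi_A,\phi_\W)=(\hA,w'')$ whose expected next-state estimate dominates the tail of $Q^*$. The difficulty is that $\T$ uses the greedy maximizer rather than this candidate. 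We resolve it by choosing the outer preference $w'$ aligned with the candidate vector $z:=\E_{s'}[(\T^nQ)(s',\hA(s'),w''(s'))]$, namely $w'=z/\|z\|_2$. As in the necessity part of Theorem~\ref{thm:cheb_suff_nece}, any maximizer of $O_{w'}$ must then weakly dominate $z$. Hence $(\T^{n+1}Q)(s,a,w')\succeq R(s,a)+\gamma z\succeq Q^*(s,a,w)$.

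\textbf{Step 2 (envelope).} Suppose some $w'$ satisfies $Q^*(s,a,w')_i>(\T^{n+1}Q)(s,a,w)_i$ for all $i$. Step 1 gives a $w''$ whose estimate dominates $Q^*(s,a,w')$, so the next-state expected vector at $w''$ would be strictly larger in every coordinate than the one chosen greedily for $w$. We then need to show that some $(\phi_A,\phi_\W)$ attains a strictly larger $O_w$ than the greedy choice, which contradicts greedy optimality. The argument mirrors the sufficiency part of Theorem~\ref{thm:cheb_suff_nece}: a vector that is strictly larger in all coordinates has strictly larger $O_w$. The one subtlety is that the dominating estimate is attained at the level of the $(s,a)$ backup, so we must subtract $R(s,a)$ and divide by $\gamma$ (assuming $\gamma>0$) to transfer the strict inequality to the next-state expectation.

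\textbf{Step 3 (realizability).} We take $\pi$ to be any policy that follows $\pi_{n}^{s,a,w}$ for the first $n$ steps and behaves arbitrarily afterwards; it is history-dependent and deterministic. The first $n$ discounted rewards coincide exactly with those in the estimate. The remaining terms are $\gamma^n\R$ in the estimate and $\gamma^n\E[V^{\text{tail}}]$ in $V^\pi$. Since $\bm 0\preceq V^{\text{tail}}\preceq\R$, the difference lies between $\bm 0$ and $\gamma^n\R$. This can be formalized by a straightforward induction using linearity of $\T$ for fixed greedy choices.

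I expect the main obstacle to be Step 1. Aligning the outer preference with the candidate vector $z$ must be done carefully when $z$ has zero components, because $O_w$ takes its minimum only over indices with $w_i>0$. A second concern is whether the argmax in the definition of $\T$ is attained. We handle this with a compactness/continuity argument, or we replace it by an $\epsilon$-maximizer.
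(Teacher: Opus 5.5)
Your proposal is correct and follows essentially the same route as the paper. Both arguments prove the upper bound by induction via Theorem~\ref{thm:rec_decomp} and aligned dominance, prove the envelope by contradiction using strict monotonicity of $O_w$ under coordinatewise-strict dominance, and prove realizability by unrolling the greedy trace with a terminal $\gamma^n\R$ term. There are two minor differences. First, you align $w'$ with the attainable estimate vector $z$, so Lemma~\ref{lem:aligned_dominance} applies verbatim, whereas the paper aligns with $\E_{s'}[Q^*(s',a'',w'')]$ and tacitly needs monotonicity of $O_{w'}$ to bridge to the estimates. Second, in the envelope step you invoke the upper bound at the current pair $(s,a)$, as in the main-text sketch, rather than at the next states as in the appendix.
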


\begin{proof}[Proof Sketch]
\textbf{Upper-bound:} We proceed by induction. The base case holds by the initialization $Q \succeq Q^*$. For the inductive step, we invoke the Recursive Decomposition theorem (Theorem~\ref{thm:rec_decomp}), which expresses $Q^*$ as an immediate reward plus a discounted next-state value. By the inductive hypothesis, the next-state $Q^*$ is upper-bounded by the previous iterate $\mathcal{T}^{n-1}Q$. Since the operator $\mathcal{T}$ performs a maximization over potential next-state updates, the update using the upper-bounding estimate yields a value at least as high as $Q^*$, preserving the inequality $Q^*(s,a,w)\preceq (\mathcal{T}^nQ)(s,a,w')$.

\textbf{Pareto Envelope:} We argue by contradiction. Suppose there exist preferences $w$ and $w'$ such that the estimate is strictly less than an optimal value in all components: $(\mathcal{T}^nQ)(s,a,w)_i < Q^*(s,a,w')_i$ for all $i=1,\dots,d$. 
From the Upper-bound property (Theorem~\ref{prop:opt_coverage}), we know $Q^*(s,a,w')$ is itself upper-bounded by some estimate $\mathcal{T}^nQ(s,a,w'')$. By transitivity, this implies $(\mathcal{T}^nQ)(s,a,w)_i < (\mathcal{T}^nQ)(s,a,w'')_i$ for all $i$. In other words, the estimate for $w''$ is strictly greater than the estimate for $w$ in every component. 
Subtracting $R(s,a)$ and dividing by $\gamma$ implies that the expected continuation value used in the update for $w''$ is strictly greater component-wise than that for $w$. Since every component is strictly larger, the resulting Chebyshev scalarization is strictly higher. This contradicts the operator's maximization step, which would have selected the higher update.

\textbf{Asymptotic Convergence:} The value $\mathcal{T}^nQ(s,a,w)$ represents the accumulation of rewards from $n$ steps of optimal updates plus a residual initialization term bounded by $\gamma^n \R$.
We construct a deterministic, non-stationary, history-dependent policy $\pi$ that exactly mimics the sequence of action updates chosen by the operator $\mathcal{T}$ for the first $n$ steps, and acts arbitrarily thereafter. 
The value of this policy $V^\pi(s)$ matches the estimate $\mathcal{T}^nQ(s,a,w)$ exactly in the first $n$ terms, with the difference arising solely from the tail rewards after step $n$. 
Since rewards are bounded, this difference is bounded by $\gamma^n \mathbb{R}$, which vanishes as $n \to \infty$.

(See Appendix~\ref{app:sec:conv_proof} for the detailed proof.)
\end{proof}


\textbf{Discussion of Theorem~\ref{thm:bellman_env_conv}}: The significance of this theorem lies in its guarantee that the preference-conditioned Bellman operator monotonically converges toward the Pareto performance frontier. Unlike standard value iteration, which converges to a single fixed point, our operator converges to a set of values that fully characterizes the frontier.

Condition~\ref{prop:opt_coverage} (Upper-bound) acts as a safety guarantee, ensuring that the iterative updates never discard the Pareto-optimal values; the estimates remain sufficiently expressive to cover all optimal trade-offs.

Condition~\ref{prop:pareto_envelope} (Pareto Envelope) establishes the geometric relationship of the updates: the estimates approach the frontier strictly from the non-dominated side, creating an ``envelope" that tightens with every iteration. 

Condition~\ref{prop:asymp_real} (Asymptotic Convergence) bridges the gap between estimation and execution. It ensures that as $n\rightarrow \infty$, the gap between the estimates and achievable values vanishes, guaranteeing that the limiting values of the sequence represent executable policies.

In the next section, we discuss how to obtain Pareto-optimal policies along a preference $w$.

\section{Obtaining Pareto-optimal Policies}
\label{sec:policy}


The convergence and recursive properties of the Preference Bellman Operator $\mathcal{T}$ enable us to recover policies that approximate all Pareto-optimal values. Crucially, as established by the recursive decomposition in Theorem~\ref{thm:rec_decomp}, these policies can be executed greedily. 
Rather than tracking the entire execution history, the agent requires only a constant-size memory consisting of the most recent transition data, namely the previous state $s_{i-1}$, action $a_{i-1}$, and reward $r_{i-1}$, alongside the current guiding preference $w_i$.
At step $i$, upon observing the current state $s_i$, the agent greedily selects an action-preference pair $(a_i, w_{i+1})$ based on the Q-estimates.
This pair dictates the immediate action to execute and the updated preference parameter required to guide the policy from the subsequent state. The memory is then shifted forward accordingly. Algorithm~\ref{alg:high_level_pareto_policy} provides the high-level pseudocode for this execution, starting from an initial state $s_0$ and initial preference $w_{init}$ (with the detailed procedure, Algorithm~\ref{alg:pareto_execution}, deferred to Appendix~\ref{app:sec:approx_POC}). Next, we establish the theoretical guarantees for the policies generated by this algorithm.

\begin{algorithm}[tb]
  \caption{High-Level Pareto-Optimal Policy Execution}
  \label{alg:high_level_pareto_policy}
  \begin{algorithmic}[1]
    \REQUIRE Value estimate $\mathcal{T}^n Q$, Next state distribution $p(\cdot \mid s,a)$, Initial state $s_0$, Initial Preference $w_{\text{init}}$
    
    \STATE \textbf{Initialization:} Compute an initial action $a_0$ and next preference $w_1$ that maximize the Chebyshev scalarization along preference $w_{init}$ of  Pareto-optimal estimates $\mathcal{T}^n Q(s_0, a_0, w_1)$.
    \STATE Take action $a_0$ in the environment and observe the reward $r_0$.
    \STATE Set the history: $s_{prev} \leftarrow s_0$, $a_{prev} \leftarrow a_0$, reward $r_{prev} \leftarrow r_0$, and next preference target $w_{target} \leftarrow w_1$.
    
    \WHILE{not termination}
        \STATE \textbf{Alignment:} Align the target preference $w_{target}$ with the direction of the expected tail reward estimate.
        \STATE \textbf{Optimization:} For the current state $s_i$, obtain the optimal action $a_i$ and next preference $w_{i+1}$ that maximize the Chebyshev scalarization of the expected reward estimates along the aligned preference $w_{target}$.
        \STATE \textbf{Execution:} Take the optimal action $a_i$ in the environment and observe the reward $r_i$.
        \STATE \textbf{Update:} Update the history with the new state $s_{prev} \leftarrow s_i$, action $a_{prev} \leftarrow a_i$, reward $r_{prev} \leftarrow r_i$, and next preference target $w_{target} \leftarrow w_{i+1}$.
    \ENDWHILE
  \end{algorithmic}
\end{algorithm}

\begin{theorem}[Approximate Pareto Coverage]\label{thm:app_pareto_coverage}
    Let $\mathcal{C}^n_{PO} = \{ \pi_w \mid w \in \mathcal{W} \}$ denote the set of policies generated by all preferences using the $n$-th Bellman update by Algorithm~\ref{alg:high_level_pareto_policy}. This set satisfies the properties of approximate Pareto coverage. 
    

    \begin{enumerate}[label=\thetheorem.\arabic*]
        \item \textbf{Approximate Coverage:} 
        Every Pareto-optimal policy is approximated by some policy in $\mathcal{C}^n_{PO}$ such that the \textbf{suboptimality gap} is bounded. Formally:
        \[
        \begin{aligned}
            &\forall \pi \in \Pi, \exists w \in \mathcal{W} \text{ such that } \\
            & V^{\pi}(s_0) \preceq V^{\pi_w}(s_0) + \gamma^n \mathcal{R}
        \end{aligned}
        \]

        \item \textbf{Approximate Parsimony:} 
        Every policy in $\mathcal{C}^n_{PO}$ satisfies \textbf{$\bm\gamma^{\bm n}\bm\R$-Pareto optimality}; that is, no other policy can strictly dominate it by a margin larger than the approximation bound. Formally:
        \[
        \begin{aligned}
            &\forall w \in \mathcal{W}, \forall \pi \in \Pi \quad \text{ we get} \\
            &V^{\pi}(s_0) \nsucc V^{\pi_w}(s_0) + \gamma^n \mathcal{R}
        \end{aligned}
        \]
    
    \end{enumerate}

\end{theorem}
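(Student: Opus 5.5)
The plan is to route both parts through a single bridging lemma. The lemma says that the policy $\pi_w$ produced by Algorithm~\ref{alg:high_level_pareto_policy} from the $n$-th iterate realizes the estimate it was launched from, up to the tail error. Concretely, let $(a_0,w_1)$ be the initial action-preference pair chosen for $w_{init}=w$. I want to show
\[
\bm 0 \preceq (\T^nQ)(s_0,a_0,w_1) - V^{\pi_w}(s_0) \preceq \gamma^n\R .
\]
This is the analogue of Theorem~\ref{prop:asymp_real}, but for the specific memory-one policy rather than an arbitrary history-dependent one.

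\textbf{The bridging lemma (main obstacle).} I would prove it by unrolling $\T^nQ = \T(\T^{n-1}Q)$ step by step, as in the proof of Theorem~\ref{prop:asymp_real}. Each level of the recursion selects greedy maps $(\hat a,\hat w)$ on next states. The key claim is that the Alignment and Optimization steps of the algorithm, given only $(s_{prev},a_{prev},r_{prev},w_{target})$ and the observed $s_i$, recover exactly the pair $(\hat a(s_i),\hat w(s_i))$ used by the operator at that level. The argument for this claim goes as follows. The Alignment step rescales $w_{target}$ to the direction of the expected tail estimate $\E_{s'}[(\T^{k-1}Q)(s',\hat a(s'),\hat w(s'))]$, obtained as $((\T^kQ)(s_{prev},a_{prev},w_{target}) - r_{prev})/\gamma$. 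Under that aligned preference, the greedy maximizer reproduces the same next-state choices; this is where Theorem~\ref{thm:cheb_suff_nece} (necessity via the aligned preference $w=V/\|V\|_2$) is used. Ties can be broken consistently with the operator's own argmax, so the selected continuation is the one whose expectation gives the estimate. Once this is in place, an induction on the depth gives $n$ exact reward layers plus a residual $\gamma^n\mathbb{E}[Q]$, with $\bm 0\preceq Q=\R$. This yields both inequalities, since the true tail of $\pi_w$ after step $n$ lies in $[\bm 0,\gamma^n\R]$. This alignment-consistency step is the delicate one. If exact recovery fails, I would fall back to the weaker fact that the greedy choice under the aligned preference weakly dominates the operator's continuation. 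That is enough for the lower bound $V^{\pi_w}\succeq(\T^nQ)-\gamma^n\R$, which is all that coverage needs.

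\textbf{Approximate coverage.} Fix a policy $\pi$. It is weakly dominated by some Pareto-optimal value at $s_0$, which by Theorem~\ref{thm:pareto_coverage_characterization} equals $Q^*(s_0,a,w^\dagger)$ for some action $a$ and preference $w^\dagger$. By Theorem~\ref{prop:opt_coverage}, there is a $w'$ with $Q^*(s_0,a,w^\dagger)\preceq(\T^nQ)(s_0,a,w')$. Take the launching preference $w$ to be $(\T^nQ)(s_0,a,w')$ normalized. By Theorem~\ref{thm:cheb_suff_nece}, the initialization step of the algorithm then picks an estimate that weakly dominates $(\T^nQ)(s_0,a,w')$ componentwise (or is that estimate itself, with ties broken suitably). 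The bridging lemma then gives
\[
V^\pi(s_0)\preceq (\T^nQ)(s_0,a_0,w_1)\preceq V^{\pi_w}(s_0)+\gamma^n\R .
\]

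\textbf{Approximate parsimony.} Suppose some $\pi$ had $V^\pi(s_0)\succ V^{\pi_w}(s_0)+\gamma^n\R$. Using the upper half of the bridging lemma, $V^{\pi_w}(s_0)+\gamma^n\R\succeq(\T^nQ)(s_0,a_0,w_1)$, so $V^\pi(s_0)$ strictly exceeds the estimate in every component. In the strict-dominance reading, this holds at least with one strict component; to get the strict inequalities needed, I would use the $\gamma^n\R$ slack together with the strict-dominance hypothesis, reading $\succ$ componentwise-strict as in Definition~\ref{def:ppf}. Replace $\pi$ by a Pareto-optimal policy dominating it, which is some $Q^*(s_0,a,w')$. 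This contradicts the Pareto Envelope property, Theorem~\ref{prop:pareto_envelope}. In the non-strict case, I would instead compare Chebyshev scores along $w$ and contradict the maximality in the initialization step.
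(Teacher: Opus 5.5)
Your architecture is the paper's: a simulation bound $\T^nQ(s_0,a_0,w_1)\preceq V^{\pi_w}(s_0)+\gamma^n\R$, then Theorem~\ref{prop:opt_coverage} for coverage and the envelope for parsimony. Your aligned-dominance treatment of the initialization step is more careful than the paper's coverage line. The gap is in how you prove the simulation bound.

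The policy $\pi_w$ of Algorithm~\ref{alg:high_level_pareto_policy} receives only $\T^nQ$ and re-plans against $\T^nQ$ at every step, indefinitely. Algorithm~\ref{alg:pareto_execution} aligns with $\T^nQ(s_t,a_t,w_{\mathrm{target}})-R(s_t,a_t)-\gamma^n\R$ and then maximizes over $\E[\T^nQ(s',\phi_A(s'),\phi_W(s'))]-\gamma^n\R$.

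Your exact-recovery argument instead treats $\pi_w$ as the depth-$n$ unroll of the operator's own selectors through $\T^{n-1}Q,\T^{n-2}Q,\dots$. That is the induced policy behind Theorem~\ref{prop:asymp_real}, which needs a step counter and all lower iterates. For the actual policy:
\begin{enumerate}
\item The search set at each step is not the set the operator maximized over, so nothing forces its maximizer to coincide with the operator's choice.
\item Even within one level, Lemma~\ref{lem:aligned_dominance} gives only $\succeq$. Uniqueness (the necessity part of Theorem~\ref{thm:cheb_suff_nece}) needs the target to be Pareto-optimal, which a Chebyshev maximizer need not be.
\item ``Consistent tie-breaking'' would require remembering the operator's choices, which a memory-one policy cannot do.
\end{enumerate}
So there are no ``$n$ exact reward layers''. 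The lower half $\bm 0\preceq\T^nQ-V^{\pi_w}$ need not hold either, since re-planning may pick a continuation that beats the stale tail in some component by more than the residual.

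Your fallback names the right inequality, but the missing idea is how one-step weak domination becomes a global bound when the horizon never runs out. This is the paper's Theorem~\ref{app:thm:recursive_trajectory_error}:
\begin{enumerate}
\item The stale tail inside $\T^nQ(s_t,a_t,w_{t+1})$ is an $(n-1)$-step value plus $\gamma^{n-1}\R$.
\item The fresh estimate $\T^nQ(s_{t+1},a_{t+1},w_{t+2})$ is an $n$-step value plus $\gamma^n\R$.
\item Aligned dominance, in expectation over $s_{t+1}$, of the fresh value part over the stale one gives $\Delta_t\preceq\gamma\,\E[\Delta_{t+1}]+\gamma^n(1-\gamma)\R$ for $\Delta_t=\T^nQ(s_t,a_t,w_{t+1})-V^{\pi_w}(s_t)$. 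This also requires the stale tail to lie below some point of the new search set.
\item Unrolling, with $\Delta_t\preceq\R$ so that $\gamma^k\E[\Delta_{t+k}]\to\bm 0$, yields $\Delta_0\preceq\gamma^n\R$.
\end{enumerate}
Your finite induction on depth would work only for a variant that uses $\T^{n-i}Q$ at step $i$.

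Parsimony has a second hole. From $V^\pi(s_0)\succ V^{\pi_w}(s_0)+\gamma^n\R\succeq\T^nQ(s_0,a_0,w_1)$ you get only $V^\pi(s_0)\succ\T^nQ(s_0,a_0,w_1)$, possibly with a single strict component. The $\gamma^n\R$ slack is already spent, so you cannot upgrade to componentwise strictness, and re-reading $\succ$ that way changes the statement. The envelope property (Theorem~\ref{prop:pareto_envelope}) also compares $\T^nQ$ and $Q^*$ only at the same action, while the dominating Pareto-optimal policy may start with $a\neq a_0$. Your alternative of comparing Chebyshev scores along $w$ fails as stated, because strict-but-not-strong dominance need not raise $O_w$ (Example~\ref{ex:scalar_blindness}).

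What closes it is the $\ell_2$-norm tie-break in the initialization step. As in your coverage step, take the dominating Pareto-optimal value $Q^*(s_0,a,w^\dagger)\succeq V^\pi(s_0)$ and apply Theorem~\ref{prop:opt_coverage}. This gives some $\T^nQ(s_0,a,w'')\succeq V^\pi(s_0)\succ\T^nQ(s_0,a_0,w_1)$. That estimate has at least the same $O_w$ score and strictly larger norm, contradicting the selection of $(a_0,w_1)$. The paper's one-line envelope argument is equally loose on this strict-versus-strong point.
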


\begin{proof}[Proof Sketch]
The argument depends on bounding the performance gap between the theoretical $n$-step Bellman estimate $\mathcal{T}^n Q(s,a,w)$ and the true expected value $V^{\pi_w}(s)$ of the policy executing the greedy unrolling procedure starting from preference $w$.

\begin{enumerate}[label=\arabic*., leftmargin=*]
    \item \textbf{Recursive Trajectory Bound:} At each step of execution, the agent greedily re-plans using the full $n$-step horizon aligned with its future expected return. By unrolling the recursive error over the infinite horizon, the maximum performance drop is bounded by the residual term:
    $$\mathcal{T}^n Q(s,a,w) \preceq V^{\pi_w}(s) + \gamma^n \mathcal{R}$$
    
    \item \textbf{Approximate Coverage:} We know from the fundamental properties of the scalarized Bellman operator that the value of any true Pareto-optimal policy $\pi^*$ is weakly dominated by the theoretical $n$-th Bellman estimate for \textit{some} specifically aligned preference $w'$ added to a small residual term $\gamma^n\R$. Chaining this bound with our execution bound guarantees that the induced policy $\pi_{w'}$ approximates the true optimal policy $\pi^*$ within the residual gap:
    $$V^{\pi^*}(s) \preceq \mathcal{T}^n Q(s,a,w') \preceq V^{\pi_{w'}}(s) + \gamma^n \mathcal{R}$$
    
    \item \textbf{Approximate Parsimony:} Since the estimates generated by the Bellman operator cannot be strictly dominated by any policy, and the executed policy $\pi_w$ successfully recovers this estimate up to the $\gamma^n \mathcal{R}$ residual margin, no alternative policy exists that can strictly dominate $V^{\pi_w}(s)$ by a margin larger than $\gamma^n \mathcal{R}$.
\end{enumerate}
A detailed proof is provided in Appendix~\ref{app:thm:app_pareto_coverage}.
\end{proof}


In summary, our framework guarantees that by utilizing the Chebyshev-based Bellman operator to converge to accurate Q-value estimates, and subsequently filtering these estimates to break ties, we can extract an approximate set of Pareto-optimal policies.


\section{Experiments}\label{sec:experiments}
In this section, we empirically validate the theoretical claims presented in Section~\ref{sec:bellman_update}. Our evaluation addresses three primary questions:
\begin{enumerate}[label=\textbf{Q\arabic*.}]
    \item \textbf{Convergence:} Does the recursive application of the preference-aware Bellman operator result in the convergence of value estimates to the Pareto frontier?
    \item \textbf{Coverage:} Do the converged estimates successfully cover the entire set of Pareto-optimal trade-offs?
    \item \textbf{Policy Extraction:} Do the synthesized deterministic policies achieve the estimated Pareto-optimal values (solving the \textit{alignment consistency} problem)?
\end{enumerate}
To answer these, we evaluate our algorithm on the standard \textit{Deep Sea Treasure} (convex frontier) and a modified \textit{Deep Sea Treasure Concave} variant~\cite{deep-sea-benchmark}. We note that Pareto-optimal values and policies are known for both the benchmarks \cite{deep-sea-benchmark}. 
While the Deep-Sea-Treasure has a convex Pareto front, the Deep-Sea-Treasure-Concave has a concave Pareto front.

\begin{figure}[t]
    \centering
    \resizebox{0.6\linewidth}{!}{
\begin{tikzpicture}

\definecolor{darkgray176}{RGB}{176,176,176}
\definecolor{green}{RGB}{0,128,0}
\definecolor{lightgray204}{RGB}{204,204,204}
\definecolor{orange}{RGB}{255,165,0}

\begin{axis}[
axis lines=box,
tick align=outside,
tick pos=left,
title={Deep-Sea-Treasure Concave},
xlabel={Objective 1},
ylabel={Objective 2},
xmin=-28.10229595, xmax=207.71915695,
ymin=-8.9105, ymax=208.9005,
xtick style={color=black},
ytick style={color=black},
legend cell align={left},
legend style={
  at={(0.5,-0.18)},
  anchor=north,
  legend columns=4,
  row sep=2pt,
  column sep=6pt,
  font=\small,
  fill opacity=0.8,
  draw=lightgray204
}
]
\addplot [draw=blue, fill=blue, mark size=3pt, mark=*, only marks]
table{%
x  y
197 198
197 199
};
\addlegendentry{$\hat{Q}_{1}$}
\addplot [draw=red, fill=red, mark size=3pt, mark=*, only marks]
table{%
x  y
63.826202 87.238396
55.09134 175.95421
65.68101 77.18188
64.74893 79.931
69.50438 74.434685
71.47449 73.47449
59.34906 116.350525
60.226574 93.74771
56.768784 135.48236
67.57349 75.35627
};
\addlegendentry{$\hat{Q}_{100}$}
\addplot [draw=green, fill=green, mark size=3pt, mark=triangle, only marks]
table{%
x  y
-0.991452 1.008548
-4.8924475 2.8903363
-12.239351 21.281786
-17.37459 103.48827
-13.116871 43.884605
-6.784917 4.715949
-8.639728 14.772466
-7.716983 7.4650717
-2.9615521 1.968748
-15.697134 63.016434
};
\addlegendentry{$\hat{Q}_{1000}$}
\addplot [draw=orange, fill=orange, mark size=3pt, mark=square, only marks]
table{%
x  y
-7.72553 7.4565234
-8.648275 14.763917
-2.9700997 1.9602005
-0.99999964 1.0000004
-15.705681 63.007885
-12.247897 21.273237
-13.125419 43.876053
-17.383139 103.479706
-4.900995 2.8817885
-6.7934647 4.7074013
};
\addlegendentry{$\hat{Q}_{2000}$}
\addplot [draw=blue, fill=blue, mark size=3pt, mark=x, only marks]
table{%
x  y
-1 1
-1.99 0.99
};
\addlegendentry{$\pi_{1}$}
\addplot [draw=red, fill=red, mark size=3pt, mark=x, only marks]
table{%
x  y
-2.9701 1.9602
-8.64827525 14.76391511
-1 1
-6.79346521 4.70740075
-12.2478977 21.27323692
-15.70568066 63.00787506
-7.72553056 7.45652278
-17.38313762 103.47970642
-13.12541872 43.87605115
};
\addlegendentry{$\pi_{100}$}
\addplot [draw=green, fill=green, mark size=3pt, mark=x, only marks]
table{%
x  y
-2.9701 1.9602
-8.64827525 14.76391511
-1 1
-6.79346521 4.70740075
-12.2478977 21.27323692
-15.70568066 63.00787506
-4.90099501 2.88178803
-7.72553056 7.45652278
-17.38313762 103.47970642
-13.12541872 43.87605115
};
\addlegendentry{$\pi_{1000}$}
\addplot [draw=orange, fill=orange, mark size=3pt, mark=x, only marks]
table{%
x  y
-2.9701 1.9602
-8.64827525 14.76391511
-1 1
-6.79346521 4.70740075
-12.2478977 21.27323692
-15.70568066 63.00787506
-4.90099501 2.88178803
-7.72553056 7.45652278
-17.38313762 103.47970642
-13.12541872 43.87605115
};
\addlegendentry{$\pi_{2000}$}
\end{axis}

\end{tikzpicture}
    }

    \caption{\textbf{Coverage of Non-Convex Regions.} In the concave variant, linear methods would fail to find the points in the locally concave ``indented" region. Our method recovers the complete frontier, including non-convex trade-offs.}
    \label{fig:dst_concave}
\end{figure}

\begin{figure}[t]
    \centering
    \resizebox{0.6\linewidth}{!}{
\begin{tikzpicture}

\definecolor{darkgray176}{RGB}{176,176,176}
\definecolor{green}{RGB}{0,128,0}
\definecolor{lightgray204}{RGB}{204,204,204}
\definecolor{orange}{RGB}{255,165,0}

\begin{axis}[
tick align=outside,
tick pos=left,
title={Deep-Sea-Treasure},
xlabel={},
xmin=-28.10229595, xmax=207.71915695,
xtick style={color=black},
y grid style={darkgray176},
ylabel={},
ymin=-9.20735, ymax=208.60035,
ytick style={color=black}
]
\addplot [draw=blue, fill=blue, mark size=3pt, mark=*, only marks]
table{%
x  y
197 198
197 198.7
};

\addplot [draw=red, fill=red, mark size=3pt, mark=*, only marks]
table{%
x  y
64.74893 86.54867
56.768784 91.54713
71.47449 73.174484
55.09134 92.25245
63.826202 87.330666
59.34906 90.28816
69.50438 80.5113
67.57349 83.521324
60.226574 89.84762
65.68101 85.655205
};

\addplot [draw=green, fill=green, mark size=3pt, mark=triangle, only marks]
table{%
x  y
-0.991452 0.708548
-13.116871 17.822226
-17.37459 19.786526
-2.9615521 8.045367
-8.639728 14.864738
-15.697134 19.081205
-7.716983 14.082736
-6.784917 13.189271
-4.8924475 11.055403
-12.239351 17.381695
};

\addplot [draw=orange, fill=orange, mark size=3pt, mark=square, only marks]
table{%
x  y
-8.648275 14.856192
-6.7934647 13.180723
-13.125419 17.813683
-17.383139 19.777979
-12.247897 17.373148
-7.72553 14.074188
-15.705681 19.072657
-2.9700997 8.036821
-4.900995 11.046855
-0.99999964 0.70000035
};

\addplot [draw=blue, fill=blue, mark size=3pt, mark=x, only marks]
table{%
x  y
-1 0.7
-1.99 0.693
};

\addplot [draw=red, fill=red, mark size=3pt, mark=x, only marks]
table{%
x  y
-4.90099501 11.04685411
-12.2478977 17.37314349
-2.9701 8.03682
-13.12541872 17.81367677
-6.79346521 13.18072209
-8.64827525 14.85618958
-1 0.7
-17.38313762 19.77797615
-15.70568066 19.07265407
};

\addplot [draw=green, fill=green, mark size=3pt, mark=x, only marks]
table{%
x  y
-4.90099501 11.04685411
-7.72553056 14.07418675
-12.2478977 17.37314349
-2.9701 8.03682
-13.12541872 17.81367677
-6.79346521 13.18072209
-8.64827525 14.85618958
-1 0.7
-17.38313762 19.77797615
-15.70568066 19.07265407
};

\addplot [draw=orange, fill=orange, mark size=3pt, mark=x, only marks]
table{%
x  y
-4.90099501 11.04685411
-7.72553056 14.07418675
-12.2478977 17.37314349
-2.9701 8.03682
-13.12541872 17.81367677
-6.79346521 13.18072209
-8.64827525 14.85618958
-1 0.7
-17.38313762 19.77797615
-15.70568066 19.07265407
};

\end{axis}

\end{tikzpicture}
    }\caption{\textbf{Convergence on Convex Frontier.} Value estimates (shapes) converge to the true Pareto frontier. The policies (crosses) overlap perfectly with the estimates, demonstrating accurate policy extraction.}
    \label{fig:dst_convex}
\end{figure}



\textbf{Q1 Convergence:} Figures~\ref{fig:dst_concave} and \ref{fig:dst_convex} illustrate the evolution of the value estimates $\hat{Q}$ over training iterations $n=\{1, 100, 1000, 2000\}$. We observe that the estimates (represented by shapes) monotonically advance towards the frontier and stabilize after approximately at 1,000 steps. The negligible shift between the 1,000-step (Green) and 2,000-step (Orange) estimates affirms asymptotic convergence.

\textbf{Q2 Coverage:} The estimates $\hat{Q}$ for both environments align with the ground truth Pareto optimal values. 
Notably, in the Concave environment, our method successfully identifies all the trade-offs , which are typically skipped by linear scalarization methods that only recover the convex hull.

\textbf{Q3 Policy Extraction:} To verify consistency, we executed the extracted policies plotted the realized returns as crosses ($\times$). The perfect overlap between the predicted estimates ($\hat{Q}_{2000}$) and the actual policy returns ($\pi_{2000}$) confirms that our policies achieve the estimated Pareto-optimal trade-offs.

\textbf{Baselines:} We benchmark our results against the \textit{MORL-Baselines} project~\cite{felten_toolkit_2023}\footnote{\url{https://wandb.ai/openrlbenchmark/MORL-Baselines}}.
According to these benchmarks, \textbf{Pareto Q-Learning (PQL)}~\cite{van2014multi} successfully recovers the complete Pareto-optimal frontier in both the convex and concave Deep Sea Treasure environments.
In contrast, \textbf{Multi-Policy Multi-Objective Q-Learning (MP-MOQ)}~\cite{inproceedings_van}  recovers nearly the entire frontier in the convex setting, it fails significantly in the concave environment, identifying only the two points of the concave frontier.

\section{Conclusion}
\label{sec:conclusion}


We presented a framework for the \textit{Approximate Complete And Parsimonious Synthesis} ($\epsilon$-CAPS) of deterministic Pareto-optimal policies. By introducing a novel Bellman operator based on Chebyshev scalarization, we established theoretical guarantees for asymptotic convergence to the Pareto performance frontier. Crucially, we demonstrated how to obtain deterministic policies from the converged Q-estimates, ensuring that every recovered policy is approximately non-dominated, and every non-dominated value is approximately recovered. Empirically, our method successfully captures complex, non-convex trade-offs where standard linear scalarizations strictly fail. Future work will focus on scaling this formulation to high-dimensional state spaces via deep reinforcement learning architectures.





\bibliography{example_paper}

@article{white1982multi,
  title={Multi-objective infinite-horizon discounted Markov decision processes},
  author={White, DJ},
  journal={Journal of Mathematical Analysis and Applications},
  volume={89},
  number={2},
  pages={639--647},
  year={1982},
  publisher={Elsevier}
}

@article{DBLP:journals/anor/Mifrani25,
  author       = {Anas Mifrani},
  title        = {A counterexample and a corrective to the vector extension of the Bellman
                  equations of a Markov decision process},
  journal      = {Ann. Oper. Res.},
  volume       = {345},
  number       = {1},
  pages        = {351--369},
  year         = {2025}
}

@inproceedings{inproceedings_van,
author = {Van Moffaert, Kristof and Drugan, Madalina and Nowe, Ann},
year = {2013},
month = {04},
pages = {},
title = {Scalarized Multi-Objective Reinforcement Learning: Novel Design Techniques},
journal = {IEEE Symposium on Adaptive Dynamic Programming and Reinforcement Learning, ADPRL},
doi = {10.1109/ADPRL.2013.6615007}
}

@inproceedings{Yamamoto2019HypervolumeBasedMR,
	address = {Berlin, Heidelberg},
	author = {Van Moffaert, Kristof and Drugan, Madalina M. and Now{\'e}, Ann},
	booktitle = {Evolutionary Multi-Criterion Optimization},
	editor = {Purshouse, Robin C. and Fleming, Peter J. and Fonseca, Carlos M. and Greco, Salvatore and Shaw, Jane},
	isbn = {978-3-642-37140-0},
	pages = {352--366},
	publisher = {Springer Berlin Heidelberg},
	title = {Hypervolume-Based Multi-Objective Reinforcement Learning},
	year = {2013}}

@inproceedings{10.1007/978-3-030-22649-7_25,
author = {Yamaguchi, Tomohiro and Nagahama, Shota and Ichikawa, Yoshihiro and Takadama, Keiki},
title = {Model-Based Multi-objective Reinforcement Learning with Unknown Weights},
year = {2019},
isbn = {978-3-030-22648-0},
publisher = {Springer-Verlag},
address = {Berlin, Heidelberg},
url = {https://doi.org/10.1007/978-3-030-22649-7_25},
doi = {10.1007/978-3-030-22649-7_25},
booktitle = {Human Interface and the Management of Information. Information in Intelligent Systems: Thematic Area, HIMI 2019, Held as Part of the 21st HCI International Conference, HCII 2019, Orlando, FL, USA, July 26-31, 2019, Proceedings, Part II},
pages = {311–321},
numpages = {11},
location = {Orlando, FL, USA}
}

@article{article_Ruiz,
author = {Ruiz-Montiel, Manuela and Mandow, Lawrence and Pérez-de-la-Cruz, José-Luis},
year = {2017},
month = {06},
pages = {},
title = {A Temporal Difference Method for Multi-Objective Reinforcement Learning},
volume = {263},
journal = {Neurocomputing},
doi = {10.1016/j.neucom.2016.10.100}
}

@inproceedings{mandow2018pruning,
  title={Pruning dominated policies in multiobjective Pareto q-learning},
  author={Mandow, Lawrence and P{\'e}rez-de-la-Cruz, Jos{\'e}-Luis},
  booktitle={Conference of the Spanish Association for Artificial Intelligence},
  pages={240--250},
  year={2018},
  organization={Springer}
}

@inproceedings{Reymond2019ParetoDQNAT,
  title={Pareto-DQN: Approximating the Pareto front in complex multi-objective decision problems},
  author={Mathieu Reymond and Ann Now{\'e}},
  year={2019},
  url={https://api.semanticscholar.org/CorpusID:251245284}
}

@INPROCEEDINGS{4220828,
  author={Wiering, Marco A. and de Jong, Edwin D.},
  booktitle={2007 IEEE International Symposium on Approximate Dynamic Programming and Reinforcement Learning}, 
  title={Computing Optimal Stationary Policies for Multi-Objective Markov Decision Processes}, 
  year={2007},
  volume={},
  number={},
  pages={158-165},
  doi={10.1109/ADPRL.2007.368183}}

@inproceedings{inproceedings_aiss,
author = {Aissani, Nassima and Beldjilali, B and Trentesaux, Damien},
year = {2008},
month = {01},
pages = {},
title = {EFFICIENT AND EFFECTIVE REACTIVE SCHEDULING FOR MANUFACTURING SYSTEMS USING SARSA MULTI-OBJECTIVE AGENTS}
}

@phdthesis{Shabani_2009, series={Electronic Theses and Dissertations (ETDs) 2008+}, title={Incorporating flood control rule curves of the Columbia River hydroelectric system in a multireservoir reinforcement learning optimization model}, url={https://open.library.ubc.ca/collections/ubctheses/24/items/1.0063141}, DOI={http://dx.doi.org/10.14288/1.0063141}, school={University of British Columbia}, author={Shabani, Nazanin}, year={2009}, collection={Electronic Theses and Dissertations (ETDs) 2008+}}

@article{10.4018/jats.2009040104,
author = {Guo, Ying and Zeman, Astrid and Li, Rongxin},
title = {A Reinforcement Learning Approach to Setting Multi-Objective Goals for Energy Demand Management},
year = {2009},
issue_date = {April 2009},
publisher = {IGI Global},
address = {USA},
volume = {1},
number = {2},
issn = {1943-0744},
url = {https://doi.org/10.4018/jats.2009040104},
doi = {10.4018/jats.2009040104},
journal = {Int. J. Agent Technol. Syst.},
month = apr,
pages = {55–70},
numpages = {16}
}

@inproceedings{inproceedings_Castelletti,
author = {Castelletti, Andrea and Pianosi, Francesca and Restelli, Marcello},
year = {2012},
month = {06},
pages = {1-8},
title = {Tree-based Fitted Q-iteration for Multi-Objective Markov Decision problems},
isbn = {978-1-4673-1488-6},
journal = {Proceedings of the International Joint Conference on Neural Networks},
doi = {10.1109/IJCNN.2012.6252759}
}

@inproceedings{10.1145/1555301.1555311,
author = {Perez, Julien and Germain-Renaud, C\'{e}cile and K\'{e}gl, Bal\'{a}zs and Loomis, Charles},
title = {Responsive elastic computing},
year = {2009},
isbn = {9781605585789},
publisher = {Association for Computing Machinery},
address = {New York, NY, USA},
url = {https://doi.org/10.1145/1555301.1555311},
doi = {10.1145/1555301.1555311},
booktitle = {Proceedings of the 6th International Conference Industry Session on Grids Meets Autonomic Computing},
pages = {55–64},
numpages = {10},
location = {Barcelona, Spain},
series = {GMAC '09}
}

@inproceedings{10.1145/1390156.1390162,
author = {Barrett, Leon and Narayanan, Srini},
title = {Learning all optimal policies with multiple criteria},
year = {2008},
isbn = {9781605582054},
publisher = {Association for Computing Machinery},
address = {New York, NY, USA},
url = {https://doi.org/10.1145/1390156.1390162},
doi = {10.1145/1390156.1390162},
booktitle = {Proceedings of the 25th International Conference on Machine Learning},
pages = {41–47},
numpages = {7},
location = {Helsinki, Finland},
series = {ICML '08}
}

@inbook{10.5555/3454287.3455598,
author = {Yang, Runzhe and Sun, Xingyuan and Narasimhan, Karthik},
title = {A generalized algorithm for multi-objective reinforcement learning and policy adaptation},
year = {2019},
publisher = {Curran Associates Inc.},
address = {Red Hook, NY, USA},
booktitle = {Proceedings of the 33rd International Conference on Neural Information Processing Systems},
articleno = {1311},
numpages = {12}
}

@article{ZHANG2023526,
	author = {Linzi Zhang and Zhiquan Qi and Yong Shi},
	doi = {https://doi.org/10.1016/j.procs.2023.08.018},
	issn = {1877-0509},
	journal = {Procedia Computer Science},
	note = {Tenth International Conference on Information Technology and Quantitative Management (ITQM 2023)},
	pages = {526-532},
	title = {Multi-objective Reinforcement Learning -- Concept, Approaches and Applications},
	url = {https://www.sciencedirect.com/science/article/pii/S1877050923007767},
	volume = {221},
	year = {2023}}

@article{survey1,
	author = {Hayes, Conor F. and R{\u a}dulescu, Roxana and Bargiacchi, Eugenio and K{\"a}llstr{\"o}m, Johan and Macfarlane, Matthew and Reymond, Mathieu and Verstraeten, Timothy and Zintgraf, Luisa M. and Dazeley, Richard and Heintz, Fredrik and Howley, Enda and Irissappane, Athirai A. and Mannion, Patrick and Now{\'e}, Ann and Ramos, Gabriel and Restelli, Marcello and Vamplew, Peter and Roijers, Diederik M.},
	date = {2022/04/13},
	doi = {10.1007/s10458-022-09552-y},
	id = {Hayes2022},
	isbn = {1573-7454},
	journal = {Autonomous Agents and Multi-Agent Systems},
	number = {1},
	pages = {26},
	title = {A practical guide to multi-objective reinforcement learning and planning},
	url = {https://doi.org/10.1007/s10458-022-09552-y},
	volume = {36},
	year = {2022}}

@INPROCEEDINGS{6889637,
  author={Van Moffaert, Kristof and Brys, Tim and Chandra, Arjun and Esterle, Lukas and Lewis, Peter R. and Nowé, Ann},
  booktitle={2014 International Joint Conference on Neural Networks (IJCNN)}, 
  title={A novel adaptive weight selection algorithm for multi-objective multi-agent reinforcement learning}, 
  year={2014},
  volume={},
  number={},
  pages={2306-2314},
  doi={10.1109/IJCNN.2014.6889637}}

@article{10.5555/2591248.2591251,
author = {Roijers, Diederik M. and Vamplew, Peter and Whiteson, Shimon and Dazeley, Richard},
title = {A survey of multi-objective sequential decision-making},
year = {2013},
issue_date = {October 2013},
publisher = {AI Access Foundation},
address = {El Segundo, CA, USA},
volume = {48},
number = {1},
issn = {1076-9757},
journal = {J. Artif. Int. Res.},
month = oct,
pages = {67–113},
numpages = {47}
}

@INPROCEEDINGS{5396122,
  author={Soorapanth, Theerachet},
  booktitle={TENCON 2009 - 2009 IEEE Region 10 Conference}, 
  title={Multi-objective circuit design with weight-factor optimization via geometric programming}, 
  year={2009},
  volume={},
  number={},
  pages={1-5},
  doi={10.1109/TENCON.2009.5396122}}

@article{VISAN2022109987,
	author = {C{\u a}t{\u a}lin Vi{\c s}an and Octavian Pascu and Marius St{\u a}nescu and Elena-Diana {\c S}andru and Cristian Diaconu and Andi Buzo and Georg Pelz and Horia Cucu},
	doi = {https://doi.org/10.1016/j.knosys.2022.109987},
	issn = {0950-7051},
	journal = {Knowledge-Based Systems},
	pages = {109987},
	title = {Automated circuit sizing with multi-objective optimization based on differential evolution and Bayesian inference},
	url = {https://www.sciencedirect.com/science/article/pii/S0950705122010802},
	volume = {258},
	year = {2022}}

@INPROCEEDINGS{11092100,
  author={Taşkıran, Hakan and Sağlıcan, Enes and Afacan, Engin},
  booktitle={2025 21st International Conference on Synthesis, Modeling, Analysis and Simulation Methods, and Applications to Circuits Design (SMACD)}, 
  title={Multi-Objective Optimization of Analog Circuits Using Reinforcement Learning}, 
  year={2025},
  volume={},
  number={},
  pages={1-4},
  doi={10.1109/SMACD65553.2025.11092100}}

@article{ruiz2017temporal,
	author = {Ruiz-Montiel, Manuela and Mandow, Lawrence and P{\'e}rez-de-la-Cruz, Jos{\'e}-Luis},
	journal = {Neurocomputing},
	pages = {15--25},
	publisher = {Elsevier},
	title = {A temporal difference method for multi-objective reinforcement learning},
	volume = {263},
	year = {2017}}

@article{qin2020energy,
	author = {Qin, Yao and Wang, Hua and Yi, Shanwen and Li, Xiaole and Zhai, Linbo},
	journal = {The Journal of Supercomputing},
	number = {1},
	pages = {455--480},
	publisher = {Springer},
	title = {An energy-aware scheduling algorithm for budget-constrained scientific workflows based on multi-objective reinforcement learning},
	volume = {76},
	year = {2020}}

@article{hu2020dynamic,
	author = {Hu, Xin and Zhang, Yuchen and Liao, Xianglai and Liu, Zhijun and Wang, Weidong and Ghannouchi, Fadhel M},
	journal = {IEEE Transactions on Broadcasting},
	publisher = {IEEE},
	title = {Dynamic Beam Hopping Method Based on Multi-Objective Deep Reinforcement Learning for Next Generation Satellite Broadband Systems},
	year = {2020}}

@article{huang2019learning,
	author = {Huang, Sandy H and Zambelli, Martina and Kay, Jackie and Martins, Murilo F and Tassa, Yuval and Pilarski, Patrick M and Hadsell, Raia},
	journal = {arXiv preprint arXiv:1903.08542},
	title = {Learning gentle object manipulation with curiosity-driven deep reinforcement learning},
	year = {2019}}

@article{lacerda2017multi,
	author = {Lacerda, Anisio},
	journal = {Neurocomputing},
	pages = {12--24},
	publisher = {Elsevier},
	title = {Multi-objective ranked bandits for recommender systems},
	volume = {246},
	year = {2017}}

@inproceedings{da2019multi,
	author = {da Silva Veith, Alexandre and de Souza, Felipe Rodrigo and de Assun{\c{c}}{\~a}o, Marcos Dias and Lef{\`e}vre, Laurent and dos Anjos, Julio Cesar Santos},
	booktitle = {Proceedings of the 48th International Conference on Parallel Processing},
	pages = {1--10},
	title = {Multi-Objective Reinforcement Learning for Reconfiguring Data Stream Analytics on Edge Computing},
	year = {2019}}

@article{van2014multi,
	author = {Van Moffaert, Kristof and Now{\'e}, Ann},
	journal = {The Journal of Machine Learning Research},
	number = {1},
	pages = {3483--3512},
	publisher = {JMLR.org},
	title = {Multi-objective reinforcement learning using sets of pareto dominating policies},
	volume = {15},
	year = {2014}}

@article{zhou2019optimization,
	author = {Zhou, Zhenpeng and Kearnes, Steven and Li, Li and Zare, Richard N and Riley, Patrick},
	journal = {Scientific reports},
	number = {1},
	pages = {1--10},
	publisher = {Nature Publishing Group},
	title = {Optimization of molecules via deep reinforcement learning},
	volume = {9},
	year = {2019}}

@book{sutton2018reinforcement,
	author = {Sutton, Richard S and Barto, Andrew G},
	publisher = {MIT press},
	title = {Reinforcement learning: An introduction},
	year = {2018}}

@inproceedings{10.1145/3649329.3657318,
author = {Gu, Tianchen and Lyu, Ruiyu and Bi, Zhaori and Yan, Changhao and Yang, Fan and Zhou, Dian and Cui, Tao and Liu, Xin and Zhang, Zaikun and Zeng, Xuan},
title = {HiMOSS: A Novel High-dimensional Multi-objective Optimization Method via Adaptive Gradient-Based Subspace Sampling for Analog Circuit Sizing},
year = {2024},
isbn = {9798400706011},
publisher = {Association for Computing Machinery},
address = {New York, NY, USA},
url = {https://doi.org/10.1145/3649329.3657318},
doi = {10.1145/3649329.3657318},
booktitle = {Proceedings of the 61st ACM/IEEE Design Automation Conference},
articleno = {233},
numpages = {6},
location = {San Francisco, CA, USA},
series = {DAC '24}
}

@article{10.1115/1.4069046,
    author = {De Santanna, Lorenzo and Guidotti, Giacomo and Mastinu, Gianpiero and Gobbi, Massimiliano},
    title = {Multi-Objective Optimal Design Based on Reinforcement Learning},
    journal = {Journal of Mechanical Design},
    volume = {147},
    number = {10},
    pages = {101703},
    year = {2025},
    month = {08},
    issn = {1050-0472},
    doi = {10.1115/1.4069046},
    url = {https://doi.org/10.1115/1.4069046},
    eprint = {https://asmedigitalcollection.asme.org/mechanicaldesign/article-pdf/147/10/101703/7517513/md-24-1765.pdf},
}

@inproceedings{10.1007/978-3-031-83191-1_17,
	address = {Cham},
	author = {Nair, Siddharth H. and Vallon, Charlott and Borrelli, Francesco},
	booktitle = {Systems Theory in Data and Optimization},
	editor = {Berberich, Julian and Iannelli, Andrea and Allg{\"o}wer, Frank},
	isbn = {978-3-031-83191-1},
	pages = {261--276},
	publisher = {Springer Nature Switzerland},
	title = {Multi-Objective Learning Model Predictive Control},
	year = {2025}}

@inproceedings{10.1007/978-3-642-87563-2_5,
	address = {Berlin, Heidelberg},
	author = {Bowman, V. Joseph},
	booktitle = {Multiple Criteria Decision Making},
	editor = {Thiriez, Herv{\'e} and Zionts, Stanley},
	isbn = {978-3-642-87563-2},
	pages = {76--86},
	publisher = {Springer Berlin Heidelberg},
	title = {On the Relationship of the Tchebycheff Norm and the Efficient Frontier of Multiple-Criteria Objectives},
	year = {1976}}

@INPROCEEDINGS{4631208,
  author={Voss, Thomas and Beume, Nicola and Rudolph, Gunter and Igel, Christian},
  booktitle={2008 IEEE Congress on Evolutionary Computation (IEEE World Congress on Computational Intelligence)}, 
  title={Scalarization versus indicator-based selection in multi-objective CMA evolution strategies}, 
  year={2008},
  volume={},
  number={},
  pages={3036-3043},
  doi={10.1109/CEC.2008.4631208}}

@article{deep-sea-benchmark,
	author = {Vamplew, Peter and Dazeley, Richard and Berry, Adam and Issabekov, Rustam and Dekker, Evan},
	date = {2011/07/01},
	doi = {10.1007/s10994-010-5232-5},
	id = {Vamplew2011},
	isbn = {1573-0565},
	journal = {Machine Learning},
	number = {1},
	pages = {51--80},
	title = {Empirical evaluation methods for multiobjective reinforcement learning algorithms},
	url = {https://doi.org/10.1007/s10994-010-5232-5},
	volume = {84},
	year = {2011}}

@inproceedings{felten_toolkit_2023,
	author = {Felten, Florian and Alegre, Lucas N. and Now{\'e}, Ann and Bazzan, Ana L. C. and Talbi, El Ghazali and Danoy, Gr{\'e}goire and Silva, Bruno Castro da},
	title = {A Toolkit for Reliable Benchmarking and Research in Multi-Objective Reinforcement Learning},
	booktitle = {Proceedings of the 37th Conference on Neural Information Processing Systems ({NeurIPS} 2023)},
	year = {2023}
}
\bibliographystyle{icml2026}

\newpage
\appendix
\onecolumn

\section{Principle of Aligned Dominance}

\begin{lemma}[The Principle of Aligned Dominance]\label{lem:aligned_dominance}
    Let $\mathcal{V} \subset \mathbb{R}^d_{\geq 0}$ be a compact, non-empty set, such that $\mathcal{V} \neq \{\mathbf{0}\}$, and let $v \in \mathcal{V}$ be a non-zero vector. Consider any maximizer $v^*$ of the Chebyshev scalarization with the preference aligned with $v$. Then $v^*$ weakly dominates $v$.
    
    Formally:
    \begin{equation}
        \text{if } w = \frac{v}{\|v\|_2} \quad \text{and} \quad v^* \in \operatorname*{argmax}_{z \in \mathcal{V}} O_{w}(z), \quad \text{then } v \preceq v^*.
    \end{equation}
\end{lemma}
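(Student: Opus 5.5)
The plan is a direct computation with the Chebyshev scalarization at the aligned preference $w = v/\|v\|_2$, in three steps.

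\textbf{Step 1 (well-posedness).} Since $v \neq \mathbf{0}$ and $v \in \mathbb{R}^d_{\geq 0}$, the vector $w$ is well defined and lies in $\mathcal{W}$. The index set $I := \{i : w_i > 0\} = \{i : v_i > 0\}$ is non-empty. On $\mathbb{R}^d_{\geq 0}$ the map $z \mapsto O_w(z) = \min_{i\in I} z_i/w_i$ is a minimum of finitely many linear functions, hence continuous. Because $\mathcal{V}$ is compact and non-empty, the argmax is non-empty, so a maximizer $v^*$ exists (the statement assumes one anyway).

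\textbf{Step 2 (evaluate the score of $v$ and compare).} For each $i \in I$ we have $v_i/w_i = \|v\|_2$, so $O_w(v) = \|v\|_2$. Since $v \in \mathcal{V}$ and $v^*$ is a maximizer, $O_w(v^*) \geq O_w(v) = \|v\|_2$. Unwinding the minimum gives, for every $i \in I$,
\[
\frac{v^*_i}{w_i} \geq \|v\|_2 ,
\qquad\text{i.e.}\qquad
v^*_i \geq \|v\|_2\, w_i = v_i .
\]
This is exactly the geometric picture of Figure~\ref{fig:left_plot}: the ray $\alpha w$ reaches $v$ at $\alpha=\|v\|_2$, so any $z$ with a score at least that large must dominate $\|v\|_2 w = v$.

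\textbf{Step 3 (zero coordinates).} For $i \notin I$ we have $v_i = 0$. Every element of $\mathcal{V} \subset \mathbb{R}^d_{\geq 0}$ is non-negative, so $v^*_i \geq 0 = v_i$ trivially. Combining with Step 2 yields $v_i \leq v^*_i$ for all $i$, i.e.\ $v \preceq v^*$.

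The only delicate point is Step 3: the Chebyshev operator ignores coordinates with $w_i = 0$, so the comparison from Step 2 says nothing there. Non-negativity of the value space is what closes the gap, and it is the reason the lemma restricts to $\mathbb{R}^d_{\geq 0}$. Otherwise the argument is routine. Compactness is used only to guarantee that a maximizer exists; the hypothesis $\mathcal{V}\neq\{\mathbf{0}\}$ is already implied by the choice of a non-zero $v$.
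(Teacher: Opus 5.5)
Your proof is correct and follows the same route as the paper: compute $O_w(v)=\|v\|_2$, use $O_w(v^*)\ge O_w(v)$, and unwind the minimum coordinatewise to get $v^*_i \ge v_i$. Your explicit handling of the coordinates with $v_i=0$ via non-negativity of $\mathcal{V}$ is a small improvement, since the paper's one-line argument asserts the inequality for all $i$ without separating out those indices.
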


\begin{proof}
    If $v^* \in \operatorname*{arg\,max}\limits_{z \in \mathcal{V}} O_{w}(z)$ then $O_w(v^*)\geq O_w(v)$. Therefore $\min\limits_{i:v_i>0}\Bigl(\frac{v^*_i}{\frac{v_i}{\|v\|_2}}\Bigr)\geq \min\limits_{i:v_i>0}\Bigl(\frac{\cancel{v_i}}{\frac{\cancel{v_i}}{\|v\|_2}}\Bigr) =\|v\|_2$. Therefore for every $i=1,2\ldots d$, $v^*_i\geq \frac{v_i}{\cancel{\|v\|_2}}\cancel{\|v\|_2}\geq 0$. Guaranteeing that $v^*$ dominates $v$, i.e. $v^*\succeq v$.
\end{proof}

Lemma~\ref{lem:aligned_dominance} establishes a mechanism for monotonic improvement in the vector space. 
Specifically, it guarantees that for any point $v \in \V$, the scalarization objective $O_w$ parameterized by the direction $w= \frac{v}{\|v\|_2}$ aligned with $v$ is maximized by a point $v^*$ that weakly dominates $v$.
We note that there could be multiple maximizers, possibly incomparable to each other. However, every maximizer would dominate $v$. Example~\ref{app:eg:incomparable_aligned_dominance} illustrates this.
\begin{example}\label{app:eg:incomparable_aligned_dominance}
    Consider $v_1:=(1,1)$, $v_2=(1.5,2)$, $v_3=(2,1.5)$, and $\V:=\{v_1, v_2, v_3\}$ be three vectors. 
    Then, both $v_2$ and $v_3$ are maximizers of the Chebyshev scalarization aligned along $v_1$, however both $v_2$ and $v_3$ are incomparable.
\end{example}

\begin{figure}[H] \centering \scalebox{0.8}{    
    \begin{tikzpicture} \begin{axis}[ axis lines = left, xlabel = {Objective 1}, ylabel = {Objective 2}, xmin=0, xmax=2.8, ymin=0, ymax=2.8, xtick={0,1,1.5,2}, ytick={0,1,1.5,2}, grid=major, legend pos=south east, title={Chebyshev Scalarization Aligned with $v_1$} ]
\coordinate (O) at (axis cs:0,0);
\coordinate (v1) at (axis cs:1,1);
\coordinate (v2) at (axis cs:1.5,2);
\coordinate (v3) at (axis cs:2,1.5);
\coordinate (corner) at (axis cs:1.5,1.5);

\addplot[dashed, gray, domain=0:2.8, samples=2, ->] {x} node[pos=0.35, above left, sloped] {Alignment Ray};

\addplot[only marks, mark=*, mark size=3pt, color=red] coordinates {(1,1)};
\node[below right, color=red] at (v1) {$v_1$};

\addplot[only marks, mark=*, mark size=3pt, color=blue] coordinates {(1.5,2) (2,1.5)};
\node[above, color=blue] at (v2) {$v_2$};
\node[right, color=blue] at (v3) {$v_3$};




\draw[blue, thick] (axis cs:0, 2) -- (axis cs:1.5, 2) -- (axis cs:1.5, 1.5) -- (axis cs:2, 1.5) -- (axis cs:2, 0) ;
\node[blue, above right, font=\fontsize{8}{8}\selectfont, align=right] at (axis cs: 0,2) {Pareto performance frontier};


\end{axis}
\end{tikzpicture}
}
\caption{Visualizing Example \ref{app:eg:incomparable_aligned_dominance}. The preference direction $w$ is aligned with the  point $v_1 = (1,1)$. The Chebyshev scalarization is maximized  simultaneously by points $v_2$ and $v_3$, even though they are incomparable with each other.}
\label{fig:chebyshev_example}

\end{figure}

\section{Chebyshev Scalarization and Pareto-Optimality}
\label{app:chebshev_scalarization}

\begin{theorem}[Sufficiency and Necessity of Chebyshev Scalarization]\label{app:thm:cheb_suff_nece}
Let $\V \subset \mathbb{R}_{\geq 0}^d$ 
be a set of achievable value vectors such that $\V\subseteq [0,{R_{\max}}]^d$, and let the $\ppf$ denote the set of Pareto performance frontier for vectors in $\V$.
For a given preference $w\in \W$, let $O_w(v)$ denote the weighted Chebyshev scalarization objective. The following properties hold:
\begin{enumerate}[label=\thetheorem.\arabic*]
    \item \label{app:thm:cheb_nece} \textbf{Sufficiency:} For any preference $w$, the maximizer of the Chebyshev scalarization is on the Pareto performance frontier. Formally, if $v^* \in \arg\max\limits_{v \in \mathcal{V}} O_w(v)$, then $v^* \in \ppf$.
    \item \label{app:thm:cheb_suff}  \textbf{Necessity:}  For any Pareto-optimal point $v^* \in \mathcal{V}$, there exists a preference $w \in \mathcal{W}$ such that $v^*$ is the unique maximizer of the scalarization. Formally if ${v}' \in \arg\max\limits_{v \in \mathcal{V}} O_w(v)$ then $v'=v^*$.
\end{enumerate}
\end{theorem}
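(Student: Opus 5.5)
The two parts call for different tools. For \textbf{Sufficiency} I will argue by contradiction directly from Definition~\ref{def:ppf}. For \textbf{Necessity} I will invoke Lemma~\ref{lem:aligned_dominance} together with the definition of Pareto-optimality.

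\textbf{Sufficiency.} Let $v^* \in \arg\max_{v\in\V} O_w(v)$. Membership in $\ppf$ requires two things.

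The first is a Pareto-optimal $\bar v \in \V^{PO}$ with $v^*\preceq \bar v$. I would obtain it by taking, among points of $\V$ weakly dominating $v^*$, one maximizing $\sum_i v_i$. This uses compactness of $\V$, which I would assume as in Lemma~\ref{lem:aligned_dominance}; boundedness in $[0,R_{\max}]^d$ plus closedness suffices. Such a maximizer cannot be strictly dominated: any strict dominator would also dominate $v^*$ and would have a larger coordinate sum. Hence it lies in $\V^{PO}$.

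The second is that no $v'\in\V$ satisfies $v'_i>v^*_i$ for all $i$. Suppose such a $v'$ exists. Then for every $i$ with $w_i>0$ we have $v'_i/w_i > v^*_i/w_i$. Since the minimum is taken over a finite nonempty index set (nonempty because $\|w\|_2=1$), it follows that $O_w(v')>O_w(v^*)$. This contradicts maximality.

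\textbf{Necessity.} Let $v^*\in\V$ be Pareto-optimal. If $v^*=\mathbf 0$, then any nonzero $v\in\V$ would strictly dominate it, since $\V\subset\mathbb{R}^d_{\ge0}$. The assumption $\V\neq\{\mathbf 0\}$ then rules this case out, so $v^*\neq \mathbf 0$. Set $w=v^*/\|v^*\|_2\in\W$.

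Let $v'\in\arg\max_{v\in\V}O_w(v)$. Lemma~\ref{lem:aligned_dominance} gives $v^*\preceq v'$. If $v'\neq v^*$, this becomes strict dominance $v^*\prec v'$ with $v'\in\V$, contradicting Pareto-optimality. Hence $v'=v^*$.

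It remains to check that the argmax is nonempty, so that the statement is not vacuous. Here $O_w$ is a minimum of finitely many continuous linear functions, hence continuous, and $\V$ is compact. In any case $v^*$ itself attains $O_w(v^*)=\|v^*\|_2$, and the argument above shows that every maximizer equals $v^*$.

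The main obstacle is the existence of a Pareto-optimal dominator in the Sufficiency part. This needs compactness or closedness of $\V$, which the statement only implicitly provides through boundedness, so I would state this hypothesis explicitly, matching Lemma~\ref{lem:aligned_dominance}. The rest is routine.
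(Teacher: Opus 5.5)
Your proposal is correct and follows the paper's proof. Sufficiency uses the same contradiction via strict monotonicity of $O_w$ under a coordinatewise-strict improvement, and Necessity chooses $w = v^*/\|v^*\|_2$ and combines Lemma~\ref{lem:aligned_dominance} with Pareto-optimality of $v^*$. Your one real addition is checking the first clause of Definition~\ref{def:ppf}, namely a Pareto-optimal point weakly dominating $v^*$, obtained by maximizing $\sum_i v_i$ over the compact set of dominators of $v^*$. The paper's contradiction argument silently skips this clause, and it genuinely requires closedness of $\mathcal{V}$, so making compactness explicit, as the paper itself does in Lemma~\ref{lem:aligned_dominance}, is exactly right.
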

\begin{proof}
\textbf{1. Sufficiency}

\textit{If $v^*\in \arg\max\limits_{v\in \V} O_w(v)$ then $v^*\in\ppf$}:
Suppose for the sake of contradiction $v^*$ is a maximizer of $O_w(v)$ but $v^*\notin \ppf$. 
Since $v^*$ is not on the $\ppf$ there is a Pareto-optimal vector $v'\in \V$ such that $v'_i > v^*_i$ for all $i = 1,2 \dots, d$. 
Then for every $i\in \{1,2,\ldots d\}$ such that $w_i>0$, we can divide $v_i'$ and $v_i^*$ by the positive weight $w_i$ to obtain:
\begin{equation*}
    \frac{v'_i}{w_i} > \frac{v^*_i}{w_i}
\end{equation*}
Taking the minimum over $i$:
\begin{equation*}
    \min_{i:w_i>0} \left( \frac{v'_i}{w_i} \right) > \min_{i:w_i>0} \left( \frac{v^*_i}{w_i} \right) \implies O_w( v') > O_w( v^*)
\end{equation*}
This contradicts the optimality of $v^*$. Thus, $v^*$ must be in $\ppf$.

\bigskip

\textbf{2. Necessity}

Let $v^*\in \mathcal{V}$ be a Pareto-optimal point, and let $w=\frac{v^*}{\|v^*\|_2}$ be the preference aligned with $v^*$. 
We invoke Lemma~\ref{lem:aligned_dominance} to obtain that for every $v'\in \arg\max\limits_{v\in \mathcal{V}} O_w(v)$, we have $v'\succeq v^*$. 
However, since $v^*$ is Pareto-optimal, this condition implies $v^*=v'$.
Thus, there is a unique maximizer along the preference aligned with $v^*$.
\end{proof}

\begin{theorem}[Coverage and Exact Characterization of Pareto Optimality]
\label{app:thm:pareto_coverage_characterization}
Let $\mathcal{V} \subset \mathbb{R}_{\geq 0}^d$ be a compact set of achievable value vectors, with $\mathcal{V}^{PO}$ denoting the Pareto-optimal set and $\ppf$ the Pareto performance frontier.

The relationship between Chebyshev scalarization and the Pareto set is characterized by the following two properties:

\begin{enumerate}[label=(\roman*), leftmargin=*]
    \item \textbf{Scalarization-Induced Coverage:} \label{app:item:coverage}
    Let $\mathcal{S}$ be the set constructed by selecting, for every preference $w \in \mathcal{W}$, an arbitrary vector that maximizes the scalarization $O_w$. Formally:
    \[
    \mathcal{S} \;:=\; \bigcup_{w \in \mathcal{W}} \big\{ v_w \big\}, \quad \text{where } v_w \in \arg\max_{v \in \mathcal{V}} O_w(v).
    \]
    Then $\mathcal{S}$ contains all Pareto-optimal points and consists solely of points on the Pareto performance frontier:
    \[ \mathcal{V}^{PO} \subseteq \mathcal{S} \subseteq \ppf. \]

    \item \textbf{Exact Characterization via Norm-Regularization:} \label{app:item:exactness}
    Let $\mathcal{M}$ be the set constructed by selecting, for every preference $w \in \mathcal{W}$, a vector $v^*_w$ obtained via a two-stage optimization process: first maximizing the Chebyshev scalarization $O_w$, and then maximizing the $\ell_2$-norm among the resulting candidates for tie-breaking. Formally:
    \[
    \mathcal{M} \;:=\; \bigcup_{w \in \mathcal{W}} \big\{ v^*_w \big\},
    \]
    where $v^*_w$ is an arbitrary solution to the nested optimization:
    \[
    v^*_w \in \arg\max_{v} \left\{ \|v\|_2 \;\Big|\; v \in \arg\max_{u \in \mathcal{V}} O_w(u) \right\}.
    \]
    Then $\mathcal{M}$ coincides with the Pareto-optimal set:
    \[ \mathcal{M} = \mathcal{V}^{PO}. \]
\end{enumerate}
\end{theorem}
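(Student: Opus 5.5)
The proof splits along the two items. For part (i), the inclusion $\mathcal{S}\subseteq\ppf$ is immediate from the Sufficiency part of Theorem~\ref{app:thm:cheb_suff_nece}. Each $v_w$ is a maximizer of $O_w$ over $\V$, so it lies on $\ppf$. Compactness of $\V$, together with upper semicontinuity of $O_w$ as a minimum of continuous linear functions, guarantees that $\arg\max_{v\in\V}O_w(v)$ is non-empty, so $v_w$ is well defined.

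For $\V^{PO}\subseteq\mathcal{S}$, I take a Pareto-optimal $v^*$. If $v^*\neq\mathbf{0}$, set $w=v^*/\|v^*\|_2$. By the Necessity part of Theorem~\ref{app:thm:cheb_suff_nece}, $v^*$ is the unique maximizer of $O_w$, so whatever arbitrary choice defines $v_w$ must equal $v^*$, and $v^*\in\mathcal{S}$. The degenerate case $v^*=\mathbf{0}$ needs a separate remark. Since $\V\neq\{\mathbf{0}\}$, some nonzero vector is in $\V$ and weakly dominates $\mathbf{0}$, so $\mathbf{0}$ is not Pareto-optimal. Hence this case cannot occur.

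For part (ii), first observe that $v^*_w$ is well defined. The set $\arg\max_{u\in\V}O_w(u)$ is a closed subset of the compact set $\V$, since it is a superlevel set of an upper semicontinuous function. The continuous map $\|\cdot\|_2$ therefore attains its maximum on it.

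For $\mathcal{M}\subseteq\V^{PO}$, I argue by contradiction. Suppose $v^*_w$ is strictly dominated by some $u\in\V$. Componentwise $u_i\ge (v^*_w)_i$, so $u_i/w_i\ge (v^*_w)_i/w_i$ for every $i$ with $w_i>0$, and hence $O_w(u)\ge O_w(v^*_w)$. Thus $u$ is also a Chebyshev maximizer. Because all entries are non-negative and $u\neq v^*_w$ with $u\succeq v^*_w$, some coordinate is strictly larger, so $\|u\|_2^2>\|v^*_w\|_2^2$. This contradicts the norm-maximizing choice of $v^*_w$.

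For $\V^{PO}\subseteq\mathcal{M}$, I reuse part (i). For a Pareto-optimal $v^*$, the aligned preference $w$ has $v^*$ as its unique Chebyshev maximizer, so the tie-breaking stage has only one candidate and $v^*_w=v^*$.

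I do not expect any real obstacle. The most delicate points are bookkeeping. The first is existence of the maximizers, which relies on compactness and upper semicontinuity; the earlier theorem only states boundedness, but here $\V$ is assumed compact. The second is the exclusion of the zero vector as a Pareto-optimal point, which is needed to form the aligned preference. The third is the strict norm increase under strict dominance, which uses non-negativity of the coordinates.
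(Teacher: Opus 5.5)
Your proposal is correct and follows the paper's proof essentially step for step. Part (i) uses Sufficiency and Necessity of Theorem~\ref{app:thm:cheb_suff_nece}; part (ii) combines a dominance-plus-strict-norm-increase contradiction with uniqueness of the maximizer under the aligned preference. Your extra bookkeeping is nonemptiness of the argmax sets via compactness, exclusion of $\mathbf{0}$ from $\V^{PO}$, and use of an arbitrary dominator $u\in\V$ rather than a Pareto-optimal one, which avoids the paper's unstated existence claim and its two-case split; this tightens the argument without changing the route.
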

\begin{proof}
\begin{enumerate}[font=\bfseries,label=\roman*.]
    \item \textbf{Scalarization-Induced Coverage:} The proof proceeds in two steps:

1. \textbf{($\mathcal{S} \subseteq \ppf$):}
Consider any selected element $v_w \in \mathcal{S}$. By definition, $v_w$ is a maximizer of the weighted Chebyshev scalarization $O_w$. From the Sufficiency property (Theorem~\ref{thm:cheb_nece}), any such maximizer must lie on the Pareto performance frontier. Therefore, every element in $\mathcal{S}$ belongs to $\ppf$, implying $\mathcal{S} \subseteq \ppf$.

2. \textbf{($\mathcal{V}^{PO} \subseteq \mathcal{S}$):}
Let $v^* \in \mathcal{V}^{PO}$ be an arbitrary Pareto-optimal vector. 
From the Necessity property (Theorem~\ref{thm:cheb_suff}), there exists a specific preference $w \in \mathcal{W}$ such that $v^*$ is the \textit{unique} maximizer of the scalarization $O_w$ over $\mathcal{V}$.
Since the maximizer is unique, the arbitrary selection $v_w$ for this specific weight must be $v^*$ itself (i.e., $v_w = v^*$). Consequently, $v^*$ is included in $\mathcal{S}$, implying $\mathcal{V}^{PO} \subseteq \mathcal{S}$.
\item \textbf{Exact Characterization via Norm-Regularization:} We prove the equality by showing mutual inclusion: $\mathcal{M} \subseteq \mathcal{V}^{PO}$ and $\mathcal{V}^{PO} \subseteq \mathcal{M}$.

\textbf{1. ($\mathcal{M} \subseteq \mathcal{V}^{PO}$):}
Let $v_w \in \mathcal{M}$ be a vector selected for some preference $w$. Suppose for the sake of contradiction that $v_w \notin \mathcal{V}^{PO}$.
This implies there exists a dominating vector $v^* \in \mathcal{V}^{PO}$ such that ${v_w} \prec v^*$.
This means $v_{w,k} \leq v^*_k$ for all components $k$, with strict inequality for at least one component.

This dominance implies two properties:
\begin{enumerate}
    \item \textbf{Scalarization:} Since every component of $v^*$ is at least as large as $v_w$, the weighted minimum ratio cannot decrease, so $O_w(v_w) \leq O_w(v^*)$.
    \item \textbf{Norm:} Since $v^*$ strictly dominates $v_w$ and values are non-negative, the $\ell_2$-norm must be strictly larger: $\|v_w\|_2 < \|v^*\|_2$.
\end{enumerate}

Now we check if $v_w$ could have been the output of the two-stage process:
\begin{itemize}
    \item If $O_w(v^*) > O_w(v_w)$, then $v_w$ failed the first stage (maximizing scalarization), which is a contradiction.
    \item If $O_w(v^*) = O_w(v_w)$, then $v_w$ is a valid candidate for the second stage. However, since $\|v^*\|_2 > \|v_w\|_2$, the second stage would have selected $v^*$ rather than $v_w$. 
    This is also a contradiction.
\end{itemize}
Thus, no such dominating vector $v^*$ can exist, so $v_w$ must be Pareto-optimal.

\textbf{2.  ($\mathcal{V}^{PO} \subseteq \mathcal{M}$):}
Let $v^* \in \mathcal{V}^{PO}$ be an arbitrary Pareto-optimal point. Consider the specific preference aligned with this vector: $w^* := \frac{v^*}{\|v^*\|_2}$.

We invoke Lemma~\ref{lem:aligned_dominance}, which states that for aligned preferences, any maximizer $v'\in \arg\max\limits_{v\in\V}O_{w^*}(v)$ of the Chebyshev scalarization $O_{w^*}$ must weakly dominate the alignment target: $v' \succeq v^*$.
However, since $v^*$ is Pareto-optimal, it cannot be strictly dominated. Therefore, the weak dominance implies equality: $v' = v^*$.

This means the set of first-stage maximizers $\arg\max\limits_{u \in \mathcal{V}} O_{w^*}(u)$ is the singleton set $\{v^*\}$. Since the set contains only one element, the second-stage norm maximization trivially selects $v^*$. Consequently, $v^* \in \mathcal{M}$.
\end{enumerate}
\end{proof}

\section{Bellman Update}

\begin{theorem}[Recursive Decomposition of $Q^*$]\label{app:thm:rec_decomp}
Let $\mathbf{Q}^*: S \times A \times \mathcal{W} \to \mathbb{R}^d_{\geq 0}$ be the optimal preference action-value function. For any state $s \in S$, action $a \in A$, and preference vector $w \in \mathcal{W}$, $\mathbf{Q}^*$ satisfies the recursive relationship:

\begin{equation}\label{app:thm:eq:rec_q_star}
\mathbf{Q}^*(s, a, w) = R(s, a) + \gamma \E\limits_{s' \sim p(\cdot \mid s, a)} \left[ \mathbf{Q}^*(s', \hat{A}(s'), \hat{\mathcal{W}}(s')) \right]
\end{equation}
where the functions $\hat{A}: S \to A$ and $\hat{\mathcal{W}}: S \to \mathcal{W}$ map the next-states to actions and preferences, respectively.
\end{theorem}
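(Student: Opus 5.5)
We fix $(s,a,w)$ and take a deterministic, history-dependent policy $\pi$ realizing the two-stage optimum in Definition~\ref{def:optimal_pref_act_val_fun}. Then
\[
\mathbf{Q}^*(s,a,w)=R(s,a)+\gamma\,\mathbf{V}^\pi(p(\cdot\mid s,a)),
\]
and we write $\pi(\cdot\mid s')$ for the tail policy after the first transition to $s'$. Linearity of expectation gives
\[
\mathbf{V}^\pi(p(\cdot\mid s,a))=\sum_{s'}p(s'\mid s,a)\,\mathbf{V}^{\pi(\cdot\mid s')}(s').
\]
The plan is to show that each tail value $\mathbf{V}^{\pi(\cdot\mid s')}(s')$ is Pareto-optimal among values achievable from $s'$. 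We then identify it with $\mathbf{Q}^*(s',\pi(s'),w')$ for a suitable $w'$, and set $\hat A(s')=\pi(s')$, $\hat{\mathcal W}(s')=w'$.

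\textbf{Step 1 (the continuation value is Pareto-optimal).} By Theorem~\ref{item:exactness}, applied to the achievable set $\mathcal V_{s,a}=\{\mathbf V^{\pi'}(p(\cdot\mid s,a))\}$ (compact by boundedness and a closure argument), the two-stage selection yields a point of $\mathcal V^{PO}_{s,a}$. So no achievable continuation strictly dominates $\mathbf{V}^\pi(p(\cdot\mid s,a))$.

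\textbf{Step 2 (stitching).} Suppose some $s'$ with $p(s'\mid s,a)>0$ has a policy $\pi'$ with $\mathbf V^{\pi(\cdot\mid s')}(s')\prec\mathbf V^{\pi'}(s')$. Define $\pi''$ to follow $\pi'$ after the first transition to $s'$ and $\pi$ otherwise; this is a legitimate deterministic history-dependent policy. The mixture identity then gives $\mathbf V^{\pi}(p)\prec\mathbf V^{\pi''}(p)$, because the strict component gains a positive weight. This contradicts Step 1, so each tail value is Pareto-optimal at $s'$.

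\textbf{Step 3 (recovering $\mathbf Q^*$ at $s'$).} Let $a'=\pi(s')$. Tail values from $s'$ have the form $R(s',a')+\gamma\mathbf V^{\sigma}(p(\cdot\mid s',a'))$. Apply the stitching argument of Step 2 once more, now one step deeper. This shows that the continuation $\mathbf V^{\pi(\cdot\mid s',a')}(p(\cdot\mid s',a'))$ is Pareto-optimal in $\mathcal V_{s',a'}$; otherwise a dominating continuation would dominate the full tail at $s'$. Then take $w'$ aligned with that continuation vector, or any $w'$ if it is zero. By the necessity part of Theorem~\ref{thm:cheb_suff_nece} (via Lemma~\ref{lem:aligned_dominance}), it is the unique Chebyshev maximizer, so the norm tie-break returns it. Hence $\mathbf Q^*(s',a',w')$ equals the tail value, and substituting gives Equation~\eqref{app:thm:eq:rec_q_star}.

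The main obstacle is the technical bookkeeping rather than the idea. One issue is ensuring the argmax sets are nonempty, which needs compactness or closedness of the achievable value sets (infinite history-dependent policy spaces), and the zero-vector edge case for $w'$. The other is checking that the stitched policy remains within $\Pi$ and has the claimed value. I would handle existence by restricting to finite $S,A$ and citing compactness of the set of achievable values under the product topology on deterministic policies, since returns are continuous due to discounting.
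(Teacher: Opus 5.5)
Your proposal is correct and follows essentially the same route as the paper: a stitching argument shows that the continuation value from $p(\cdot\mid s',a')$ must be Pareto-optimal, since otherwise the two-stage selection defining $\mathbf{Q}^*(s,a,w)$ is contradicted via the exact-characterization theorem. The preference aligned with that continuation then recovers it as $\mathbf{Q}^*(s',a',w')$ by the necessity part of the Chebyshev theorem. Splitting the stitching into two levels (first the tail at $s'$, then the continuation after $(s',a')$), and treating the zero-vector case and the existence of maximizers explicitly, are small refinements of the paper's one-shot argument rather than a different approach.
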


\begin{proof}
The proof proceeds by expanding the definition of the optimal preference action-value function and examining the optimality of the induced ``tail" policies.

\textbf{1. Expansion of $Q^*$:}
By Definition~\ref{def:optimal_pref_act_val_fun}, $Q^*(s,a,w)$ is the value achieved by a policy $\pi$ that maximizes the Chebyshev scalarization for the preference $w$. Let us denote this optimal policy by $\pi_w$. We can expand the value of this policy recursively:
\begin{equation}
\label{eq:proof:rec_expansion}
Q^*(s,a,w) = R(s,a) + \gamma \E_{s' \sim p(\cdot \mid s,a)} \left[ R(s', a') + \gamma V^{\pi_w(\cdot |s',a')}(p(\cdot \mid s', a')) \right]
\end{equation}
where $a' = \pi_w(s')$ is the action chosen by the policy at the next state, and $V^{\pi_w(\cdot |s',a')}(p(\cdot \mid s', a'))$ denotes the expected value of following policy $\pi_w$ from that point onward (i.e., given the history $(s',a')$).

\textbf{2. The Claim:}
To establish the recursive relationship in Equation~\eqref{app:thm:eq:rec_q_star}, we must prove that the tail value term corresponds to an optimal value $Q^*$ for some configuration of action and preference. Specifically, we claim that for every next state $s'$, the continuation value vector $V^{\pi_w(\cdot\mid s',a')}(p(\cdot \mid s', a'))$ is Pareto-optimal. If it is Pareto-optimal, then by  Theorem (Theorem~\ref{thm:cheb_suff}), it must be the maximizer for some specific preference $w'$.

\textbf{3. Proof by Contradiction (The ``Stitching" Argument):}
Suppose, for the sake of contradiction, that for the next-state distribution induced by a specific state $s'$ and action $a'=\pi_w(s')$, the continuation policy induced by $\pi_w(\cdot\mid s',a')$ is \textit{not} Pareto-optimal. 
This implies there exists an alternative policy $\pi_{\text{better}}$ that strictly dominates the original ``tail" policy from the distribution $p(\cdot \mid s', a')$. Formally:
\[
V^{\pi_{\text{better}}}(p(\cdot \mid s', a')) \succ V^{\pi_w(\cdot\mid s',a')}(p(\cdot \mid s', a'))
\]
We now construct a ``stitched" composite policy $\pi_{\text{new}}$ that switches behavior based on the history. The policy follows $\pi_{\text{better}}$ if the trajectory passes through the specific history $(s', a')$, and defaults to $\pi_w$ otherwise:
\begin{equation}
\pi_{\text{new}}(h) := 
\begin{cases} 
\pi_{\text{better}}(h) & \text{if } h \text{ is rooted at } (s', a') \\ 
\pi_w(h) & \text{otherwise} 
\end{cases}
\end{equation}
Since $\pi_{\text{better}}$ provides strictly higher value on the specific sub-branch starting at $(s', a')$ and $\pi_{\text{new}}$ behaves identically to $\pi_w$ everywhere else, the multi-objective value at the root $(s,a)$ must strictly dominate, i.e. $V^{\pi_w}(p(\cdot\mid s,a))\prec V^{\pi_{\text{better}}}(p(\cdot\mid s,a))$. 

Substituting this back into Equation~\eqref{eq:proof:rec_expansion}, the value of the new policy satisfies:

\begin{equation}
\resizebox{\columnwidth}{!}{%
$
R(s,a) + \gamma \mathbb{E} \Big[ R(s',\pi_w(s'))+\gamma V^{\pi_{\text{better}}(\cdot\mid s',\pi_w(s'))}\big(p(\cdot \mid s',\pi_w(s'))\big) \Big] \succ R(s,a) + \gamma \mathbb{E} \Big[ R(s',\pi_w(s'))+\gamma V^{\pi_w(\cdot\mid s',\pi_w(s'))}\big(p(\cdot \mid s',\pi_w(s'))\big) \Big] = Q^*(s,a,w)
$
}
\end{equation}

This implies that $\pi_{\text{new}}$ achieves a strictly dominating value vector than $\pi_w$.
Consequently, from Lemma~\ref{item:exactness} the two-stage optimization process involving the Chebyshev scalarization and tie-breaking would never choose $\pi_w$ since $\pi_{\text{better}}$ strictly dominates the value $\pi_w$ obtains from the intial distribution induced by $p(\cdot\mid s,a)$. 
That is $\mathbf{V}^{\pi_w}(p(\cdot \mid s,a)) \notin \arg\max\limits_{V} \Big\{ \|\mathbf{V}\|_2 \;\big|\; \mathbf{V} = \mathbf{V}^\pi(p(\cdot \mid s,a)),\pi \in \arg\max\limits_{\pi' \in \Pi} O_w(\mathbf{V}^{\pi'}(p(\cdot \mid s,a))) \Big\}$.

This contradicts the original assumption that $\pi_w$ was the maximizer for the preference $w$.

\textbf{4. Conclusion:}
Because the contradiction assumes the tail was not optimal, we conclude that the induced tail policy must be Pareto-optimal.
Since the continuation value is Pareto-optimal, there exists a preference $w'$ (specifically, the one aligned with the continuation value)  from Theorem~\ref{thm:cheb_suff} such that:
\[
R(s', a') + \gamma V^{\pi_w}(p(\cdot \mid s', a')) = Q^*(s', a', w')
\]
Substituting this back into Equation~\eqref{eq:proof:rec_expansion} yields the desired recursive relationship:
\[
Q^*(s, a, w) = R(s, a) + \gamma \E\limits_{s' \sim p(\cdot \mid s, a)} \left[ Q^*(s', \hat{A}(s'), \hat{\mathcal{W}}(s')) \right]
\]
\end{proof}

\section{Convergence Proof}\label{app:sec:conv_proof}

\textbf{Notation:} For brevity, we denote the estimate at the $ n$-th Bellman update $ \T^{ n}  Q$ by the shorthand $ \hat{ Q}_{ n}$.

\begin{theorem}\label{app:thm:bellman_env_conv}
    Let the value function be initialized to an optimistic upper bound $\hat{{Q}}_0(s,a,w) :=  \R=\frac{{R}_{\max}}{ 1-\gamma}\Biggl(\begin{smallmatrix}
1\\[-1pt]
1\\[-5pt]
\vdots\\[-0.5pt]
1
\end{smallmatrix}
\Biggr)\in \mathbb{R}^d_{\geq 0}$. Then for every $n \in \mathbb{N}$, state $s$, and action $a$, the following three conditions hold:
    \begin{enumerate}
        \item \textbf{Upper-bound Coverage:} The estimated set of values covers the true Pareto front.
        Formally, for every preference $w$, there exists a preference $w'$ such that:
        \[ {Q}^*(s,a,w) \preceq \hat{{Q}}_n(s,a,w') \]
        \item \textbf{Envelope Property:} The estimate is never goes below the Pareto performance frontier. 
        Formally, there is no preference $w'$ for which a Pareto-optimal value is strictly greater than the estimate in \textbf{all} components:
        \[ 
        \forall w, \nexists w' \quad \text{such that} \quad \hat{Q}_n(s,a,w)_i < Q^*(s,a,w')_i \quad \forall i=1,\dots,d 
        \]
                
        \item \textbf{Asymptotic Convergence:} Every estimate $\hat{{Q}}_n$ corresponds to a feasible policy execution up to a residual term.
        There exists a policy $\pi \in \Pi$ such that:
        \[ \hat{{Q}}_n(s,a,w) - {V}^\pi(s) \preceq \gamma^n {R} \]
    \end{enumerate}
\end{theorem}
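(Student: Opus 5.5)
We prove the three items in order, each by induction on $n$ or by contradiction, using the recursive decomposition of Theorem~\ref{app:thm:rec_decomp} as the main tool. Throughout, $\hat Q_n=\T^n Q$ with $\hat Q_0\equiv\R$.

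\textbf{Upper-bound coverage.} We induct on $n$. For $n=0$, every achievable value, and in particular every $Q^*(s,a,w)$, is bounded componentwise by $\R$. Hence any $w'$ works.

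For the inductive step, fix $(s,a,w)$. By Theorem~\ref{app:thm:rec_decomp},
\[
Q^*(s,a,w)=R(s,a)+\gamma\,\mathbb{E}_{s'}\big[Q^*(s',\hat A(s'),\hat{\W}(s'))\big].
\]
By the inductive hypothesis, for each $s'$ there is $\tilde w(s')$ with $Q^*(s',\hat A(s'),\hat{\W}(s'))\preceq \hat Q_{n-1}(s',\hat A(s'),\tilde w(s'))$. Taking expectations gives a candidate continuation $E:=\mathbb{E}_{s'}[\hat Q_{n-1}(s',\hat A(s'),\tilde w(s'))]$ that dominates the continuation of $Q^*$.

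We now choose $w'$ aligned with $E$, namely $w'=E/\|E\|_2$; if $E=0$ then $Q^*(s,a,w)=R(s,a)$ and any $w'$ works. By Lemma~\ref{lem:aligned_dominance}, applied to the finite (hence compact) set of achievable continuations
\[
\{\mathbb{E}_{s'}[\hat Q_{n-1}(s',\phi_A(s'),\phi_\W(s'))]\},
\]
the greedy maximizer chosen by $\T$ under $w'$ weakly dominates $E$. Adding $R(s,a)$ and scaling by $\gamma$ preserves $\preceq$, which gives $Q^*(s,a,w)\preceq\hat Q_n(s,a,w')$.

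The main obstacle is that this set need not be compact, since $\W$ is a continuum. We handle this either by assuming the argmax in Definition~\ref{def:pref_bell_operator} is attained, as the definition implicitly does, or by noting that Lemma~\ref{lem:aligned_dominance} only uses the inequality $O_{w'}(\text{maximizer})\ge O_{w'}(E)$, which holds for any maximizer that exists.

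\textbf{Envelope property.} We argue by contradiction, as in the sketch. Suppose $\hat Q_n(s,a,w)_i<Q^*(s,a,w')_i$ for all $i$. By item 1 there is $w''$ with $Q^*(s,a,w')\preceq\hat Q_n(s,a,w'')$, so $\hat Q_n(s,a,w)$ is strictly below $\hat Q_n(s,a,w'')$ in every component.

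Subtracting $R(s,a)$ and dividing by $\gamma>0$ shows that the continuation $C''$ selected for $w''$ strictly exceeds, in every component, the continuation $C$ selected for $w$. Since $C''$ is itself a feasible pair $(\phi_A,\phi_\W)$ in the argmax defining $\hat Q_n(s,a,w)$, and strict componentwise increase strictly increases $O_w$ (the argument of the sufficiency part of Theorem~\ref{app:thm:cheb_suff_nece}), we get $O_w(C'')>O_w(C)$. This contradicts the greedy choice for $w$. The case $n=0$ is immediate, since no value exceeds $\R$.

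\textbf{Asymptotic convergence.} We unroll $\T$. By induction, $\hat Q_n(s,a,w)$ equals the expected discounted sum of rewards along an $n$-step tree, plus $\gamma^n$ times an expectation of $\hat Q_0=\R$. The tree takes $a$ at step $0$; at each reached state $s'$, with current preference $w$, it takes the greedy pair $(\hat a(s'),\hat w(s'))$, which then becomes the current action and preference at the next level.

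Define a deterministic history-dependent policy $\pi$ that follows this tree for $n$ steps, recomputing $(\hat a,\hat w)$ from the history, and acts arbitrarily afterwards. Then $V^\pi$, evaluated from the distribution reached after taking $a$ in $s$, matches the first $n$ reward terms exactly. The difference is $\gamma^n(\R-\text{tail of }\pi)$, which lies between $\bm 0$ and $\gamma^n\R$ because rewards are nonnegative and bounded by $R_{\max}$. This yields the stated bound.
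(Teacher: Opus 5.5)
Your proposal is correct and follows essentially the same route as the paper: induction through the recursive decomposition plus aligned dominance for the upper bound, contradiction via strict monotonicity of $O_w$ under strict componentwise dominance for the envelope, and unrolling the greedy tree into a history-dependent policy whose value differs from $\hat Q_n$ only by a tail lying between $\bm 0$ and $\gamma^n\R$. Your two deviations are harmless and slightly cleaner. First, you align $w'$ with the dominating estimate continuation $E$ rather than with the $Q^*$ continuation; since $E$ lies in the feasible set, Lemma~\ref{lem:aligned_dominance} applies verbatim. Second, in the envelope argument you invoke the upper bound at level $n$ at the root, where the appendix invokes it at level $n-1$ per next state.
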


\begin{proof}

    \begin{enumerate}
        \item \textbf{Upper-bound Coverage:} 
        We prove this by induction on $n$.
        
        \noindent \textbf{Base Case ($n=0$):}
        By initialization, $\hat{{Q}}_0(s,a,w) = \frac{\R}{1-\gamma}$, which is the maximum possible return. Thus, ${Q}^*(s,a,w) \preceq \hat{{Q}}_0(s,a,w)$ trivially holds.

        \noindent \textbf{Induction Step:}
        Assume the hypothesis holds for $n=k$. That is, for any $s', a', w_{next}$, there exists some $\tilde{w}$ such that $Q^*(s',a',w_{next}) \preceq \hat{Q}_k(s',a',\tilde{w})$.
        We wish to show that for any $w$, there exists a $w'$ such that $Q^*(s,a,w)\preceq \hat{Q}_{k+1}(s,a,w')$.
        
        From the recursive decomposition (Theorem \ref{thm:rec_decomp}), we know:
        \[ Q^*(s, a, w) = R(s, a) + \gamma \E_{s' \sim p(\cdot \mid s, a)} \left[ Q^*(s', a'', w'') \right] \]
        We construct the specific target preference $w'$ aligned with this optimal continuation expectation:
        \[ w' := \frac{\E_{s' \sim p(\cdot \mid s, a)} \left[ Q^*(s', a'', w'') \right]}{\big\|\E_{s' \sim p(\cdot \mid s, a)} \left[ Q^*(s', a'', w'') \right]\big\|_2} \]
        
        By the induction hypothesis, for every optimal continuation value inside the expectation, there exists a preference in our estimate set $\hat{Q}_k$ that dominates it. 
        Specifically, for any $s'$, the value $Q^*(s', a'', w'')$ is dominated by some $\hat{Q}_k(s', a''', w''')$.
        
        Because the Bellman operator computes the maximizer over all possible actions and preferences, and since the specific preference $w'$ is perfectly aligned with the optimal continuation vector $\E_{s' \sim p(\cdot \mid s, a)} \left[ Q^*(s', a'', w'') \right]$, Lemma~\ref{lem:aligned_dominance} (Principle of Aligned Dominance) guarantees:
        \[
        \max_{a, \tilde{w}} O_{w'}\Big( \E\big[ \hat{Q}_k(s', a, \tilde{w}) \big] \Big) \ge O_{w'}\Big( \E\big[ Q^*(s', a'', w'') \big] \Big)
        \]
        This implies component-wise dominance for the updated value:
        \[ \E_{s'}\big[ \hat{Q}_k(s', \hat{a}, \hat{w}) \big] \succeq \E_{s'}\big[ Q^*(s', a'', w'') \big] \]
        Multiplying by $\gamma$ and adding the reward $R(s,a)$ to both sides proves that $Q^*(s,a,w)\preceq \hat{Q}_{k+1}(s,a,w')$.

\item \textbf{Envelope Property:}

        We prove this by contradiction for $n\geq 1$, if $n=0$, then $\hat{Q}_0(s,a,w)=\R$ and the Envelope property holds.

        Assume there exist preferences $w, w'$ such that the estimate is strictly dominated in every component:
        \begin{equation}\label{eq:env_assumption}
            \hat{Q}_n(s,a,w)_i < Q^*(s,a,w')_i \quad \text{for every } i=1,\dots,d
        \end{equation}

        Let the update for $\hat{Q}_n(s,a,w)$ be derived from the specific choice of action $\hat{a}$ and preference $\hat{w}$. 
        Similarly, let the optimal value $Q^*(s,a,w')$ be derived from the optimal action $a^*$ and preference $w^*$.
        
        Expanding both sides of Eq.~\ref{eq:env_assumption}, subtracting $R(s,a)$, and dividing by $\gamma$ yields:
        \begin{equation}\label{eq:env_strict_ineq}
            \E_{s'}\big[ \hat{Q}_{n-1}(s',\hat{a},\hat{w})\big]_i < \E_{s'}\big[Q^*(s',a^*,w^*)\big]_i \quad \forall i
        \end{equation}

        We now apply the \textbf{Upper-bound Coverage} property. For the optimal pair $(a^*, w^*)$, there must exist a preference $\tilde{w}$ such that:
        \begin{equation}\label{eq:env_weak_ineq}
             Q^*(s',a^*,w^*) \preceq \hat{Q}_{n-1}(s',a^*,\tilde{w})
        \end{equation}
        
        Substituting (\ref{eq:env_weak_ineq}) into (\ref{eq:env_strict_ineq}) gives the chain of inequalities:
        \begin{equation}
            \E_{s'}\big[ \hat{Q}_{n-1}(s',\hat{a},\hat{w})\big]_i < \E_{s'}\big[Q^*(s',a^*,w^*)\big]_i \leq \E_{s'}\big[\hat{Q}_{n-1}(s',a^*,\tilde{w})\big]_i \quad \forall i
        \end{equation}

        This implies that the vector expected from the chosen pair $(\hat{a}, \hat{w})$ is  component-wise strictly greater than the vector from the pair $(a^*, \tilde{w})$.
        
        Since any valid scalarization function is strictly increasing with respect to strong dominance, the scalarized score for $(a^*, \tilde{w})$ must be strictly higher than the score for $(\hat{a}, \hat{w})$. 
        That is, for any preference $w$, the Chebyshev scalarization $O_{w''}(z) = \min\limits_{i:w''_i>0}\Bigl( \frac{z_i}{{w''}_i}\Big)$ satisfies strict monotonicity with respect to strong dominance. Specifically, since the vector for $(\hat{a}, \hat{w})$ is strictly smaller than the vector for $(a^*, \tilde{w})$ in every component $i$, it follows that:
        \[
            O_{w''}\Big(\E_{s'}\big[ \hat{Q}_{n-1}(s',\hat{a},\hat{w})\big]\Big) = \min\limits_{i:w''_i>0}\bigg(\frac{\E[\dots]_{i}}{w''_{i}}\bigg) < \min\limits_{i:w''_i>0}\bigg(\frac{\E[\dots]_{i}}{w''_{i}}\bigg) = O_{w''}\Big(\E_{s'}\big[ \hat{Q}_{n-1}(s',a^*,\tilde{w})\big]\Big)
        \]
        
        This strict inequality violates the optimality of the update rule, as the Bellman operator should have selected the superior configuration $(a^*, \tilde{w})$ (or an even better one) instead of $(\hat{a}, \hat{w})$.
        
        Thus the assumption is not true, and the Envelope property holds.

        \item \textbf{Asymptotic Convergence:} 
        Consider an estimate $\hat{{Q}}_n(s,a,w)$. We construct a non-stationary deterministic policy $\pi$ defined by the ``backtracking" trace of the Bellman updates.
        
        Let $\pi$ execute action $a$ at $t=0$. For steps $t=1 \dots n-1$, let $\pi$ select the actions that were chosen as the maximizers during the recursive computation of $\hat{{Q}}_n$. 
        The accumulated rewards of this policy match the estimate exactly for the first $n$ steps.
        
        The difference between the estimate and the true value of $\pi$ arises only from the tail (after step $n$). Since rewards are bounded, the maximum possible value of the tail is bounded by $\gamma^n\R$.
        Thus:
        \[ \bm 0 \preceq \hat{{Q}}_n(s,a,w) - {V}^\pi(s) \preceq \gamma^n \R \]
    \end{enumerate}
\end{proof}

\subsection{Approximate Pareto Coverage}\label{app:sec:approx_POC}

In this subsection, we provide a detailed algorithmic characterization of the \textbf{policy execution} (or ``unrolling'') procedure derived from our Bellman operator, and we prove its convergence properties.

We begin by establishing the necessary notation and definitions. The core intuition is that the computation of the $n$-th Bellman update $\mathcal{T}^n Q(s_1, a_1, w_1)$ naturally relies on the optimal substructure of $\mathcal{T}^{n-1} Q(s_2, a_2, w_2)$, which in turn relies on $\mathcal{T}^{n-2} Q$, and so on. This recursive dependency allows us to ``unroll'' the sequence of actions and preferences $(a_1, w_1), (a_2, w_2), \dots$ originally selected during the computation of $\mathcal{T}^n Q$. We formally define the policy that performs this unrolling in Definition~\ref{def:induced_policy}.

Since the value function is initialized with the maximum obtainable rewards, $\mathcal{R} = \frac{\mathbf{R}_{\max}}{1-\gamma} \mathbf{1}\in \mathbb{R}^d_{\geq 0}$, the Bellman estimate $\mathcal{T}^n Q(s,a,w)$ can be decomposed into the sum of rewards obtained by this unrolled policy over $n$ steps plus a \textbf{residual term} $\gamma^n \mathcal{R}$.

Crucially, because $\mathcal{T}^n Q$ is simply this finite-horizon sum shifted by the constant residual $\gamma^n \mathcal{R}$, optimizing the Chebyshev scalarization $O_w(\mathcal{T}^n Q)$ allows us to identify Pareto-optimal policies for the $n$-step horizon. In Theorem~\ref{app:thm:duality}, we establish a rigorous \textbf{one-to-one correspondence} between the set of Pareto-optimal policies and the Pareto-optimal $\mathcal{T}^n Q$ values for each state and action.

This correspondence serves as the theoretical foundation for our deployment procedure. Given an $n$-step Bellman estimate $\mathcal{T}^n Q$, our policy selects an action that maximizes this estimate. By greedily optimizing for the $n$-step horizon at each step, we ensure that the resulting policy \textbf{weakly dominates} the $(n-1)$-step policy that was implicitly used to compute the optimal continuation $\mathcal{T}^{n-1} Q$. This monotonic improvement property allows us to prove the \textbf{Simulation Lemma} (Theorem~\ref{app:thm:recursive_trajectory_error}), which confirms that the actual accumulated rewards of our greedy policy satisfy the bounds predicted by the Bellman estimate, thereby concluding the proofs for approximate coverage and parsimony.



\begin{definition}[Horizon-Dependent Greedy Selector Functions]
\label{def:greedy_selectors}
For any Bellman update iterate $k \ge 0$, current state $s \in S$, action $a \in A$, and preference $w \in \mathcal{W}$, let the selector functions
\[
\hat{A}_k(s, a, w, \cdot): S \to A \quad \text{and} \quad \hat{W}_k(s, a, w, \cdot): S \to \mathcal{W}
\]
be the mappings that maximize the scalarized expected value of the $(k)$-th estimate. They are defined as some solution to:
\[
\big(\hat{A}_k(s,a,w, \cdot), \hat{W}_k(s,a,w, \cdot)\big) \in \operatorname*{arg\,max}_{\substack{\phi_A: S \to A,\\ \phi_W: S \to \mathcal{W}}} O_w\Big( \mathbb{E}_{s' \sim p(\cdot|s,a)} \big[\mathcal{T}^{k}Q(s', \phi_A(s'), \phi_W(s'))\big] \Big)
\]
\end{definition}

Using the greedy selector functions, we are able to select an action $\hat{A}_i(s,a,w,s')$ and a preference $\hat{W}_i(s,a,w,s')$ at each step, allowing us to unroll the rewards $R(s', \hat{A}_i(s,a,w,s'))$ involved in computing $\mathcal{T}^n Q(s,a,w)$ up to $n$ steps.


\begin{definition}[Induced Policy Definition]
\label{def:induced_policy}
Let $n \in \mathbb{N}$ be the iterate index of the Bellman update $\mathcal{T}^n Q$. For a starting configuration of state $s \in S$, action $a \in A$, and preference $w \in \mathcal{W}$, we define the policy $\pi_{n}^{s,a,w}$ as a mapping from a history $h$ and a next state $s'$ to an action.

Let $H_{s,a}$ denote the set of all histories starting with the pair $(s,a)$. That is, any $h \in H_{s,a}$ of length $t$ is a sequence $h = (s_1, a_1, s_2, a_2, \dots, s_t, a_t)$ where $s_1 = s$ and $a_1 = a$.

To define the policy behavior, we first define the notion of  \textbf{history consistency} and \textbf{induced preference sequence}:

\begin{enumerate}[label=(\roman*)]
    \item \textbf{History Consistency:} A history $h$ is said to be \textit{consistent} if, for all steps $1 < i \le t$, the action taken matches the greedy selector for the induced preference:
    \[
    a_i = \hat{A}_{n-(i-1)}(s_{i-1}, a_{i-1}, w_{i-1}, s_i)
    \]

    \item \textbf{Induced Preference Sequence:} For a history $h$, we construct the sequence of target preferences $(w_1, \dots, w_t)$ recursively:
    \[
    w_i = 
    \begin{cases}
        w & \text{if } i = 1 \\
        \hat{W}_{n-(i-1)}(s_{i-1}, a_{i-1}, w_{i-1}, s_i) & \text{if } 1 < i \le t
    \end{cases}
    \]
    
\end{enumerate}

Based on these definitions, the policy $\pi^{s,a,w}_n(s' \mid h)$ is defined as follows:

\begin{enumerate}
    \item \textbf{Case $t \ge n$ (Horizon Exceeded):}
    If the history length exhausts the horizon ($t \ge n$), the policy behaves arbitrarily.
    
    \item \textbf{Case $t < n$ (Within Horizon):}
    \begin{itemize}
        \item If the history $h$ is \textbf{consistent}, the policy selects the greedy action for the next state $s'$ according to the current preference $w_t$:
        \[
        \pi^{s,a,w}_n(s' \mid h) := \hat{A}_{n-t}(s_t, a_t, w_t, s')
        \]
        \item If the history $h$ is \textbf{inconsistent}, the policy behaves arbitrarily.
    \end{itemize}
\end{enumerate}
\end{definition}

Having formally defined the induced policy $\pi^{s,a,w}_m$ which explicitly unrolls the recursive action selection inherent in the Bellman estimate $\mathcal{T}^n Q$, we now introduce the concept of the $n$-step expected return. 
This definition provides the  quantity needed to subsequently prove Lemma~\ref{app:lemm:bellman_consistency}, demonstrating that the value computed by the operator is realized by the execution of the induced policy upto a residual term $\gamma^n\R$.

\begin{definition}[$n$-Step Value]
Given a policy $\pi:(S\times A)^*\times S\rightarrow A$, the $n$-step expected return starting from state $s$ is defined as:
\[
V^\pi_n(s) := \mathbb{E}\left[ \sum_{t=1}^{n} \gamma^{t-1} R(s_t, a_t) \;\middle|\; s_1 = s, \, a_t = \pi(s_t \mid h_{t-1}) \right]
\]
where $h_{t-1} = (s_1, a_1, \dots, s_{t-1}, a_{t-1})$ denotes the history of state-action pairs prior to the current step (with $h_{0} = \emptyset$).
\end{definition}

\begin{lemma}[Bellman Consistency]
\label{app:lemm:bellman_consistency}
For any horizon $n \in \mathbb{N}$, state $s \in S$, action $a \in A$, and preference $w \in \mathcal{W}$, the induced policy $\pi = \pi^{s,a,w}_n$ satisfies:
\[
\mathcal{T}^n Q(s,a,w) - \gamma^n \mathcal{R} = V^{\pi}_n(s).
\]
\end{lemma}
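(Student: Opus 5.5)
The plan is to prove the identity by induction on the horizon $n$, for all $(s,a,w)$ simultaneously. Throughout, I read $V^\pi_n(s)$ as the $n$-step return in which the first action is the prescribed $a$; this matches $\pi^{s,a,w}_n$, whose histories in $H_{s,a}$ begin with $(s,a)$.

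For the base case I would take $n=0$, or equivalently check $n=1$ directly.
\begin{itemize}
\item At $n=0$ we have $\mathcal{T}^0 Q(s,a,w)=\mathcal{R}$, so the left side is $\mathcal{R}-\gamma^0\mathcal{R}=\bm 0$. The right side is the empty sum $V^\pi_0(s)=\bm 0$.
\item At $n=1$ the definition of $\mathcal{T}$ gives $\mathcal{T}Q(s,a,w)=R(s,a)+\gamma\,\E_{s'}[\mathcal{R}]=R(s,a)+\gamma\mathcal{R}$. This equals $V^\pi_1(s)+\gamma\mathcal{R}$, because the one-step return of any policy whose first action is $a$ is $R(s,a)$.
\end{itemize}

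For the inductive step, assume the claim holds at horizon $n-1$ for every starting triple. Unfold one application of $\mathcal{T}$, using the \emph{same} maximizers that the greedy selectors of Definition~\ref{def:greedy_selectors} record:
\[
\mathcal{T}^nQ(s,a,w)=R(s,a)+\gamma\,\E_{s'\sim p(\cdot\mid s,a)}\big[\mathcal{T}^{n-1}Q(s',a'_{s'},w'_{s'})\big].
\]
Here $a'_{s'}$ and $w'_{s'}$ are the selector outputs at $(s,a,w,s')$, evaluated against $\mathcal{T}^{n-1}Q$. Ties in the argmax are harmless as long as we fix one selection and use it both in the operator and in the policy. Apply the induction hypothesis to each triple $(s',a'_{s'},w'_{s'})$:
\[
\mathcal{T}^{n-1}Q(s',a'_{s'},w'_{s'})=\gamma^{n-1}\mathcal{R}+V^{\pi'_{s'}}_{n-1}(s'),\qquad \pi'_{s'}:=\pi^{s',a'_{s'},w'_{s'}}_{n-1}.
\]
Substituting, the residual becomes $\gamma\cdot\gamma^{n-1}\mathcal{R}=\gamma^n\mathcal{R}$. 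It then remains to show
\[
R(s,a)+\gamma\,\E_{s'}\big[V^{\pi'_{s'}}_{n-1}(s')\big]=V^{\pi}_n(s),\qquad \pi=\pi^{s,a,w}_n.
\]

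The key step, and the one I expect to be the main obstacle, is a \emph{history-shift lemma}. It says that the continuation of $\pi^{s,a,w}_n$ after the prefix $(s,a)$, on histories rooted at $s'$, agrees with $\pi'_{s'}$ on every history that is reachable with positive probability.
\begin{itemize}
\item \textbf{Action at $s'$.} The first action of $\pi$ at $s'$ is $\hat A(s,a,w,s')=a'_{s'}$, which is exactly the first action of $\pi'_{s'}$.
\item \textbf{Induced preferences.} The preference sequence that $\pi$ builds starting at position $2$ coincides with the sequence $\pi'_{s'}$ builds from $w'_{s'}$ at position $1$.
\item \textbf{Selector indices.} Position $i$ in $\pi$ uses the selector index $n-(i-1)$. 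Position $i-1$ in $\pi'_{s'}$ uses $(n-1)-(i-2)=n-(i-1)$, so the indices match.
\item \textbf{Consistency and horizon cutoff.} A history is consistent for $\pi$ if and only if its shift is consistent for $\pi'_{s'}$, and the cutoff $t\ge n$ corresponds to $t-1\ge n-1$.
\end{itemize}

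Two points need care in this step.
\begin{itemize}
\item \textbf{Arbitrary branches.} On inconsistent or over-horizon histories the two policies may act differently. Inconsistent histories are never generated when the policy itself is executed, and over-horizon steps lie beyond the $n$-step sum. So neither affects $V_n$ or $V_{n-1}$, and I would record this as a short remark.
\item \textbf{Off-by-one in the selector index.} The unfolding at horizon $n$ maximizes against $\mathcal{T}^{n-1}Q$, so the first selector used must be the one tied to $\mathcal{T}^{n-1}Q$. I would fix the indexing convention of Definition~\ref{def:induced_policy} so that step $i$ uses the selector defined against $\mathcal{T}^{n-i}Q$, and then verify that the shift argument is unchanged under that convention.
\end{itemize}

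With the shift lemma in hand, the tower property of conditional expectation gives
\[
V^\pi_n(s)=R(s,a)+\gamma\,\E_{s'}\big[V^{\pi'_{s'}}_{n-1}(s')\big],
\]
because the $n$-step discounted sum splits into the first reward plus $\gamma$ times the $(n-1)$-step sum from $s'$. This closes the induction.
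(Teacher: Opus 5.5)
Your proposal is correct and rests on the same idea as the paper's proof sketch: the induced policy $\pi^{s,a,w}_n$ replays exactly the nested maximizers chosen in computing $\mathcal{T}^nQ$, so its $n$-step return equals the estimate minus the residual $\gamma^n\mathcal{R}$. Your induction on $n$ with the history-shift lemma (matching induced preferences, selector indices, consistency and horizon cutoff), together with your requirement that one tie-breaking selection be fixed for both the operator and the policy, supplies the bookkeeping the paper's one-paragraph sketch leaves implicit. The off-by-one worry does not arise: Definition~\ref{def:induced_policy} already prescribes $\hat{A}_{n-t}$ at history length $t$, so the first selector is $\hat{A}_{n-1}$, the one tied to $\mathcal{T}^{n-1}Q$.
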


\begin{proof}[Proof Sketch.]
The proof follows directly from the construction of the policy. The Bellman operator $\mathcal{T}^n$ is defined by a sequence of nested maximizations, where at each depth $t$, the operator selects the optimal strategy (action and preference) to maximize the continuation value.

By Definition~\ref{def:induced_policy}, the policy $\pi^{s,a,w}_n$ is explicitly constructed to replicate this optimization path. For every time step $t \le n$, the policy computes the sequence of greedy selector functions implied by the history and executes the exact action $\hat{a}_t$ that was chosen to maximize the corresponding stage of the Bellman update.

Since the policy chooses the action $a_t$ at every step to match the maximizer of the Bellman operator's $t$-th expansion, the distribution of trajectories induced by the policy is identical to those assumed in the calculation of $\mathcal{T}^nQ$. Consequently, the expected sum of discounted rewards generated by the policy (the value $V^\pi_n$) is identical to the value computed by the Bellman update (minus the discounted residual at step $n$).
\end{proof}

We next establish a one-to-one correspondence between the $n$-step values obtained by $n$-step Pareto-optimal policies and the $n$-step Bellman estimate in Theorem~\ref{app:thm:duality}.
However, prior to this, we prove in Lemma~\ref{lemma:subpolicy_optimality} that $n$-step Pareto-optimal policies exhibit optimal substructure, ensuring that the induced subpolicy following the first step is itself $(n-1)$-step Pareto-optimal.

\begin{lemma}[Pareto Optimal Subpolicy]
\label{lemma:subpolicy_optimality}
Let $\pi$ be a policy that is Pareto-optimal for the $n$-step value $V^\pi_n(s)$. Let $s'$ be any state reachable from $s$ at the second step with non-zero probability. 

Then, the \textbf{continuation policy} $\pi(\cdot \mid s, a)$, which represents the behavior of $\pi$ for all steps following the history $(s, a)$, must be Pareto-optimal for the remaining $(n-1)$-step value $V^{\pi(\cdot\mid s,a)}_{n-1}(s')$ starting from $s'$.
\end{lemma}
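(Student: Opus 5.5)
The plan is a direct ``stitching'' argument by contradiction, in the same spirit as the proof of Theorem~\ref{app:thm:rec_decomp}, but carried out for the finite-horizon value $V^\pi_n$. The finite horizon makes the bookkeeping exact. Let $a=\pi(s\mid\emptyset)$ be the first action of $\pi$. First I decompose the $n$-step value by conditioning on the second state:
\[
V^\pi_n(s) = R(s,a) + \gamma \sum_{s''} p(s''\mid s,a)\, V^{\pi(\cdot\mid s,a)}_{n-1}(s''),
\]
where $\pi(\cdot\mid s,a)$ denotes the continuation policy that acts on histories prefixed by $(s,a)$. This identity follows from the definition of $V^\pi_n$ by splitting off the $t=1$ term and applying the tower property over $s_2$. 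Only $n-1$ further rewards appear, so no residual term arises.

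Next, fix a state $s'$ with $p(s'\mid s,a)>0$ and suppose, for contradiction, that the continuation is not $(n-1)$-step Pareto-optimal from $s'$. Then there is a policy $\pi_{\text{better}}$ with
\[
V^{\pi(\cdot\mid s,a)}_{n-1}(s') \prec V^{\pi_{\text{better}}}_{n-1}(s').
\]
I define the stitched policy $\pi_{\text{new}}$ as follows:
\begin{itemize}
\item on the empty history, $\pi_{\text{new}}$ plays $a$;
\item on any history beginning with $(s,a,s')$, it follows $\pi_{\text{better}}$, re-indexed so that $s'$ is its initial state;
\item on every other history, it follows $\pi$.
\end{itemize}
This is again a deterministic history-dependent policy, so it lies in $\Pi$. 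Applying the decomposition above to $\pi_{\text{new}}$, every summand with $s''\neq s'$ is unchanged. The $s'$ summand is replaced by a vector that weakly dominates it in every component and strictly exceeds it in at least one.

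Then I conclude the strict dominance. Multiplying the strict improvement by the positive scalar $\gamma\, p(s'\mid s,a)$ preserves weak dominance componentwise. It also keeps strictness in the component where the improvement was strict. Adding the unchanged terms therefore gives $V^\pi_n(s)\prec V^{\pi_{\text{new}}}_n(s)$. This contradicts the $n$-step Pareto-optimality of $\pi$, so the continuation must be Pareto-optimal from every reachable $s'$.

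The main obstacle I expect is the point where positivity of $p(s'\mid s,a)$ is essential. If one instead stitched with respect to the whole next-state distribution, one would only obtain Pareto-optimality of the \emph{expected} continuation, not of each individual branch. The per-state claim needs the branchwise replacement together with $p(s'\mid s,a)>0$, which is exactly the hypothesis of the lemma. A second point to handle with care is well-definedness of the stitched policy. Histories rooted at $(s,a,s')$ are disjoint from the other branches, so the case split is unambiguous. The re-indexing of $\pi_{\text{better}}$ shifts the horizon by one, so its $(n-1)$-step value is exactly what enters the $n$-step value of $\pi_{\text{new}}$.
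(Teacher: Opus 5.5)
Your proposal is correct and is essentially the paper's own argument: stitch a strictly better $(n-1)$-step policy onto the branch rooted at $(s,a,s')$, note that the $n$-step values then differ by $\gamma\,p(s'\mid s,a)$ times a strictly dominating difference, and contradict the $n$-step Pareto-optimality of $\pi$; you are in fact more careful than the paper about componentwise strictness and about the stitched policy being well defined. One caveat, which the paper shares: multiplying by the ``positive scalar $\gamma\,p(s'\mid s,a)$'' tacitly requires $\gamma>0$. The MOMDP definition allows $\gamma=0$, and then $V^\pi_n(s)=R(s,a)$ does not depend on the continuation, so the lemma can fail for $n\ge 2$.
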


\begin{proof}
Suppose for the sake of contradiction that the continuation policy $\pi(\cdot \mid s, a)$ is \textit{not} Pareto-optimal for the state $s'$. 

This implies there exists an alternative continuation policy $\tilde{\pi}$ such that its value strictly dominates the original continuation value at $s'$:
\[
V^{\tilde{\pi}}_{n-1}(s') \succ V^{\pi(\cdot \mid s, a)}_{n-1}(s').
\]
We can construct a new global policy $\pi^*$ by ``stitching'' the original policy behavior at the first step and replacing the behavior for the history $(s, a)$ and state $s'$ onwards with $\tilde{\pi}$.


By the linearity of expectation, the total value of this new policy is:
\[
V^{\pi^*}_n(s) = R(s,a) + \gamma \E_{\hat{s}} V^{\pi^*}_{n-1}(\hat{s}).
\]
Since $\pi^*$ behaves identically to $\pi$ for all states $\hat{s} \neq s'$ (and behaves like $\tilde{\pi}$ at $s'$), the difference in value is:
\[
V^{\pi^*}_n(s) - V^{\pi}_n(s) = \gamma \cdot p(s'|s,a) \cdot \left( V^{\tilde{\pi}}_{n-1}(s') - V^{\pi(\cdot \mid s, a)}_{n-1}(s') \right).
\]
Since the term in the parentheses is strictly positive (by our assumption of dominance), it follows that $V^{\pi^*}_n(s) \succ V^{\pi}_n(s)$.

This implies that $\pi^*$ dominates $\pi$, which contradicts the initial assumption that $\pi$ was Pareto-optimal. Therefore, the continuation policy must be Pareto-optimal.
\end{proof}

\begin{theorem}[Duality of $n$-th Pareto-Optimal Estimate and $n$-step Pareto-Optimal Policy]\label{app:thm:duality}
Let $\mathcal{T}^n Q(s,a,w)$ denote the $n$-th Bellman estimate. The following relationships hold between the Pareto optimality of the Bellman estimates and the Pareto optimality of the resulting policies:

\begin{enumerate}
    \item \textbf{$n$-Step Preference Optimality $\implies$ $n$-Step Policy Optimality} \\
    Let $w \in \mathcal{W}$ be a preference such that the corresponding Bellman estimate is Pareto-optimal. That is, assume:
    \[
    \forall w' \in \mathcal{W}, \quad \mathcal{T}^n Q(s,a,w) \nprec \mathcal{T}^n Q(s,a,w').
    \]
    Then, the  policy $\pi = \pi_n^{s,a,w}$ is Pareto-optimal with respect to the $n$-step return among all policies starting with the same state $s$ and the same action $a$. Specifically, for every history-dependent deterministic policy $\pi' \in \Pi$ such that $\pi'(s) = \pi(s)=a$:
    \[
    V^{\pi}_n(s) \nprec V^{\pi'}_n(s).
    \]

    \item \textbf{$n$-Step Policy Optimality $\implies$ $n$-Step Preference Optimality} \\
    Let $\pi$ be any history-dependent deterministic policy such that $\pi(s) = a$ and its $n$-step value is Pareto-optimal among all such policies (i.e., $\forall \pi' \in \Pi$ with $\pi'(s)=a$, $V^{\pi}_n(s) \nprec V^{\pi'}_n(s)$).
    
    Define the specific preference vector $w^*$ based on the direction of the future expected return:
    \[
    w^* = \frac{V^\pi_n(s) + \gamma^n \mathcal{R} - R(s,a)}{\left\| V^\pi_n(s) + \gamma^n \mathcal{R} - R(s,a) \right\|_2}.
    \]
    Then, the Bellman estimate targeting this preference recovers the policy's value and is itself Pareto-optimal:
    \[
    \mathcal{T}^n Q(s, a, w^*) = V^\pi_n(s) + \gamma^n \mathcal{R},
    \]
    and
    \[
    \forall w' \in \mathcal{W}, \quad \mathcal{T}^n Q(s, a, w^*) \nprec \mathcal{T}^n Q(s, a, w').
    \]
\end{enumerate}
\end{theorem}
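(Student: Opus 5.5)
Both directions reduce to the Bellman Consistency identity (Lemma~\ref{app:lemm:bellman_consistency}), $\mathcal{T}^nQ(s,a,w)=V^{\pi^{s,a,w}_n}_n(s)+\gamma^n\mathcal{R}$, combined with a \emph{realizability} claim proved by induction on $n$: for every policy $\pi'$ with $\pi'(s)=a$ there is a preference $w'$ such that
\[
V^{\pi'}_n(s)+\gamma^n\mathcal{R}\preceq \mathcal{T}^nQ(s,a,w').
\]
I will establish this claim first and then derive parts 1 and 2 from it.

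\textbf{Realizability.} The case $n=0$ is trivial. For the inductive step, write
\[
V^{\pi'}_n(s)+\gamma^n\mathcal{R}=R(s,a)+\gamma\,\mathbb{E}_{s'}\big[V^{\pi'(\cdot\mid s,a)}_{n-1}(s')+\gamma^{n-1}\mathcal{R}\big].
\]
By the induction hypothesis, for each $s'$ the bracketed term is dominated by $\mathcal{T}^{n-1}Q(s',a'_{s'},w'_{s'})$, where $a'_{s'}=\pi'(s'\mid s,a)$. Set $w'$ to be the normalized direction of $\mathbb{E}_{s'}[V^{\pi'(\cdot\mid s,a)}_{n-1}(s')+\gamma^{n-1}\mathcal{R}]$; this vector is nonzero because $\mathcal{R}\succ 0$. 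The operator's maximizer $(\hat a,\hat w)$ for $O_{w'}$ then satisfies
\[
O_{w'}\big(\mathbb{E}[\mathcal{T}^{n-1}Q(s',\hat a,\hat w)]\big)\ \ge\ O_{w'}\big(\text{target}\big).
\]
The aligned-dominance computation of Lemma~\ref{lem:aligned_dominance} only uses the inequality $O_w(v^*)\ge O_w(v)$, not membership of $v$ in the feasible set. Applying it gives componentwise dominance of the target, and therefore the claim. This is the same argument as the Upper-bound Coverage part of Theorem~\ref{app:thm:bellman_env_conv}, run with arbitrary policies in place of $Q^*$.

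\textbf{Part 1.} Suppose, for contradiction, that $V^{\pi}_n(s)\prec V^{\pi'}_n(s)$ for some $\pi'$ with $\pi'(s)=a$. Bellman Consistency and realizability give
\[
\mathcal{T}^nQ(s,a,w)=V^\pi_n(s)+\gamma^n\mathcal{R}\prec V^{\pi'}_n(s)+\gamma^n\mathcal{R}\preceq \mathcal{T}^nQ(s,a,w').
\]
This contradicts the assumed Pareto optimality of the estimate $\mathcal{T}^nQ(s,a,w)$.

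\textbf{Part 2, Pareto optimality of the estimate.} Let $u:=V^\pi_n(s)+\gamma^n\mathcal{R}$. Suppose some $w'$ gives $\mathcal{T}^nQ(s,a,w^*)\prec\mathcal{T}^nQ(s,a,w')$. Bellman Consistency applied to $w'$ shows that the right-hand side equals $V^{\pi_n^{s,a,w'}}_n(s)+\gamma^n\mathcal{R}$. Once the equality $\mathcal{T}^nQ(s,a,w^*)=u$ is established, this gives $V^\pi_n(s)\prec V^{\pi^{s,a,w'}_n}_n(s)$, contradicting the optimality of $\pi$.

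\textbf{Part 2, the equality.} Realizability at $w^*$ gives $u\preceq \mathcal{T}^nQ(s,a,w'')$ for some $w''$. I need $w''=w^*$ and equality. I will rerun the inductive construction with $w'=w^*$; here the normalization of $u-R(s,a)$, as in the definition of $w^*$, is the right alignment up to the factor $\gamma$, which scaling invariance of the direction absorbs. This yields $u\preceq\mathcal{T}^nQ(s,a,w^*)=V^{\pi^{s,a,w^*}_n}_n(s)+\gamma^n\mathcal{R}$. If the inequality were strict, the induced policy $\pi^{s,a,w^*}_n$ would strictly dominate $\pi$ in $n$-step value, contradicting the optimality of $\pi$. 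Hence equality holds.

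\textbf{Main obstacle.} The delicate step is exactly this identification. Aligned dominance only yields weak dominance, and ties under $O_{w^*}$ may select an incomparable maximizer, as in Example~\ref{app:eg:incomparable_aligned_dominance}. My answer is that any selected maximizer weakly dominates $u-R(s,a)$ divided by $\gamma$, so incomparability cannot occur: the selected vector is either equal to $u$ or strictly dominates it, and the Pareto optimality of $\pi$ (via Bellman Consistency) rules out strict dominance. I expect to also check that the continuation of $\pi$ is $(n-1)$-step Pareto-optimal, using Lemma~\ref{lemma:subpolicy_optimality}, although the argument above may not need it.
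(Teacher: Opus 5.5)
Your proof is correct, but it is organized differently from the paper's.

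\textbf{The paper's route.} The paper uses a simultaneous induction in which the two parts feed each other.
\begin{itemize}
\item Part 1 invokes the subpolicy-optimality lemma to make the continuation of the dominating $\pi'$ Pareto-optimal. It then uses Part 2 at horizon $n-1$ to realize that continuation exactly.
\item Part 2 compares $O_{w^*}$-scores. It uses Part 1 at horizon $n-1$ plus stitching to exclude a strictly larger score. It stitches once more to show the estimate is Pareto-optimal.
\end{itemize}

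\textbf{Your route.} You prove a single inductive fact: the $n$-step analogue of the Upper-bound Coverage argument, stated for arbitrary policies and with the explicit aligned preference. You then derive both parts directly from Bellman Consistency at the root, with no stitching and no use of the subpolicy lemma.

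\textbf{What your route buys.} It is more robust exactly where the paper's argument is thin.
\begin{itemize}
\item You never need the ``otherwise we could improve it'' step, which replaces $\pi'$ by a policy with a Pareto-optimal continuation. That step silently presupposes such a dominator exists.
\item You never apply the Part~1 hypothesis to next-state estimates $\mathcal{T}^{n-1}Q(s',\hat{A}_{n-1}(\dots),\hat{W}_{n-1}(\dots))$ that are not yet known to be Pareto-optimal.
\item You correctly obtain vector equality in Part 2. The paper passes from equality of $O_{w^*}$-scores straight to $\mathcal{T}^nQ(s,a,w^*)=V^\pi_n(s)+\gamma^n\mathcal{R}$. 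Aligned dominance alone does not give this: it yields only weak dominance, which may be strict in non-bottleneck coordinates. Your step of weak dominance, then Bellman Consistency, then optimality of $\pi$ is precisely what closes that gap.
\end{itemize}

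\textbf{What the paper's route buys.} Its mutual induction exhibits, inside the proof, that each next-state continuation of an optimal policy is matched exactly by some $\mathcal{T}^{n-1}Q(s',\pi(s'\mid s,a),w_{s'})$. This mirrors the recursive decomposition theorem. In your version that fact is a corollary (Part 2 at horizon $n-1$ plus the subpolicy lemma) rather than an ingredient.

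\textbf{One dependency to state.} Your argument places all its weight on Bellman Consistency, which the paper only sketches. It therefore implicitly requires that the induced policy's selectors are exactly the argmax choices made by $\mathcal{T}$; say so explicitly.
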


\begin{proof}
We prove both statements by induction on the horizon $n$.

\textbf{Base Case ($n=1$):}
For any $s, a, w$, $\mathcal{T}^1 Q(s,a,w) = R(s,a) + \gamma \mathcal{R}$ and $V^\pi_1(s) = R(s,a)$. Since $R(s,a)$ is constant given $a$, all estimates and policy values are identical. No value strictly dominates another, so all are Pareto-optimal. Both statements hold vacuously.

\textbf{Inductive Step:}
Assume both statements hold for horizon $n-1$. We prove them for horizon $n$.

\textbf{Part 1: Preference Optimality $\implies$ Policy Optimality}
Assume $\mathcal{T}^n Q(s,a,w)$ is Pareto-optimal. Let $\pi = \pi_n^{s,a,w}$.
Suppose for contradiction that $\pi$ is \textbf{not} Pareto-optimal. Then there exists a policy $\pi'$ (where $\pi'(s)=a$) such that $V^{\pi'}_n(s) \succ V^{\pi}_n(s)$.
Expanding the values:
\[
R(s,a) + \gamma \mathbb{E}_{s'}\left[ V^{\pi'(\cdot|s,a)}_{n-1}(s') \right] \succ R(s,a) + \gamma \mathbb{E}_{s'}\left[ V^{\pi(\cdot|s,a)}_{n-1}(s') \right].
\]
By Lemma~\ref{lemma:subpolicy_optimality}, the continuation of $\pi'$ must be Pareto-optimal for the $(n-1)$ horizon (otherwise we could improve it to get an even better policy). 
By the \textbf{Induction Hypothesis (Part 2)}, for every state $s'$, there exists a preference $w'_{s'}$ such that the Bellman update recovers the value of this continuation policy:
\[
\mathcal{T}^{n-1}Q(s', \pi'(s'\mid s,a), w'_{s'}) = V^{\pi'(\cdot|s,a)}_{n-1}(s') + \gamma^{n-1}\mathcal{R}.
\]
Now, consider the global Bellman update $\mathcal{T}^n$ calculated with a preference $w_{\text{dom}}$ that is aligned with the expected value vector $\E\limits_{s'}V_{n-1}^{\pi'(\cdot \mid s,a)}(s')$ of $\pi'$ after the first step.
By the Principle of Aligned Dominance~\ref{lem:aligned_dominance}, since the Bellman operator maximizes  scalarized expectation over all possible next actions and preferences, the resulting value must be at least as high as that of $\pi'$ in the direction of $w_{\text{dom}}$. 
Consequently, there exists a preference $w_{\text{dom}}$ such that:
\[
\mathcal{T}^n Q(s, a, w_{\text{dom}}) \succeq V^{\pi'}_n(s) + \gamma^n\mathcal{R} \succ V^{\pi}_n(s) + \gamma^n\mathcal{R}.
\]
Using the consistency property (Lemma~\ref{app:lemm:bellman_consistency}), $V^{\pi}_n(s) + \gamma^n\mathcal{R} = \mathcal{T}^n Q(s,a,w)$. Thus:
\[
\mathcal{T}^n Q(s, a, w_{\text{dom}}) \succ \mathcal{T}^n Q(s,a,w).
\]
This contradicts the assumption that $\mathcal{T}^n Q(s,a,w)$ was Pareto-optimal. 
Thus we arrive at a contradiction because of our assumption that $\pi$ is not $n$-step Pareto-optimal starting from the state $s$. 
So, $\pi$ has to be $n$-step Pareto-optimal.

\textbf{Part 2: $n$-Step Policy Optimality $\implies$ $n$-Step Preference Optimality}

Assume $\pi$ is a Pareto-optimal policy for the first $n$-steps with $\pi(s)=a$. That is, for any $\pi' \in \Pi$ such that $\pi'(s)=\pi(s)$, we have $V^{\pi}_n(s) \nprec V^{\pi'}_n(s)$.

We define the expected future return vector adjusted for the tail:
\[
U^* := V^\pi_n(s) - R(s,a) + \gamma^n \mathcal{R}.
\]
We define the target preference $w^*$ to be aligned with this return:
\[
w^* := \frac{U^*}{\|U^*\|_2}.
\]
We now analyze the Bellman update $\mathcal{T}^n Q(s,a,w^*)$. By Definition~\ref{def:greedy_selectors}, this operator computes the expectation over the optimal next-step greedy selectors:
\[
\mathcal{T}^n Q(s,a,w^*) = R(s,a) + \gamma \mathbb{E}_{s'} \left[ \mathcal{T}^{n-1}Q\Big(s', \hat{A}_{n-1}(s,a,w^*,s'), \hat{W}_{n-1}(s,a,w^*,s')\Big) \right].
\]

\textbf{Step 2.1: Feasibility (Lower Bound)} \\
By Lemma~\ref{lemma:subpolicy_optimality}, the continuation policy $\pi(\cdot|s,a)$ is Pareto-optimal for the $(n-1)$ horizon.
By the \textbf{Inductive Hypothesis (Part 2)}, for every next state $s'$, there exists a preference $w_{s'}$ such that the $(n-1)$-th estimate recovers this continuation value:
\[
\mathcal{T}^{n-1}Q(s', \pi(s'|s,a), w_{s'}) = V^{\pi(\cdot|s,a)}_{n-1}(s') + \gamma^{n-1}\mathcal{R}.
\]
The Bellman operator maximizes the scalarized objective over \textit{all} possible next-step actions and preferences. Since the specific action chosen by the policy, $a'_{s'} = \pi(s' \mid s, a)$, and the preference $w_{s'}$ constitute a valid candidate pair $(a'_{s'}, w_{s'})$ in this search space, the scalar value achieved by the optimal selectors $\hat{A}_{n-1}, \hat{W}_{n-1}$ must be at least that of the configuration used by $\pi$:
\[
O_{w^*}(\mathcal{T}^n Q(s,a,w^*) - R(s,a)) \ge O_{w^*}(U^*).
\]

\textbf{Step 2.2: Contradiction (Equality)} \\
Suppose for the sake of contradiction that the inequality is strict. That is, the Bellman operator finds a configuration of greedy selectors $\hat{A}_{n-1}$ and $\hat{W}_{n-1}$ that yields a strictly higher scalarized value:
\[
O_{w^*}\left( \gamma \mathbb{E}_{s'} \left[ \mathcal{T}^{n-1}Q(s', \hat{A}_{n-1}(\dots), \hat{W}_{n-1}(\dots)) \right] \right) > O_{w^*}(U^*).
\]
Let $V_{new}$ denote this superior expected value found by the operator. By the \textbf{Inductive Hypothesis (Part 1)}, for each next state $s'$, the term $\mathcal{T}^{n-1}Q(s', \hat{A}_{n-1}(\dots), \hat{W}_{n-1}(\dots))$ corresponds to the value of some valid Pareto-optimal policy $\pi'_{s'}$ starting at $s'$.

We can ``stitch" these policies together to form a new global policy $\pi_{stitch}$. We define $\pi_{stitch}$ such that it executes action $a$ at the root state $s$, and for any subsequent state $s'$ reached, it adopts the behavior of the corresponding sub-policy $\pi'_{s'}$. Formally, for a history $h$:
\[
\pi_{stitch}(h) = 
\begin{cases} 
a & \text{if } h = (s) \\
\pi'_{s'}(h') & \text{if } h = (s, a ) \circ h'
\end{cases}
\]
where $h'$ denotes the local history suffix starting from $s'$.

The value of this new policy is precisely the value found by the operator:
\[
V^{\pi_{stitch}}_n(s) - R(s,a) + \gamma^n \mathcal{R}= \gamma \mathbb{E}_{s'} \left[ \mathcal{T}^{n-1}Q(s', \hat{A}_{n-1}(\dots), \hat{W}_{n-1}(\dots)) \right]
\]
Substituting this back into our inequality:
\[
O_{w^*}(V^{\pi_{stitch}}_n(s) - R(s,a) + \gamma^n \mathcal{R}) > O_{w^*}(V^{\pi}_n(s) - R(s,a) + \gamma^n \mathcal{R}).
\]
By Lemma~\ref{lem:aligned_dominance} (Principle of Aligned Dominance), since $w^*$ is the unit vector strictly aligned with the direction of the RHS vector, a strictly higher scalarization score implies strict vector dominance:
\[
V^{\pi_{stitch}}_n(s) \succ V^{\pi}_n(s).
\]
This contradicts the assumption that $\pi$ is Pareto-optimal for the $n$-step discounted rewards. Therefore, strict inequality is impossible, and we must have equality:
\[
\mathcal{T}^n Q(s,a,w^*) = V^\pi_n(s) + \gamma^n \mathcal{R}.
\]

\textbf{Step 2.3: Pareto Optimality of the Estimate} \\
Finally, we prove that the estimate $\mathcal{T}^n Q(s,a,w^*)$ is itself Pareto-optimal. Suppose not; then there would exist a preference vector $w'$ such that:
\[
\mathcal{T}^n Q(s,a,w') \succ \mathcal{T}^n Q(s,a,w^*).
\]
Without loss of generality, we assume that the estimate $\mathcal{T}^n Q(s,a,w')$ is itself Pareto-optimal (otherwise, we could simply select a further dominating estimate).

We expand the term on the left-hand side using the definition of the Bellman operator:
\[
\mathcal{T}^n Q(s,a,w') = R(s,a) + \gamma \mathbb{E}_{s' \sim p(\cdot \mid s,a)}\left[ \mathcal{T}^{n-1}Q\big(s', \hat{A}_{n-1}(s,a,w',s'), \hat{W}_{n-1}(s,a,w',s')\big) \right].
\]
Let $\hat{a}_{s'} = \hat{A}_{n-1}(s,a,w',s')$ and $\hat{w}_{s'} = \hat{W}_{n-1}(s,a,w',s')$. Since $\mathcal{T}^nQ(s,a,w')$ is Pareto-optimal, each inner term $\mathcal{T}^{n-1}Q(s', \hat{a}_{s'}, \hat{w}_{s'})$ must be a Pareto-optimal estimate for the $(n-1)$ horizon (otherwise, we could replace it with a dominating term to obtain a quantity strictly more than $\T^nQ(s,a,w')$ leading to a contradiction of $\T^nQ(s,a,w')$  being Pareto-optimal).

By the \textbf{Inductive Hypothesis (Part 1)}, for each $s'$, there exists a Pareto-optimal policy $\pi'_{s'}$ starting at $s'$ such that its value matches this estimate:
\[
V^{\pi'_{s'}}_{n-1}(s') + \gamma^{n-1}\mathcal{R} = \mathcal{T}^{n-1}Q(s', \hat{a}_{s'}, \hat{w}_{s'}).
\]
We can now construct a new global policy $\pi_{new}$ that takes action $a$ at state $s$, and for any subsequent history $h = ((s, a) , h')$, follows the policy $\pi'_{s'}(h')$. The value of this stitched policy is:
\[
V^{\pi_{new}}_n(s) + \gamma^n \mathcal{R} = R(s,a) + \gamma \mathbb{E}_{s'} \left[ V^{\pi'_{s'}}_{n-1}(s') + \gamma^{n-1}\mathcal{R} \right] = \mathcal{T}^n Q(s,a,w').
\]
Combining this with our initial dominance assumption:
\[
V^{\pi_{new}}_n(s) + \gamma^n \mathcal{R} \succ \mathcal{T}^n Q(s,a,w^*) = V^\pi_n(s) + \gamma^n \mathcal{R}.
\]
Simplifying, we get $V^{\pi_{new}}_n(s) \succ V^\pi_n(s)$. This contradicts the assumption that $\pi$ is Pareto-optimal. Thus, no such dominating preference $w'$ can exist.
\end{proof}

In the next theorem, we establish the transformation in the preference required to obtain a one-to-one correspondence between Pareto-optimal policies and Bellman estimates.

\begin{theorem}[Bellman-Policy Pareto Equivalence and Scalarization Alignment]
Combining the Bellman Consistency (Lemma~\ref{app:lemm:bellman_consistency}) and the Pareto Duality (Theorem~\ref{app:thm:duality}), we establish the following equivalence:

\begin{enumerate}
    \item \textbf{Estimates generate optimal policies aligned with future returns:} \\
    If the Bellman estimate $\mathcal{T}^n Q(s,a,w)$ is Pareto-optimal (i.e., $\forall w' \in \mathcal{W}, \mathcal{T}^nQ(s,a,w) \nprec \mathcal{T}^nQ(s,a,w')$), then the resulting policy $\pi = \pi_n^{s,a,w}$ is Pareto-optimal for the $n$-step return among all policies starting with action $a$ (i.e., $\forall \pi' \in \Pi \text{ s.t. } \pi'(s)=a, V^\pi_n(s) \nprec V^{\pi'}_n(s)$). 
    
    Furthermore, this policy maximizes the scalarized expected return for the specific preference direction $w_{\text{eff}}$ aligned with the estimated value adjusted for the tail:
    \[
    w_{\text{eff}} = \frac{\mathcal{T}^n Q(s,a,w) - \gamma^n \mathcal{R}}{\left\| \mathcal{T}^n Q(s,a,w) - \gamma^n \mathcal{R} \right\|_2}.
    \]
    That is, $\forall \pi' \in \Pi \text{ s.t. } \pi'(s)=a, \quad O_{w_{\text{eff}}}(V^\pi_n(s)) \geq O_{w_{\text{eff}}}(V^{\pi'}_n(s))$.

    \item \textbf{Optimal policies correspond to specific Bellman estimates:} \\
    Conversely, let $\pi$ be a history-dependent policy that takes action $a$ at state $s$ and is Pareto-optimal for the $n$-step horizon among all such policies (i.e., $\forall \pi' \in \Pi \text{ s.t. } \pi'(s)=\pi(s)=a, V^{\pi}_n(s) \nprec V^{\pi'}_n(s)$).

    Then, its value is exactly recovered by the Bellman operator targeting the preference $w^*$ aligned with the policy's actual future return:
    \[
    w^* = \frac{V^\pi_n(s) - R(s,a) + \gamma^n \mathcal{R}}{\left\| V^\pi_n(s) - R(s,a) + \gamma^n \mathcal{R} \right\|_2}.
    \]
    Specifically, the Bellman estimate recovers the policy value:
    \[
    \mathcal{T}^n Q(s, a, w^*) = V^\pi_n(s) + \gamma^n \mathcal{R},
    \]
    and this estimate is itself Pareto-optimal (i.e., $\forall w' \in \mathcal{W}, \mathcal{T}^n Q(s, a, w^*) \nprec \mathcal{T}^n Q(s, a, w')$).

\end{enumerate}
\end{theorem}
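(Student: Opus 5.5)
Both parts are essentially a repackaging of the Duality theorem (Theorem~\ref{app:thm:duality}) combined with Bellman Consistency (Lemma~\ref{app:lemm:bellman_consistency}). I will derive each claim from these, and only the extra scalarization statement in Part~1 needs a genuinely new argument.

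\textbf{Part 2.} This is verbatim Part~2 of Theorem~\ref{app:thm:duality}. The preference $w^*$ there is normalized from $V^\pi_n(s)+\gamma^n\mathcal{R}-R(s,a)$, which is the same vector as the one here. So I would simply cite it: it gives $\mathcal{T}^nQ(s,a,w^*)=V^\pi_n(s)+\gamma^n\mathcal{R}$, and it gives Pareto-optimality of that estimate among all $\mathcal{T}^nQ(s,a,w')$.

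\textbf{Part 1, first claim.} Pareto-optimality of $\pi=\pi^{s,a,w}_n$ among policies starting with action $a$ is Part~1 of Theorem~\ref{app:thm:duality}, so I cite it directly.

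\textbf{Part 1, scalarization claim.} By Lemma~\ref{app:lemm:bellman_consistency}, $\mathcal{T}^nQ(s,a,w)-\gamma^n\mathcal{R}=V^\pi_n(s)$, so $w_{\text{eff}}=V^\pi_n(s)/\|V^\pi_n(s)\|_2$; that is, $w_{\text{eff}}$ is aligned with the policy's own $n$-step value. The case $V^\pi_n(s)=\mathbf{0}$ must be excluded or treated separately, since $w_{\text{eff}}$ is then undefined. Directly from the definition of $O$, $O_{w_{\text{eff}}}(V^\pi_n(s))=\|V^\pi_n(s)\|_2$. Suppose for contradiction that some $\pi'$ with $\pi'(s)=a$ has $O_{w_{\text{eff}}}(V^{\pi'}_n(s))>\|V^\pi_n(s)\|_2$. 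Unwinding the minimum over $i$ with $w_{\text{eff},i}>0$, this gives $V^{\pi'}_n(s)_i>V^\pi_n(s)_i$ on the support of $V^\pi_n(s)$, which is the computation in Lemma~\ref{lem:aligned_dominance} but with strict inequalities. Off the support, $V^\pi_n(s)_i=0\le V^{\pi'}_n(s)_i$ because rewards are non-negative. Hence $V^\pi_n(s)\prec V^{\pi'}_n(s)$: the relation is weak dominance, and it is strict on at least one nonzero coordinate. This contradicts the $n$-step Pareto-optimality just established, so $O_{w_{\text{eff}}}(V^\pi_n(s))\ge O_{w_{\text{eff}}}(V^{\pi'}_n(s))$ for every such $\pi'$.

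The main obstacle is the step turning a strictly larger aligned Chebyshev score into strict dominance. It needs non-negativity to handle coordinates where $w_{\text{eff},i}=0$, and it needs the degenerate zero-value case handled (there every $O$ value is trivially comparable, or the statement is vacuous). Everything else follows by direct citation.
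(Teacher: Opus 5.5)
Your proposal is correct and follows the paper's route. The paper's proof cites the Duality theorem (Theorem~\ref{app:thm:duality}) for both Pareto-optimality claims and the Principle of Aligned Dominance (Lemma~\ref{lem:aligned_dominance}) for the scalarization claim, with Bellman Consistency identifying $w_{\text{eff}}$ as the direction of $V^\pi_n(s)$. Your strict-inequality version of the aligned-dominance computation, together with the non-negativity argument off the support and the exclusion of $V^\pi_n(s)=\mathbf{0}$, fills in details the paper's one-line sketch leaves implicit, and it also avoids needing a Chebyshev maximizer to exist, which the lemma as stated presupposes.
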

\begin{proof}[Proof sketch]
    Due to Theorem~\ref{app:thm:duality} and Principle of Aligned Dominance Lemma~\ref{lem:aligned_dominance}.
\end{proof}

Next we give the exact algorithm for greedy policy based on the $n^{th}$-Bellman estimate.

\begin{algorithm}[tb]
  \caption{Pareto-Optimal Policy Execution}
  \label{alg:pareto_execution}
  \begin{algorithmic}[1]
    \renewcommand{\baselinestretch}{1.2}
    \selectfont

    \REQUIRE 
    \begin{minipage}[t]{0.85\linewidth}
        { \renewcommand{\baselinestretch}{1.0} \selectfont
        \begin{itemize}
            \setlength{\itemsep}{0pt}
            \setlength{\parskip}{0pt}
            \setlength{\leftmargin}{1em}
            \item Value Function $\mathcal{T}^n Q$
            \item Transition kernel $p(\cdot \mid s, a)$ and Reward function $R(s, a)$
            \item Initial state $s_0$ and Initial Preference $w_{\text{init}}$
        \end{itemize}
        }
    \end{minipage}
    
    \vspace{0.2cm}
    \STATE \textbf{Initialization:}
    \STATE Identify the set of optimal candidates at $s_0$:
    \STATE \quad $S_{\text{opt}} = \operatorname*{argmax}_{a', w'} O_{w_{\text{init}}}\big( \mathcal{T}^n Q(s_0, a', w') \big)$
    \STATE Select candidate with maximum norm:
    \STATE \quad $(\hat{a}_0, \hat{w}_1) = \operatorname*{argmax}_{(a', w') \in S_{\text{opt}}} \big\| \mathcal{T}^n Q(s_0, a', w') \big\|_2$
    
    \STATE Initialize current action $a_0 \leftarrow \hat{a}_0$ and target preference $w_{\text{target}} \leftarrow \hat{w}_1$.
    \STATE Set time step $t \leftarrow 0$.

    \vspace{0.2cm}
    \WHILE{not termination}
        \vspace{0.1cm}
        \STATE Execute $a_t$, observe next state $s_{t+1}$ and reward $r_t = R(s_t, a_t)$.

        \STATE \textit{// Step 1: Preference Alignment}
        \STATE Calculate preference vector aligned with expected tail rewards:
        \STATE \quad $\mathbf{v}_{\text{tail}} = \mathcal{T}^n Q(s_t, a_t, w_{\text{target}}) - R(s_t, a_t) - \gamma^n \mathcal{R}$
        \STATE \quad $\hat{w}_{t+1} = \frac{\mathbf{v}_{\text{tail}}}{\| \mathbf{v}_{\text{tail}} \|_2}$

        \vspace{0.1cm}
        \STATE \textit{// Step 2: Chebyshev Optimization (Lookahead)}
        \STATE Compute policy maps maximizing scalarization w.r.t.\ $\hat{w}_{t+1}$:
        \STATE \quad $(\hat{A}, \hat{W}) = \operatorname*{argmax}_{\substack{\phi_A: S \to A \\ \phi_W: S \to \mathcal{W}}} O_{\hat{w}_{t+1}} \left( \mathbb{E}_{s' \sim p(\cdot \mid s_t, a_t)} \big[ \mathcal{T}^n Q(s', \phi_A(s'), \phi_W(s')) \big] -\gamma^n\mathcal{R} \right)$
        
        \vspace{0.1cm}
        \STATE \textit{// Step 3: Selection and Update}
        \STATE Determine action and preference for the observed state $s_{t+1}$:
        \STATE \quad $a_{t+1} = \hat{A}(s_{t+1})$
        \STATE \quad $w_{\text{next}} = \hat{W}(s_{t+1})$
        
        \STATE Update history trackers:
        \STATE \quad $w_{\text{target}} \leftarrow w_{\text{next}}$
        \STATE \quad $t \leftarrow t + 1$
    \ENDWHILE
  \end{algorithmic}
\end{algorithm}

\noindent \textbf{Description:} The execution procedure is detailed in Algorithm~\ref{alg:pareto_execution}. The process begins (Lines~2--5) by identifying the optimal action and next-preference pair $(\hat{a}_0, \hat{w}_1)$ for the initial state $s_0$. 

The core of the execution loop relies on dynamic preference alignment. By Lemma~\ref{app:lemm:bellman_consistency} the current estimate $\mathcal{T}^n Q(s_t, a_t, w_{\text{target}})$ corresponds to the value of an $n$-step policy ($\pi^{s,a,w}_n$), specifically $\mathcal{T}^n Q - R - \gamma^n \mathcal{R}$. In Line~12 and 13, we explicitly calculate the preference direction $\hat{w}_{t+1}$ that aligns with this future value component.

In Line~15, the algorithm performs a search to find the action and preference maps $(\hat{A}, \hat{W})$ that greedily maximize the scalarized expected return with respect to the aligned preference $\hat{w}_{t+1}$. 
By the \textit{Principle of Aligned Dominance} (Lemma~\ref{lem:aligned_dominance}), maximizing this specific scalarization guarantees that the resulting policy choice dominates the future component of the previous estimate. 
Finally, in Lines~19--20, the agent instantiates this policy for the actually observed state $s_{t+1}$ to determine the next action $a_{t+1}$ and target preference $w_{\text{next}}$.

It is important to note that while the recursive calculation of the Bellman operator typically utilizes the $(n-1)$-th estimate, our execution policy consistently utilizes the full $n$-th estimate $\mathcal{T}^n Q$ for the lookahead optimization at every step, ensuring the highest available lookahead in action selection.

\begin{theorem}[Recursive Trajectory Error Bound]
\label{app:thm:recursive_trajectory_error}
Let $\{(s_t, a_t, w_{t+1})\}_{t \ge 0}$ be the trajectory generated by the execution policy, where $\hat{w}_{t+1}$ is the preference target aligned for the next step. Let $V^\pi(s)$ denote the true value of the executed policy starting from state $s$. 

The following recursive inequality holds:
\[
\mathcal{T}^n Q(s_t, a_t, w_{t+1}) - V^\pi(s_t) \preceq \gamma \mathbb{E}_{s_{t+1} \sim p(\cdot|s_t, a_t)} \left[ \mathcal{T}^n Q(s_{t+1}, a_{t+1}, w_{t+2}) - V^\pi(s_{t+1}) \right] + \gamma^n(1-\gamma)\R
\]
\end{theorem}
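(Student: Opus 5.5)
The plan is to compare the two sides one step at a time. The only slack we pay comes from replacing $\mathcal{T}^{n-1}Q$ by $\mathcal{T}^{n}Q$ in the lookahead of Algorithm~\ref{alg:pareto_execution}.

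\emph{Step 1: expand both sides one step.} Since $\pi$ executes $a_t$ at $s_t$, the true value satisfies $V^\pi(s_t) = R(s_t,a_t) + \gamma\,\mathbb{E}_{s_{t+1}}[V^\pi(s_{t+1})]$. Here $V^\pi(s_{t+1})$ is understood as the continuation value from $s_{t+1}$, which takes $a_{t+1}$ there. The target inequality therefore reduces to
\[
\mathcal{T}^n Q(s_t,a_t,w_{t+1}) - R(s_t,a_t) \preceq \gamma\,\mathbb{E}_{s_{t+1}}\big[\mathcal{T}^n Q(s_{t+1},a_{t+1},w_{t+2})\big] + \gamma^n(1-\gamma)\mathcal{R}.
\]
The $V^\pi$ terms cancel exactly, so the statement is purely about the estimates.

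\emph{Step 2: identify the tail of the current estimate.} Write $U := \mathcal{T}^n Q(s_t,a_t,w_{t+1}) - R(s_t,a_t)$. By Lemma~\ref{app:lemm:bellman_consistency} and the definition of $\mathcal{T}$, there are selectors $\hat A_{n-1},\hat W_{n-1}$ with
\[
U = \gamma\,\mathbb{E}_{s'}\big[\mathcal{T}^{n-1}Q(s',\hat A_{n-1},\hat W_{n-1})\big].
\]
I would then use a monotonicity fact: $\mathcal{T}^{n-1}Q(s',a',w') \preceq \mathcal{T}^{n}Q(s',a'',w'') + \gamma^{n-1}(1-\gamma)\mathcal{R}$ for a suitable $(a'',w'')$. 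The route is the unrolling in Lemma~\ref{app:lemm:bellman_consistency}: $\mathcal{T}^{n-1}Q$ equals an $(n-1)$-step return plus $\gamma^{n-1}\mathcal{R}$. One extra step, extending that same $(n-1)$-step policy, yields an $n$-step candidate with value at least the $(n-1)$-step return plus $\gamma^n\mathcal{R}$, because rewards are nonnegative. By the upper-bound/aligned-dominance argument of Theorem~\ref{app:thm:bellman_env_conv}(1), this candidate is dominated by some $\mathcal{T}^n Q(s',a'',w'')$. The residual difference is $\gamma^{n-1}\mathcal{R}-\gamma^{n}\mathcal{R}=\gamma^{n-1}(1-\gamma)\mathcal{R}$, and after multiplying by $\gamma$ it becomes exactly $\gamma^n(1-\gamma)\mathcal{R}$.

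\emph{Step 3: aligned dominance for the executed choice.} Algorithm~\ref{alg:pareto_execution} sets $\hat w_{t+1}$ proportional to $U - \gamma^n\mathcal{R}$, possibly up to the offset convention in Line~12. It then chooses $(\hat A,\hat W)$ to maximize $O_{\hat w_{t+1}}$ of $\mathbb{E}[\mathcal{T}^nQ(s',\cdot,\cdot)] - \gamma^n\mathcal{R}$. Step 2 shows that some feasible pair of maps achieves a vector weakly dominating the aligned target, after adjusting the residual. Lemma~\ref{lem:aligned_dominance}, applied with the preference aligned to that target, then gives that the maximizer $\mathbb{E}[\mathcal{T}^nQ(s_{t+1},a_{t+1},w_{t+2})]$ (minus the offset) weakly dominates it. Multiplying by $\gamma$ and adding back the residual gives the reduced inequality from Step~1.

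\emph{Main obstacle.} The delicate part is Step 3. Lemma~\ref{lem:aligned_dominance} concludes weak dominance of the aligned point from the mere existence of a feasible point with equal or higher scalarization. However, the feasible set is the set of expectations over maps $\phi_A,\phi_W$, and the target $U$ must itself lie in, or be dominated by, that set. Getting the offsets ($\gamma^n\mathcal{R}$ versus $\gamma^{n-1}\mathcal{R}$) consistent between Line~12, Line~15, and Step~2 is where I expect the bookkeeping to bite. I would first fix the normalization so that the aligned vector is exactly the Step~2 candidate shifted by $\gamma^n\mathcal{R}$, and then verify the residual constant.
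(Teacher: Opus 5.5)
Your proposal is correct and is essentially the paper's argument: expand one step so that $R(s_t,a_t)$ cancels, use the Principle of Aligned Dominance to show that the fresh $n$-step lookahead dominates the stale $(n-1)$-step tail, and collect the residual mismatch $\gamma(\gamma^{n-1}\R-\gamma^{n}\R)=\gamma^n(1-\gamma)\R$. Your Step~2 (extend the induced policy by one step, then apply the upper-bound induction of Theorem~\ref{app:thm:bellman_env_conv}(1), rerun for $n$-step truncated returns instead of $Q^*$) supplies the feasibility hypothesis that the paper's one-line appeal to aligned dominance leaves implicit, and the offset bookkeeping you worry about does close, since $U-\gamma^n\R=\gamma\,\E[V^{\text{tail}}_{n-1}]$ and positive homogeneity of $O_w$ absorbs the factor $\gamma$; this gives the dominance only in expectation over $s_{t+1}$, which is all the statement needs (the paper's pointwise version claims more than aligned dominance actually delivers).
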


\begin{proof}
Let $\Delta_t = \mathcal{T}^n Q(s_t, a_t, w_{t+1}) - V^\pi(s_t)$ denote the vector error at time step $t$. We expand both terms using their respective Bellman definitions.

First, we expand the true policy value $V^\pi(s_t)$ using the standard Bellman equation:
\begin{equation}
    V^\pi(s_t) = R(s_t, a_t) + \gamma \mathbb{E}_{s_{t+1} \sim p(\cdot|s_t, a_t)}\left[ V^{\pi(\cdot\mid s_t,a_t)}(s_{t+1}) \right].
    \label{eq:true_val_expansion}
\end{equation}

Next, we expand the $n$-step estimate $\mathcal{T}^n Q(s_t, a_t, w_{t+1})$. From Lemma~\ref{app:lemm:bellman_consistency} we obtain that this estimate is composed of the immediate reward plus the discounted value of the \textit{tail policy} plus a residual value:
\begin{equation}
    \mathcal{T}^n Q(s_t, a_t, w_{t+1}) = R(s_t, a_t) + \gamma \mathbb{E}_{s_{t+1} \sim p(\cdot|s_t, a_t)}\left[ V^{\text{tail}}_{n-1}(s_{t+1}) + \gamma^{n-1}\mathcal{R} \right].
    \label{eq:est_val_expansion}
\end{equation}

Subtracting \eqref{eq:true_val_expansion} from \eqref{eq:est_val_expansion}, the immediate reward $R(s_t, a_t)$ cancels out:
\[
\Delta_t = \gamma \mathbb{E}_{s_{t+1}} \left[ \left( V^{\text{tail}}_{n-1}(s_{t+1}) + \gamma^{n-1}\mathcal{R} \right) - V^{\pi(\cdot\mid s_t,a_t)}(s_{t+1}) \right].
\]

At step $t+1$, the agent re-computes the estimate $\mathcal{T}^n Q(s_{t+1}, a_{t+1}, w_{t+2})$ using a full $n$-step horizon and an updated preference $\hat{w}_{t+1}$ explicitly aligned with $\E_{s_{t+1}}V^{\text{tail}}_n(s_{t+1})$. This fresh estimate is guaranteed to weakly dominate the residual value of the previous plan (the stale tail) by the Principle of Aligned Dominance (Lemma~\ref{lem:aligned_dominance}):
\[
V^{\text{tail}}_{n-1}(s_{t+1})  \;\preceq\; \mathcal{T}^n Q(s_{t+1}, a_{t+1}, w_{t+2})-\gamma^n\R
\]

Substituting this inequality back into the expression for $\Delta_t$, we obtain:
\begin{align*}
\Delta_t &\preceq \gamma \mathbb{E}_{s_{t+1}} \left[ \mathcal{T}^n Q(s_{t+1}, a_{t+1}, w_{t+1}) - V^{\pi(\cdot\mid s_t,a_t)}(s_{t+1}) \right] + \gamma^n(1-\gamma)\R\\
&\preceq \gamma \mathbb{E}_{s_{t+1}} \left[ \Delta_{t+1} \right] + \gamma^n(1-\gamma)\R
\end{align*}

Thus, the error at time $t$ is bounded by the discounted expected error at time $t+1$.
\end{proof}

\begin{theorem}[Approximate Pareto Coverage]\label{app:thm:app_pareto_coverage}
    Let $\mathcal{C}^n_{PO} = \{ \pi_w \mid w \in \mathcal{W} \}$ denote the set of policies generated by all preferences using the $n$-th Bellman update by Algorithm~\ref{alg:high_level_pareto_policy}. This set satisfies the properties of approximate Pareto coverage. 
    

    \begin{enumerate}[label=\thetheorem.\arabic*]
        \item \textbf{Approximate Coverage:} 
        Every Pareto-optimal policy is approximated by some policy in $\mathcal{C}^n_{PO}$ such that the \textbf{suboptimality gap} is bounded. Formally:
        \[
        \begin{aligned}
            &\forall \pi \in \Pi, \exists w \in \mathcal{W} \text{ such that } \\
            & V^{\pi}(s_0) \preceq V^{\pi_w}(s_0) + \gamma^n \mathcal{R}
        \end{aligned}
        \]

        \item \textbf{Approximate Parsimony:} 
        Every policy in $\mathcal{C}^n_{PO}$ satisfies \textbf{$\bm\gamma^{\bm n}\bm\R$-Pareto optimality}; that is, no other policy can strictly dominate it by a margin larger than the approximation bound. Formally:
        \[
        \begin{aligned}
            &\forall w \in \mathcal{W}, \forall \pi \in \Pi \quad \text{ we get} \\
            &V^{\pi}(s_0) \nsucc V^{\pi_w}(s_0) + \gamma^n \mathcal{R}
        \end{aligned}
        \]
    
    \end{enumerate}

\end{theorem}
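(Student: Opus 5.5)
The argument has three parts: an execution bound for the greedy policy, obtained by unrolling the Recursive Trajectory Error Bound; coverage, obtained by chaining that bound with the Upper-bound property; and parsimony, obtained from the Pareto Envelope property.

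\textbf{Step 1: execution bound.} For a fixed preference $w$, let $(a_0,w_1)$ be the pair selected at initialization of Algorithm~\ref{alg:high_level_pareto_policy}. Write $\Delta_t := \mathcal{T}^n Q(s_t,a_t,w_{t+1}) - V^{\pi_w}(s_t)$. Theorem~\ref{app:thm:recursive_trajectory_error} gives
\[
\Delta_t \preceq \gamma\,\mathbb{E}[\Delta_{t+1}] + \gamma^n(1-\gamma)\mathcal{R}.
\]
Iterating this $T$ times gives
\[
\Delta_0 \preceq \gamma^T \mathbb{E}[\Delta_T] + \sum_{k<T}\gamma^k\,\gamma^n(1-\gamma)\mathcal{R}.
\]
Every $\Delta_T$ is bounded componentwise by $\mathcal{R}$, because both terms lie in $[0,\mathcal{R}]$. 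Letting $T\to\infty$ therefore yields
\[
\mathcal{T}^n Q(s_0,a_0,w_1) \preceq V^{\pi_w}(s_0) + \gamma^n\mathcal{R}.
\]
This is the key execution inequality.

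\textbf{Step 2: approximate coverage.} Take any $\pi\in\Pi$. First replace it by a Pareto-optimal $\pi^*$ that weakly dominates it, which exists by compactness of $\mathcal{V}$. Let $a=\pi^*(s_0)$. The value $V^{\pi^*}(s_0)$ is a $Q^*(s_0,a,\tilde w)$ for the preference $\tilde w$ aligned with it; this follows from Theorem~\ref{thm:cheb_suff} and Theorem~\ref{thm:rec_decomp}. The Upper-bound property, Theorem~\ref{prop:opt_coverage}, then gives some $w'$ with
\[
V^{\pi^*}(s_0) \preceq \mathcal{T}^nQ(s_0,a,w').
\]

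We next choose the initial preference $w := \mathcal{T}^nQ(s_0,a,w')/\|\cdot\|_2$. By Lemma~\ref{lem:aligned_dominance}, the pair selected at initialization has $\mathcal{T}^nQ(s_0,a_0,w_1)\succeq \mathcal{T}^nQ(s_0,a,w')$. Chaining this with Step 1 gives
\[
V^\pi(s_0)\preceq V^{\pi^*}(s_0)\preceq V^{\pi_w}(s_0)+\gamma^n\mathcal{R}.
\]

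\textbf{Step 3: approximate parsimony.} Suppose, for contradiction, that $V^\pi(s_0)\succ V^{\pi_w}(s_0)+\gamma^n\mathcal{R}$ for some $\pi$. Using the lower half of the Asymptotic Convergence bound, Theorem~\ref{prop:asymp_real}, together with Lemma~\ref{app:lemm:bellman_consistency}, the estimate $\mathcal{T}^nQ(s_0,a_0,w_1)$ should be at most $V^{\pi_w}(s_0)+\gamma^n\mathcal{R}$. Combined with Step 1, this makes the estimate within $\gamma^n\mathcal{R}$ of the executed value in both directions. The assumed domination then forces a policy value to exceed the estimate in every component, which contradicts the Pareto Envelope property, Theorem~\ref{prop:pareto_envelope}, after passing to a dominating Pareto-optimal $Q^*$.

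\textbf{Main obstacle.} The hard step is Step 3. Step 1 only bounds the estimate from above by the executed value plus a margin. Parsimony needs the reverse comparison, namely that $V^{\pi_w}(s_0)$ is not far below $\mathcal{T}^nQ$. It also needs to convert the hypothesis ``$\succ$'', which gives strictness in only one coordinate, into the all-coordinates strictness that the Envelope property uses. My first attempt would be to use the slack between $\gamma^n\mathcal{R}$ and the tighter residual $\gamma^n(1-\gamma)\mathcal{R}$ per step to supply the strictness. Failing that, I would read ``$\nsucc$'' as the weaker statement that $V^\pi(s_0)$ cannot exceed $V^{\pi_w}(s_0)+\gamma^n\mathcal{R}$ in all components, which the Envelope argument handles directly.
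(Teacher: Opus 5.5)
Your Steps 1 and 2 follow the paper's route. You unroll Theorem~\ref{app:thm:recursive_trajectory_error} to get $\mathcal{T}^nQ(s_0,a_0,w_1)\preceq V^{\pi_w}(s_0)+\gamma^n\mathcal{R}$, and your limit argument is a cleaner version of the paper's ``take a max'' step. You then chain this with the Upper-bound property. Both steps are more careful than the paper's proof on two points:
\begin{itemize}
\item the paper only treats Pareto-optimal $\pi$, while you pass from an arbitrary $\pi$ to a Pareto-optimal dominator;
\item the paper silently identifies the argument $w'$ of the estimate with the algorithm's input preference, while you align $w$ with $\mathcal{T}^nQ(s_0,a,w')$ so that Lemma~\ref{lem:aligned_dominance} controls what the initialization actually selects.
\end{itemize}
One correction: the $\tilde w$ with $Q^*(s_0,a,\tilde w)=V^{\pi^*}(s_0)$ must be aligned with the continuation value $V^{\pi^*(\cdot\mid s_0,a)}(p(\cdot\mid s_0,a))$, not with $V^{\pi^*}(s_0)$. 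That continuation value is Pareto-optimal among continuations by stitching.

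Step 3 has a genuine gap.

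Your first worry is misplaced. No reverse comparison is needed: if $V^\pi(s_0)\succ V^{\pi_w}(s_0)+\gamma^n\mathcal{R}$, Step 1 alone gives $V^\pi(s_0)\succ\mathcal{T}^nQ(s_0,a_0,w_1)$, weakly everywhere and strictly in some coordinate. Also, Theorem~\ref{prop:asymp_real} concerns the induced backtracking policy, not the executed $\pi_w$, so it contributes nothing here.

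Your second worry is the real one, and your final sentence does not survive it. Nothing forces a policy value above the estimate in every coordinate. The Envelope property (Theorem~\ref{prop:pareto_envelope}) is too weak in two ways:
\begin{itemize}
\item it excludes only all-coordinate domination;
\item it excludes it only by $Q^*(s_0,a_0,\cdot)$ with the same first action $a_0$, whereas the dominating policy may start with a different action.
\end{itemize}
Neither of your remedies works. There is no slack, since $\sum_{k\ge 0}\gamma^k\gamma^n(1-\gamma)\mathcal{R}=\gamma^n\mathcal{R}$ exactly. Reading $\nsucc$ as ``not larger in every coordinate'' proves a weaker statement than the theorem, whose $\succ$ is the paper's strict dominance. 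The paper's own one-line argument, ``no policy dominates an estimate,'' asserts more than the Envelope property gives, so it shares this hole.

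The missing idea is to use the initialization rather than the Envelope property. In Algorithm~\ref{alg:pareto_execution}, $(a_0,w_1)$ maximizes $O_w$ over all pairs $(a',w')$ and then breaks ties by maximal $\ell_2$-norm. Apply your Step 2 chain to the dominating $\pi$: it yields $(a,w')$ with $\mathcal{T}^nQ(s_0,a,w')\succeq V^\pi(s_0)\succ\mathcal{T}^nQ(s_0,a_0,w_1)$. Monotonicity of $O_w$ makes $(a,w')$ a first-stage maximizer. Strict dominance between nonnegative vectors gives it a strictly larger norm, which contradicts the tie-break, exactly as in the proof of $\mathcal{M}\subseteq\mathcal{V}^{PO}$ in Theorem~\ref{app:thm:pareto_coverage_characterization}. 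This handles single-coordinate strictness and a different first action at once.
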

\begin{proof}
    Using Theorem~\ref{app:thm:recursive_trajectory_error}, we obtain $\Delta_t\preceq \gamma \E_{s_{t+1}}[\Delta_{t+1}] + \gamma^n(1-\gamma)\R$, therefore taking a max for each component allows us to prove $\Delta_{max}\preceq \gamma^n\R$.
    Therefore starting with any $n^{th}$ estimate $\T^nQ(s,a,w)$ the procedure obtains a value $V^\pi(s)$, such that $\T^nQ(s,a,w)\preceq V^\pi(s)+\gamma^n\R$.
    \begin{enumerate}
        \item \textbf{Approximate Coverage:} Since we prove in Theorem~\ref{prop:opt_coverage} that every Pareto-optimal policy $\pi^*$ is dominated by an estimate $V^{\pi^*}(s)\preceq \T^nQ(s,a,w')\preceq V^{\pi_{w'}}(s)+\gamma^n\R$.
        \item \textbf{Approximate Parsimony:} Since there is no policy that dominates an estimate $\T^nQ(s,a,w)$ we obtain that there is no policy that dominates $V^{\pi_w}(s_0)+\gamma^n\R$.
    \end{enumerate}
\end{proof}



\end{document}